\newcommand{\R}{{\mathbb R}}
\newcommand{\E}{{\mathbb E}}
\newcommand{\F}{{\mathcal{F}}}
\newcommand{\Hc}{{\mathcal{H}}}
\newcommand{\Lc}{{\mathcal{L}}}
\newcommand{\Gc}{{\mathcal{G}}}
\def\V{\mathcal{V}}
\def\U{\mathcal{U}}
\newcommand{\Ub}{{\bf{U}}}
\newcommand{\ub}{{\bf{u}}}
\newcommand{\Vb}{{\bf{V}}}
\newcommand{\vb}{{\bf{v}}}
\newcommand{\cN}{{\mathcal{N}}}
\newcommand{\Tr}{\operatorname{Tr}}
\newcommand{\Cov}{\operatorname{Cov}}
\def\<{\big\langle}
\def\>{\big\rangle}
\newcommand{\norm}[1]{\vert\vert#1\vert\vert}
\DeclarePairedDelimiterX{\ip}[2]{\langle}{\rangle}{#1, #2}
\DeclareMathOperator*{\argmin}{arg\,min}
\definecolor{darkred}{rgb}{.7,0,0}
\definecolor{darkgreen}{rgb}{.1,.5,0}
\newcommand{\md}[1]{{\color{orange}{#1}}}
\setlist[enumerate]{leftmargin=.5in}
\setlist[itemize]{leftmargin=.5in}
\crefname{hypothesis}{Hypothesis}{Hypotheses}
\newtheorem{Theorem}{Theorem}[section]
\newtheorem{Proposition}[Theorem]{Proposition}
\newtheorem{Lemma}[Theorem]{Lemma}
\newtheorem{Definition}[Theorem]{Definition}
\newtheorem{Condition}[Theorem]{Condition}
\newtheorem{Problem}{Problem}
\title{Kernel Methods are Competitive for Operator Learning}
\author{Pau Batlle\thanks{Computing and Mathematical Sciences, Caltech, Pasadena, CA
    (\email{pbatllef@caltech.edu}, \email{mdarcy@caltech.edu},
    \email{owhadi@caltech.edu})}
  \and Matthieu Darcy\footnotemark[1] \footnote{Corresponding author.}
  \and Bamdad Hosseini\thanks{Department of Applied Mathematics, University of Washington, Seattle, WA
  (\email{bamdadh@uw.edu})}
  \and Houman Owhadi\footnotemark[1]
}
\begin{document}

\maketitle

\begin{abstract}
    We present a general kernel-based framework for learning operators between Banach spaces 
    along with a priori error analysis and comprehensive numerical 
    comparisons with popular neural net (NN)  approaches such as Deep Operator Networks (DeepONet) \cite{Lu2021} and Fourier Neural Operator (FNO) \cite{fno}. 
    We consider the setting where the input/output spaces of target operator $\mathcal{G}^\dagger\,:\, \mathcal{U}\to \mathcal{V}$ are reproducing kernel Hilbert spaces (RKHS),  the data comes in the form of partial  observations $\phi(u_i), \varphi(v_i)$ of input/output functions $v_i=\mathcal{G}^\dagger(u_i)$ ($i=1,\ldots,N$), and the measurement operators $\phi\,:\, \mathcal{U}\to \mathbb{R}^n$ and $\varphi\,:\, \mathcal{V} \to \R^m$ are linear.  Writing $\psi\,:\, \mathbb{R}^n \to \mathcal{U}$ and $\chi\,:\, \mathbb{R}^m \to \mathcal{V}$ for the optimal recovery maps associated with $\phi$ and $\varphi$,  we approximate $\mathcal{G}^\dagger$ with $\bar{\mathcal{G}}=\chi \circ \bar{f} \circ \phi$ where $\bar{f}$ is an optimal recovery approximation of  $f^\dagger:=\varphi \circ \mathcal{G}^\dagger \circ \psi\,:\,\mathbb{R}^n \to \mathbb{R}^m$.  
    We show that, even when using vanilla kernels (e.g., linear or Mat\'{e}rn), our approach is competitive in terms of cost-accuracy trade-off and either matches or beats the performance of 
    NN methods on a majority of benchmarks.  Additionally,  our framework offers several advantages inherited from kernel methods: simplicity, interpretability, convergence guarantees, a priori error estimates, and Bayesian uncertainty quantification. As such, it can serve as a natural benchmark for operator learning.
    \newline
\end{abstract}

\section{Introduction}\label{section: introduction}
Operator learning is a well-established field going back at least to the 1970s 
with the articles \cite{almroth1978automatic, noor1980reduced} who introduced the 
reduced basis method as a way speeding up expensive model evaluations. In the most 
broad sense operator learning arises in the solution of stochastic PDEs \cite{ghanem2003stochastic}, 
emulation of computer codes \cite{kennedy2001bayesian}, reduced order modeling (ROM) \cite{lucia2004reduced}, and numerical homogenization \cite{owhadi_scovel_2019}. In recent years, 
and with the rise of machine learning, operator learning has become the focus of extensive 
research with the development of neural net (NN) methods such as Deep Operator Nets \cite{Lu2021}
and Fourier Neural Nets \cite{fno} among many others. While these NN methods are 
often benchmarked against each other \cite{DeepONet-FNO}, they are rarely compared with 
the aforementioned classical approaches. 
Furthermore, the theoretical analysis of NN methods 
is often limited to density/universal approximation results; 
showing the existence of a network of a requisite size achieving a certain error rate, 
without guarantees whether this network is computable in practice (see for example \cite{deng2022approximation, kovachki2021universal}).

In order to alleviate the aforementioned shortcomings we present 
a mathematical framework for approximation of 
mappings between Banach spaces using the theory of operator valued
reproducing Kernel Hilbert spaces (RKHS) and Gaussian Processes (GPs). Our 
abstract framework is: (1) mathematically simple and interpretable, (2) convenient to implement,
(3)  encompasses some of the classical approaches such as linear methods; and (4) 
comes with a priori error analysis and convergence theory. 
We further present  extensive benchmarking of our kernel method  with 
the DeepONet and FNO approaches and show that the kernel approach either matches or 
outperforms NN methods in most benchmark examples. 

In the remainder of this section we give a summary of our 
methodology and results: 
We pose the operator learning problem 
in \Cref{subsec:OPL-setup} before presenting a running example in 
\Cref{subsec:running-example} which is used to outline our 
proposed framework and main theoretical results in \Cref{subsec:proposed-solution,subsec:convergence-theory} as well as brief numerical results in \Cref{subsec:numerical-demonstration}. Our main contributions are summarized in 
\Cref{subsec:summary-of-contributions} followed by a literature review in 
\Cref{subsec:literature-review}.

\subsection{The operator learning problem}\label{subsec:OPL-setup}
Let $\U$ and $\V$ be two (possibly infinite-dimensional) separable Banach spaces and suppose that
\begin{equation}
\Gc^\dagger :\mathcal{U} \to \mathcal{V}\,
\end{equation}
is an arbitrary (possibly nonlinear) operator. Then, broadly speaking, the goal of operator learning is to approximate $\Gc^\dagger$
from a finite number $N$ of input/output data on $\Gc^\dagger$.
For our framework,  we consider the setting where the input/output data are only partially observed through a finite collection of linear measurements which we formalize as follows:

\begin{Problem}\label{sec: problem}
Let $\{u_i, v_i\}_{i = 1}^N$ be $N$ elements of $\U\times \V$ such that
\begin{equation}
        \Gc^\dagger(u_i) = v_i, \quad \text{for } i =1, \dots, N\,.
\end{equation}
 Let $\phi: \mathcal{U} \to \R^{n}$ and $\varphi: \mathcal{V} \to \R^{m}$
be bounded linear operators. 
 Given the data $\{\phi(u_i), \varphi(v_i)\}_{i=1}^N$ approximate $\Gc^\dagger$.
\end{Problem}

\subsection{Running example}\label{subsec:running-example}
To give context to the above problem and our solution method we briefly outline 
a running example to which the reader can refer to throughout the rest of this 
section. Consider the following elliptic PDE, which is of broad interest in
geosciences and material science:
\begin{equation}\label{eq:Darcy-1D}
  \left\{
  \begin{aligned}
    -\text{div} \: e^{u} \: \nabla v & = w,  && \text{in} \quad \Omega, \\
    v &=0, && \text{on} \quad \partial \Omega\,. 
\end{aligned}
\right.
\end{equation}
where  $\Omega=(0,1)^2$, $u \in H^3(\Omega)$, $w \in H^1(\Omega)$  and $v \in H^3(\Omega)\cap H^1_0(\Omega)$. For a fixed forcing term $w$, we
wish to approximate the nonlinear operator mapping the diffusion coefficient $u$ to the solution $v$, i.e., $\Gc^\dagger: u \mapsto v$. 
In this case we may take $\U \equiv H^3(\Omega)$  and $\V \equiv H^3(\Omega)\cap H^1_0(\Omega)$.
We further assume that a training data set is available in the form of limited observations of input-out pairs. As a canonical example,
 consider the evaluation bounded and linear operators
 \begin{equation}\label{eq:pointwise-eval-functionals}
   \begin{aligned}
     \phi: u \mapsto \left( u(X_1), u(X_2), \dots, u(X_{n}) \right)^T \quad \text{and} \quad
     \varphi: v \mapsto \left( v(Y_1), v(Y_2), \dots, v(Y_{m}) \right)^T,
 \end{aligned}
 \end{equation}
 where the $\{ X_j \}_{j=1}^{n}$ and $\{ Y_j \}_{j=1}^{m}$ are distinct
 collocation points in the domain $\Omega$ as well as pairs $\{u_i, v_i\}_{i=1}^N$
 that satisfy the PDE \eqref{eq:Darcy-1D}.
 Then our goal is to approximate
 $\Gc^\dagger$ from the training data set   $\{ \phi(u_i), \varphi(v_i) \}_{i=1}^N$
 \footnote{Choosing $\phi, \varphi$ as pointwise evaluation functionals is common to many applications, although our abstract framework readily accommodates other
   choices such as integral operators and basis projections}.
   
 \subsection{The proposed solution}\label{subsec:proposed-solution}
   Our setup
 naturally gives rise to a  commutative diagram depicted in \Cref{fig:commutative-diagram}.
 Here the map $f^\dagger: \R^n \to \R^m$ explicitely defined as 
 \begin{equation}\label{eqdeffdagger}
f^\dagger:=\varphi \circ \Gc^\dagger\circ \psi
\end{equation}
 is a mapping between
 finite-dimensional Euclidean spaces, and is therefore amenable to numerical approximation. However,
 in order to approximate $\Gc^\dagger$ we also need the reconstruction maps
 $\psi: \R^{n} \to \U$ and $\chi: \R^{m} \to \V$.

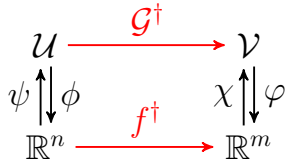
\begin{figure}[htp]
 \centerline{\scalebox{0.9}{
\begin{tikzpicture}[->,>=stealth',shorten >=1pt,auto,node distance=3cm,
                    thick,main node/.style={font=\sffamily\Large\bfseries}]

\node[main node] (1) {$\U$};
\node[main node] (2)  [right of=1] {$\V$};
\node[main node] (3) [below of=1,node distance=1.5cm] {$\R^{n}$};
\node[main node] (4) [below of=2,node distance=1.5cm] {$\R^{m}$};

\path[every node/.style={font=\sffamily\Large\bfseries},red]
    (1) edge node [above,red ] {$\Gc^\dagger$} (2);

\path[every node/.style={font=\sffamily\Large\bfseries},red]
    (3) edge node [above,red ] {$f^\dagger$} (4);


\begin{scope}[transform canvas={xshift=.2em}]
\tikzstyle{every to}=[draw]
\draw (1) to[style={font=\sffamily\Large\bfseries}] node[right] {$\phi$} (3);
\end{scope}

\begin{scope}[transform canvas={xshift=-.2em}]
\tikzstyle{every to}=[draw]
\draw (3) to[style={font=\sffamily\Large\bfseries}] node[left] {$\psi$} (1);
\end{scope}

\begin{scope}[transform canvas={xshift=.2em}]
\tikzstyle{every to}=[draw]
\draw (2) to[style={font=\sffamily\Large\bfseries}] node[right] {$\varphi$} (4);
\end{scope}

\begin{scope}[transform canvas={xshift=-.2em}]
\tikzstyle{every to}=[draw]
\draw (4) to[style={font=\sffamily\Large\bfseries}] node[left] {$\chi$} (2);
\end{scope}

\end{tikzpicture}}}

\caption{Commutative diagram of our operator learning setup.}
\label{fig:commutative-diagram}
\end{figure}

Our proposed solution is to endow $\U$ and $\V$ with an RKHS structure and use kernel/GP regression to identify the maps $\psi$ and $\chi$.
As a prototypical example we consider the situation where $\U$ is an RKHS of functions  $u\,:\,\Omega \to \R$ defined by a kernel $Q\,:\, \Omega \times \Omega \rightarrow \R$ and $\V$ is an RKHS of functions $u\,:\,D \to \R$ defined by a kernel $K\,:\, D \times D \rightarrow \R$.
For our
 running example, we have $D=\Omega$, and we can take $Q$ and $K$ to be Mat\'{e}rn like kernels, e.g., the Green's function of elliptic PDEs (possibly on $\Omega$ or restricted to $\Omega$) with appropriate regularity.  One can also choose $Q, K$ to be smoother kernels such that their RKHSs are embedded in $\U$ and $\V$.
 
We then define $\psi$ and $\chi$ as the following optimal recovery maps \footnote{
It is possible to define the optimal recovery maps $\psi, \chi$ 
in the setting where $\phi$ and $\psi$ are nonlinear, following the 
general framework of \cite{gp_pdes, owhadi2022computational, owhadi2023ideas}. However, in this setting 
the closed form formulae  \eqref{eq:psi-and-chi-representer-formulae}
no longer hold.
}
:
 \begin{equation}\label{eq:psi-chi-optimal-recovery}
   \begin{aligned}
     \psi(U) & := \argmin_{w \in \U} \| w\|_Q \quad \text{s.t.} \quad \phi(w) =U,\\
     \chi(V) & := \argmin_{w \in \V} \| w\|_K \quad \text{s.t.} \quad \varphi(w) = V,\\
 \end{aligned}
 \end{equation}
 where $\| \cdot \|_Q$ and $\| \cdot \|_K$ are the RKHS norms arising from their pertinent kernels.

 In the case where $\phi$ and $\varphi$ are pointwise evaluation maps ($\phi(u)=(u(X_1),\ldots,u(X_n))$ and $\varphi(v)=(v(Y_1),\ldots,v(Y_m))$ where the $X_i$ and $Y_j$ are pairwise distinct collocation points in $\Omega$ and $D$), our optimal recovery maps can be expressed in closed form using standard representer theorems for kernel interpolation \cite{representer-theorem}:
 \begin{equation}\label{eq:psi-and-chi-representer-formulae}
   \psi(U)(x) = Q(x, X) Q(X,X)^{-1} U, \qquad \chi(V)(y) = K(y, Y) K(Y,Y)^{-1} V,
 \end{equation}
 where $Q(X,X)$ and $K(Y,Y)$ are kernel matrices with entries $Q(X,X)_{ij} = Q(X_i, X_j)$ and
 $K(Y,Y)_{ij} = K(Y_i, Y_j)$ respectively, while  $Q(x, X)$ and $K(y, Y)$ denote
 row-vector fields with entries $Q(x, X)_i = Q(x, X_i)$ and $K(y, Y)_i = K(y, Y_i)$.

 We further propose to approximate $f^\dagger$ by optimal recovery  in a
 vector-valued RKHS. Let $\Gamma\,:\, \R^n \times \R^n \to \Lc(\R^m)$ be a matrix valued kernel \cite{alvarez2012kernels}; here $\Lc(\R^m)$ is the space of 
 $m\times m$ matrices)
 with RKHS $\mathcal{H}_\Gamma$ equipped with the norm $\| \cdot \|_\Gamma$ 
 \footnote{See \Cref{secOpvalker} for a review of operator-valued kernels or the reference \cite{kadri2016operator}.}
 and proceed to approximate $f^\dagger$ by the map $\bar{f}$ defined as
 \begin{equation*}
   \bar{f} := \argmin_{f \in \mathcal{H}_\Gamma} \| f \|_\Gamma \quad \text{s.t.} \quad f(\phi(u_i)) = \varphi(v_i) \quad \text{for} \quad  i =1, \dots, N.
 \end{equation*}
 A simple and practical choice for $\Gamma$
 is the diagonal kernel 
 \begin{equation}\label{eq:diagonal-kernel}
 \Gamma( U, U') = S( U, U') I    
 \end{equation}
 where $S: \R^n \times \R^n \to \R$ is an arbitrary scalar-valued kernel, such as RBF, Laplace, or
 Mat{\'e}rn, and $I$ is the  $m\times m$ identity matrix. More complicated choices, such as sums of kernels or replacing the identity matrix for a fixed positive definite matrix, implying correlations 
 between various input or output correlations, are also possible. However, these
 may lead to greater computational cost and we observe empirically that the simple choice of the identity matrix already provides good performance.
 Then we can approximate the components of $\bar{f}$ via the  independent optimal recovery problems
 \begin{equation}\label{eqakdbdhebd}
   \bar{f}_j := \argmin_{g \in \mathcal{H}_S} \| g \|_S  \quad \text{s.t.} \quad
   g(\phi(u_i)) = \varphi_j(v_i), \quad \text{for} \quad i=1, \dots, N
 \end{equation}
 for $j = 1, \dots, m$. Here we wrote $\varphi_j(v_i)$ for the entry $j$ of the vector $\varphi(v_i)$ and, 
 as our notation suggests, $\mathcal{H}_S$ is the
 RKHS of $S$ equipped with the norm $\| \cdot \|_S$. Since \eqref{eqakdbdhebd} is a standard optimal recovery problem, each $\bar{f}_j$ can be identified by the usual representer formula:
 \begin{equation}\label{eq:f-representer-formula}
   \bar{f}_j(U) = S( U, \Ub) S( \Ub, \Ub)^{-1} \Vb_{\cdot,j},
 \end{equation}
 where $\Ub := ( \phi(u_1), \dots, \phi(u_N) )$ and $\Vb_{\cdot,j}:= ( \varphi_j(v_1), \dots, \varphi_j(v_N))^T$
 and  $S(U, \Ub)$ is a block-vector and  $S(\Ub, \Ub)$ is a block-matrix
 defined in an analogous manner to those in \eqref{eq:psi-and-chi-representer-formulae}.
 By combining equations \eqref{eq:psi-and-chi-representer-formulae} and \eqref{eq:f-representer-formula}
 we obtain the operator 
\begin{equation}\label{eq:bar-G-def}
    \bar{\Gc}:= \chi \circ \bar{f} \circ \phi
\end{equation}
 as an approximation
 to $\Gc^\dagger$.
 We provide further details and generalize the proposed framework in Section~\ref{sec: proposed solution}
 to the setting where $\phi$ and $\varphi$ are obtained from arbitrary linear measurements 
 (e.g., integral operators as in tomography)
 and $\U$ and $\V$ may not be spaces of continuous functions. 

\subsection{Convergence guarantee}\label{subsec:convergence-theory}

Under suitable regularity assumptions on $\Gc^\dagger$, our method comes with worst-case convergence guarantees as the number of data points $N$ , i.e., input-output pairs and 
the number of collocations points $n$ and $m$ go to infinity. We present here a condensed version of this result
and defer the proof to \Cref{sec: convergence}. Below we write $B_R(\mathcal{H})$ 
for the  ball of radius $R > 0$ in a normed space $\Hc$.

\begin{Theorem}[Condensed version of Thm.~\ref{thm:main2}]
\label{thm:main-intro}
Suppose  it holds that:
\begin{enumerate}[label=(\ref{thm:main-intro}.\arabic*)]
\item { (\it Regularity of the domains $\Omega$ and $D$)} $\Omega$ and $D$ are compact sets of finite dimensions $d_\Omega$ and $d_D$ and with Lipschitz boundary. \label{assump:I1}
\item { \it Regularity of the kernels $Q$ and $K$.}
Assume that $\mathcal{H}_Q \subset H^s(\Omega)$  and $\mathcal{H}_K \subset H^t(D)$ for some $s > d_\Omega/2$ and some $t > d_D/2$ with inclusions indicating continuous embeddings.\label{assump:I2}
  \item {(Space filling property of collocation points)}The fill distance  between the collocation points $\{ X_i \}_{i=1}^n \subset \Omega$ and the $\{ Y_j\}_{j=1}^m \subset D$ goes to zero as $n\rightarrow \infty$ and $m\rightarrow \infty$. \label{assump:I3}
 \item { (\it Regularity of the operator $\Gc^\dagger$)}
The operator $\Gc^\dagger$ is continuous from $H^{s'}(\Omega)$ to $\Hc_K$
 for some $s'\in (0,s)$ as well as from $\U$ to $\V$ and all its  Fr\'{e}chet derivatives
are bounded on $B_R(\mathcal{H}_Q)$ for any $R > 0$. \label{assump:I4}
 \item {\it (Regularity of the kernels $S^n$)}
Assume that for any $n\geq 1$ and any compact subset $\Upsilon$ of $\R^n$,
     the RKHS  of $S^n$ restricted to  $\Upsilon$  is contained in $H^{r}(\Upsilon)$ for some $r > n/2$ and contains $H^{r'}(\Upsilon)$ for some $r'>0$ that may depend on $n$. \label{assump:I5}
 \item { \it (Resolution and space-filling property of the data)}  Assume that for $n$ sufficiently large, the data points $(u_i)_{i=1}^N \subset B_R(\mathcal{H}_Q)$ belong to the range of $\psi^n$ and are space filling in the sense that they become dense in  $\phi^n(B_R(\mathcal{H}_Q))$ as $N\rightarrow \infty$.
\label{assump:I6}
\end{enumerate}
Then, for  all $t'\in  (0,t)$,
\begin{equation}
  \lim_{n,m\rightarrow \infty} \lim_{N\rightarrow \infty} \sup_{u \in B_R(\mathcal{H}_Q)}\| \Gc^\dagger(u) - \chi^m \circ \bar{f}^{m,n}_N \circ \phi^n(u) \|_{H^{t'}(D)} \to 0\,,
\end{equation}
where our notation makes the dependence of $\psi, \phi, \chi, S$ and $\bar{f}$ on $n,m$ and $N$ explicit.
\end{Theorem}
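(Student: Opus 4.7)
The plan is to reduce the statement to three independent convergence claims via a triangle inequality. Setting $\tilde u:=\psi^n\circ\phi^n(u)$, the defining interpolation property in \eqref{eq:psi-chi-optimal-recovery} gives $\phi^n(\tilde u)=\phi^n(u)$, and \eqref{eqdeffdagger} yields $\varphi^m\circ\Gc^\dagger(\tilde u)=f^\dagger(\phi^n(u))$. Hence
\begin{align*}
\Gc^\dagger(u)-\chi^m\circ\bar f^{m,n}_N\circ\phi^n(u)
 &= \bigl(\Gc^\dagger(u)-\Gc^\dagger(\tilde u)\bigr) \\
 &\quad + \bigl(\Gc^\dagger(\tilde u)-\chi^m\circ\varphi^m\circ\Gc^\dagger(\tilde u)\bigr) \\
 &\quad + \chi^m\bigl(f^\dagger(\phi^n(u))-\bar f^{m,n}_N(\phi^n(u))\bigr),
\end{align*}
and I would bound the $H^{t'}(D)$ norm of each summand separately, uniformly over $u\in B_R(\Hc_Q)$.

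For the first summand I would invoke the classical Sobolev-type error estimate for kernel optimal recovery in an RKHS embedded in $H^s(\Omega)$: under \ref{assump:I1}--\ref{assump:I3}, the worst-case error $\sup_{u\in B_R(\Hc_Q)}\|u-\psi^n\circ\phi^n(u)\|_{H^{s'}(\Omega)}$ vanishes as the fill distance of $\{X_i\}_{i=1}^n$ tends to zero. The continuity of $\Gc^\dagger$ from $H^{s'}(\Omega)$ into $\Hc_K$ assumed in \ref{assump:I4}, combined with the embedding $\Hc_K\subset H^t(D)\hookrightarrow H^{t'}(D)$ from \ref{assump:I2}, converts this into a uniform bound on the first summand. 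The second summand is treated by a symmetric argument on the output side: $\Gc^\dagger$ maps $B_R(\Hc_Q)$ into a bounded subset of $\Hc_K$ by \ref{assump:I4} together with the non-expansiveness of $\psi^n\circ\phi^n$ in $\|\cdot\|_Q$ (it is an orthogonal projection), so the scattered-data error bound for the optimal recovery $\chi^m\circ\varphi^m$ in $\Hc_K$ drives the second summand to zero in $H^{t'}(D)$ uniformly as the fill distance of $\{Y_j\}_{j=1}^m$ vanishes.

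The third summand is the crux and must be controlled for fixed $(n,m)$ by sending $N\to\infty$. The key observation is that $f^\dagger=\varphi^m\circ\Gc^\dagger\circ\psi^n$ is a composition of smooth maps: $\psi^n$ is bounded linear from $\R^n$ into $\Hc_Q$, $\varphi^m$ is bounded linear, and $\Gc^\dagger$ has Fr\'echet derivatives of all orders bounded on $B_R(\Hc_Q)$ by \ref{assump:I4}. Consequently $f^\dagger$ is smooth on the compact set $\Upsilon^n:=\phi^n(B_R(\Hc_Q))\subset\R^n$ and lies in every $H^{r'}(\Upsilon^n)$. Assumption \ref{assump:I5} then guarantees that $f^\dagger|_{\Upsilon^n}$ belongs to $\Hc_{S^n}$ restricted to $\Upsilon^n$ with finite native-space norm. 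The space-filling property \ref{assump:I6} of $\{\phi^n(u_i)\}_{i=1}^N\subset\Upsilon^n$ then permits the use of standard scattered-data interpolation error estimates for kernels sandwiched between two Sobolev spaces to conclude that $\sup_{U\in\Upsilon^n}\bigl|\bar f^{m,n}_N(U)-f^\dagger(U)\bigr|\to 0$ as $N\to\infty$. Composing with the bounded linear map $\chi^m:\R^m\to\Hc_K\hookrightarrow H^{t'}(D)$ finishes the bound on the third summand.

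Combining the three estimates, one sends $N\to\infty$ first (which kills the third summand for each fixed pair $(n,m)$) and then $n,m\to\infty$ (which kills the first two summands by fill-distance decay), matching exactly the nested limit in the statement. The main obstacle I anticipate is the third step: the constants in the scattered-data error estimates for $\bar f^{m,n}_N$ depend on $n$ through the geometry of $\Upsilon^n$ and the Sobolev embedding constant in \ref{assump:I5}, so membership of $f^\dagger|_{\Upsilon^n}$ in $\Hc_{S^n}|_{\Upsilon^n}$ must be tracked carefully, especially because the smoothness index $r'$ in \ref{assump:I5} is itself allowed to depend on $n$. The nested order of limits is precisely what makes this $n$-dependence harmless, and the same nested structure is essential for decoupling the input- and output-side interpolation errors from the kernel regression error on $f^\dagger$.
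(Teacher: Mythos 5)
Your decomposition and the subsequent fill-distance/Sobolev-sampling arguments, the use of bounded Fr\'echet derivatives to place $f^\dagger|_{\Upsilon^n}$ in $\Hc_{S^n}(\Upsilon^n)$ (the paper's Lemma~\ref{thmoepdjddei}), and the nested order of limits all match the paper's route through Proposition~\ref{thm:error-analysis-ideal}, Proposition~\ref{thm:error-analysis-bar-g-and-g}, and Theorem~\ref{thm:main}; the only cosmetic difference is that you bound $\bar f^{m,n}_N-f^\dagger$ directly rather than passing through the intermediary interpolant $\hat f$. One subtlety worth spelling out: $\bar f^{m,n}_N$ interpolates the data values $\varphi^m(\Gc^\dagger(u_i))$, \emph{not} $f^\dagger(\phi^n(u_i))$, so the scattered-data error estimate you invoke on $\bar f^{m,n}_N-f^\dagger$ applies only because the identity $\varphi^m(\Gc^\dagger(u_i))=f^\dagger(\phi^n(u_i))$ holds, and this requires the $u_i$ to lie in the range of $\psi^n$, equivalently $u_i=\psi^n\circ\phi^n(u_i)$ — the first half of~\ref{assump:I6}. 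The paper isolates exactly this step by introducing $\hat f$ in \eqref{eqhatfvar} and Condition~\ref{cond-input-data-simplified} (so that $\hat f=\bar f$ and $I_3=0$); you correctly invoke~\ref{assump:I6}, but you emphasize only its space-filling half when discussing the third summand, whereas it is the range condition that makes that summand an interpolation error in the first place.
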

We note that Assumptions~\ref{assump:I1}--\ref{assump:I3} are standard, and concern the accuracy 
of the optimal recovery maps $\phi^n$ and $\chi^m$ as $n,m \to \infty$. 
Assumptions~\ref{assump:I4}--\ref{assump:I5} are less standard and amount to regularity assumptions on the 
map $\Gc^\dagger$ while Assumption~\ref{assump:I6} concerns the acquisition and regularity 
of the training data set.

In \Cref{sec: convergence} we also present \Cref{thm:main}
as the quantitative analogue of the above result which characterizes how the
 speed of convergence depends on the regularity of the operator $\Gc^\dagger$ and the choice of $\phi$ and $\varphi$ in the setting
 of pointwise measurement operators. We also comment on how  this analysis could be extended to other linear measurements.


\subsection{Numerical Framework}\label{subsec:numerical-demonstration}

Returning to our running example, we implement the proposed framework for learning the non-linear operator mapping $u$ to $v$ in \eqref{eq:Darcy-1D}. 
We consider 1000 inputs and outputs of $u$ and  $v$. The data is taken from \cite{DeepONet-FNO} and the experimental setup is 
discussed further in \Cref{sec: darcy problem}.
We take $\varphi$ to be of the form \eqref{eq:pointwise-eval-functionals} with $m=841$ while we 
define $\phi$ through a PCA pre-processing step. More precisely, let $\phi_{\text{pointwise}}$
be of the form \eqref{eq:pointwise-eval-functionals} with $n = 841$. Choose $n_{\text{PCA}} = 202$ 
(this value captures $95\%$ of  the empirical variance of our training data)
and define 
 \begin{equation}
     \phi(u) = \Pi_{\text{PCA}} \circ \phi_{\text{pointwise}} (u) \in \R^{202}.
 \end{equation}
In other words, we take our $\phi$ map to be the linear map that computes the first 202 PCA coefficients 
of the input functions $u$ given on a uniform grid; observe that we do not use PCA pre-processing 
on the output data here although we do this for some of our other examples in Section \ref{sec: Numerical results} for better performance.


With $\phi$ and $\varphi$ identified (recall \Cref{fig:commutative-diagram})  
we proceed to implement our kernel method using the simple choice of a diagonal kernel $S(U, U')I$ where $S$ is a rational quadratic (RQ) kernel (see \Cref{app: kernels}). This choice transforms the problem into $841$ independent kernel regression problems, each corresponding to one component of $f^\dagger$ 
(i.e., the $f^\dagger_j$'s). 

We used the PCA and  kernel regression modules of the  \texttt{scikit-learn} Python library \cite{scikit-learn} to implement our algorithm. This implementation automatically selects the best kernel parameters  by maximizing the marginal likelihood function \cite{rasmussen} jointly for all problems. 
Our proposed method can therefore be implemented conveniently using off-the-shelf software. 
\Cref{fig: darcy_combined_figure} illustrates examples of  the inputs and outputs of our operator learning problem.
Despite the simple implementation of our method, 
 we are able to obtain competitive accuracy as shown in \Cref{tab:darcy results} 
 where the relative testing $L^2$ loss of our method is compared to other popular algorithms.
 Moreover, our approach is amenable to well-known numerical analysis techniques, such as sparse or low-rank approximation of kernel matrices, to reduce its complexity. For the present example (and those in \Cref{sec: Numerical results}) we only consider \say{vanilla} kernel methods which compute \eqref{eq:f-representer-formula} by computing the full Cholesky factors of the matrix $S( \Ub, \Ub)$.

\begin{figure}[htp]
    \centering
    \begin{subfigure}[t]{\textwidth}
    \centering
    \subfloat[\centering Training input]{{\includegraphics[width=0.195\textwidth]{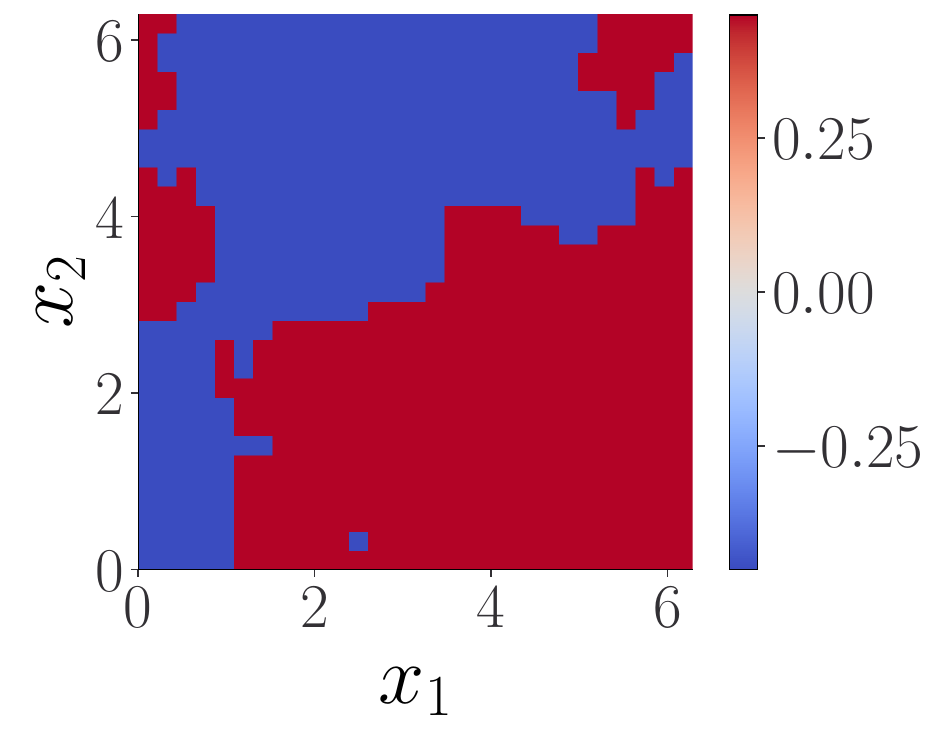} }}%
    \subfloat[\centering Training output]{{\includegraphics[width=0.195\textwidth]{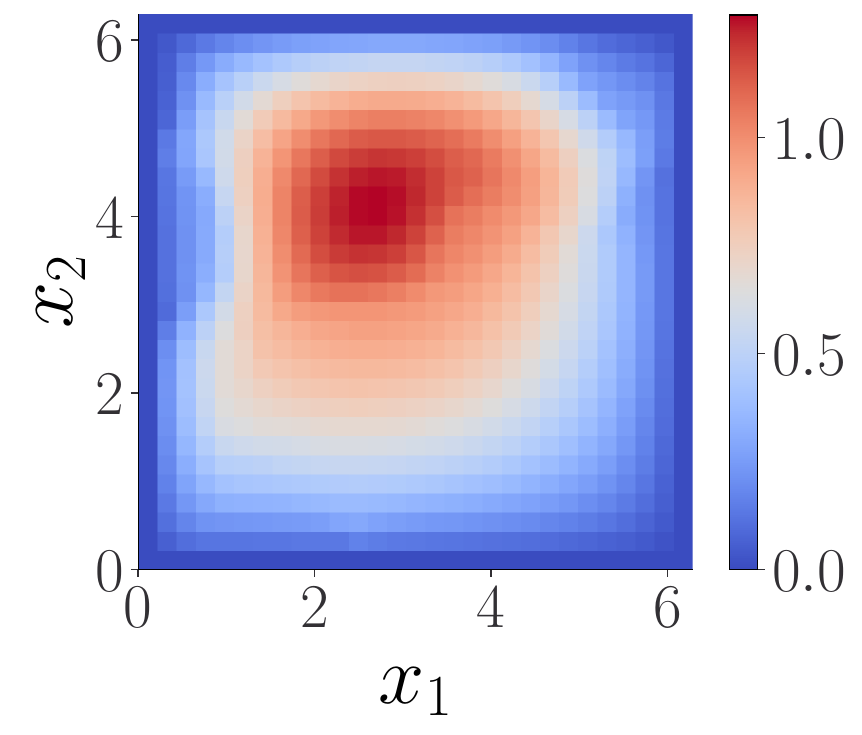} }}%
    \subfloat[\centering True test]{{\includegraphics[width=0.195\textwidth]{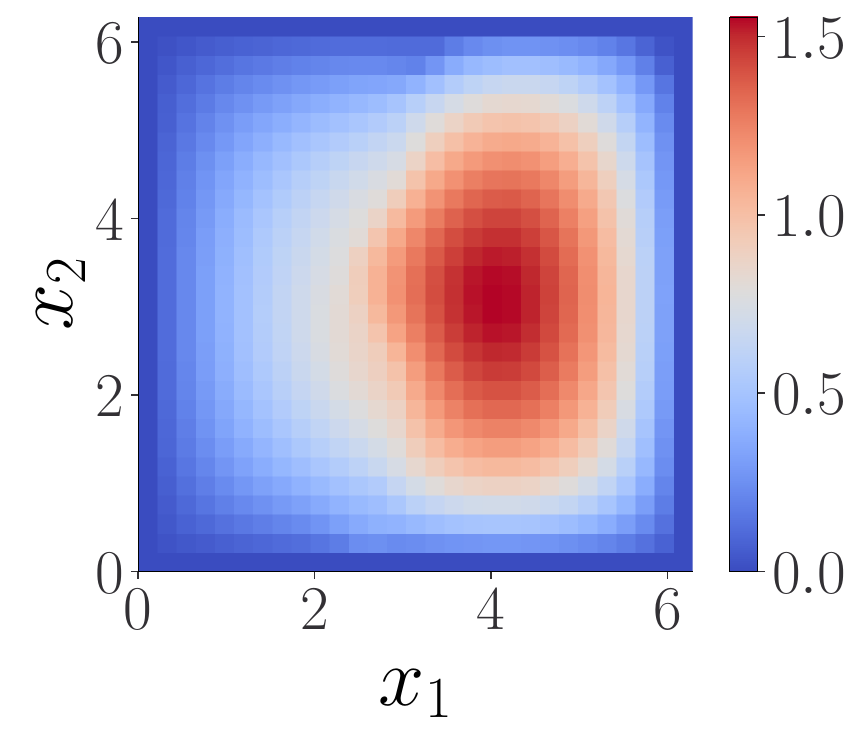}}}%
    \subfloat[\centering Predicted test]{{\includegraphics[width=0.195\textwidth]{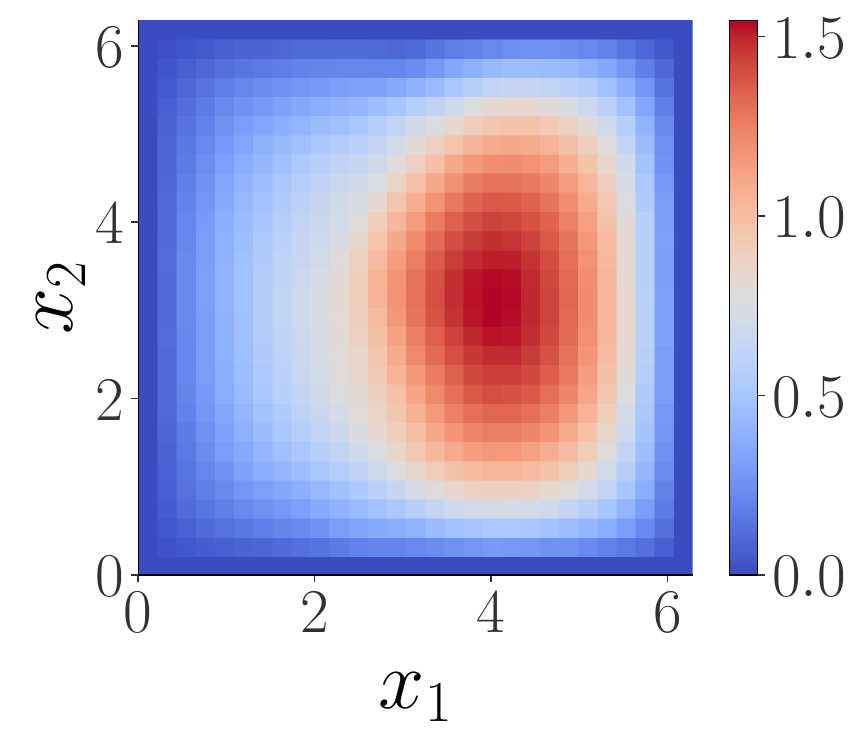}}}%
    \subfloat[\centering Pointwise error]{{\includegraphics[width=0.195\textwidth]{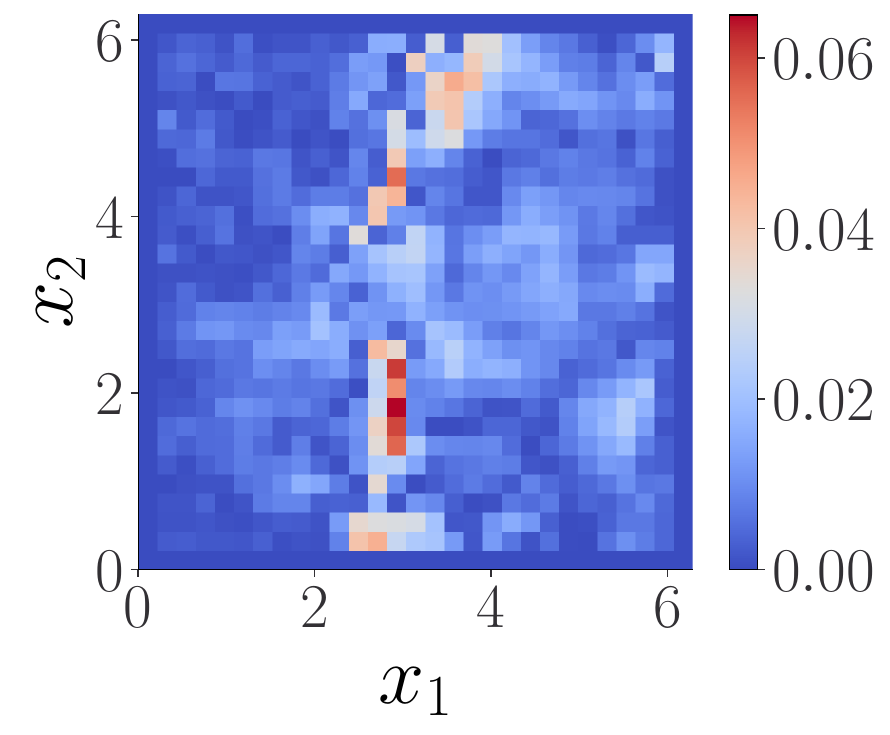}}}%
    \end{subfigure}
     \caption{Example of training data and test prediction and pointwise errors for the Darcy flow problem \eqref{eq:Darcy-1D}.}
    \label{fig: darcy_combined_figure}
\end{figure}

\begin{table}[htp]
    \centering
    \begin{tabular}{|l|c|}
    \hline
        Method & Accuracy \\
        \hline
        DeepONet & 2.91  \% \\
        FNO &  2.41 \%  \\
        POD-DeepONet & 2.32 \%\\
        \hdashline
        Linear  &  6.74 \%\\
        Rational quadratic & 2.87\%\\
        \hline
    \end{tabular}
    \caption{The $L^2$ relative test error of the Darcy flow problem in our running example. The kernel approach 
    is compared with variations of DeepONet and FNO. Results of our kernel method
    are presented below the dashed line with the pertinent choice of the kernel $S$.}
    \label{tab:darcy results}
\end{table}

\subsection{Summary of contributions}\label{subsec:summary-of-contributions}

The main results of the article concern the properties, performance, and error analysis of the map $\bar{\Gc}$ defined in \eqref{eq:bar-G-def}. Our contributions can be summarized under four
categories:

\begin{enumerate}
\item {\bf An abstract kernel framework for operator learning:} In \Cref{sec: proposed solution}, we propose a framework for operator learning using kernel methods with several desirable properties. A family of methods of increasing complexity is proposed that includes linear models and diagonal kernels as well as non-diagonal kernels which capture output correlations. These properties make our approach 
 ideal for benchmarking purposes. Furthermore, the methodology is: (i) applicable to any choice of the
 linear functionals $\varphi$ and $\phi$; (ii) minimax optimal with respect to an implicitly defined operator-valued kernel; and (iii) is mesh-invariant. We emphasize in \cref{remark: mesh invariance} that our optimal recovery maps can be applied to \emph{any} operator learning after training to obtain a mesh-invariant pipeline.

\item {\bf Error analysis and convergence rates for $\bar{\Gc}$:} In \Cref{sec: convergence}, we develop rigorous worst-case a priori error bounds and convergence guarantees for our method: \Cref{thm:main} provides
quantitative error bounds while \Cref{thm:main2} (the detailed version of \Cref{thm:main}) shows the convergence of $\bar{\Gc} \to \Gc$ 
under appropriate conditions.

\item \textbf{A simple to use vanilla kernel method:} While our abstract kernel method is quite general, our numerical implementation in \Cref{sec: Numerical results} focuses on a simple, easy-to-implement version using  diagonal kernels of the form \eqref{eq:diagonal-kernel}. Off-the-shelf software, 
such as the kernel regression modules of \texttt{scikit-learn},
can be employed for this task. We empirically observe low training times and robust choice of hyperparameters. 
These properties further suggest that kernel methods are a good  baseline for benchmarking of 
more complex methods.

\item {\bf Competitive performance.} 
In \Cref{sec: Numerical results} we present a series of numerical experiments on benchmark PDE problems 
from the literature and observe that our simple implementation of the kernel approach is competitive in 
terms of complexity-accuracy tradeoffs in comparison to several NN-based methods. Since kernel methods can be interpreted as an infinite-width, one-layer NN, the results raise the question of how much of a role the depth of  a 
deep NN plays in the performance of algorithms for the purposes of operator learning.
\end{enumerate}

\subsection{Review of relevant literature}\label{sec:revi-relev-liter}


In the most broad sense, operator learning is the problem of 
approximating a mapping between two infinite-dimensional function spaces 
\cite{pca, de2022cost}. In recent years, this problem 
has become an area of intense research in the scientific machine 
learning community with a particular focus on parametric or stochastic PDEs. 
However, the approximation of such parameter to solution maps 
has been an area of intense research in the computational mathematics and engineering 
communities, going back at least to the reduced basis method introduced in the
1970s \cite{almroth1978automatic, noor1980reduced} as a way of speeding up 
the solution of families of parametric PDEs in applications that require many PDE solves such 
as design \cite{economon2016su2, martins2013multidisciplinary, bendsoe2003topology, boncoraglio2021active}, uncertainty quantification (UQ) \cite{sudret2017surrogate, martin2012stochastic, huang2022iterated}, and multi-scale modeling \cite{weinan2011principles, fish1997computational, feyel2000fe2, kovachki2022multiscale}. 
In what follows we give a brief summary of the various areas and 
methodologies that overlap with operator learning;
we cannot provide an exhaustive list of references due to space, but refer the reader to key contributions and surveys where further references can be found.

\paragraph{Deep learning techniques}

The use of NNs for operator learning goes back at least to the 90s and 
the seminal works of Chen and Chen \cite{chen1995universal, chen1995approximation}
 who proved a universal approximation theorem for NN approximations to operators. 
The use and design of NNs for operator learning has 
become popular in the last five years as a consequence of growing interest in
NNs for scientific computing starting with  the article \cite{zhu2018bayesian} which used autoencoders 
to build surrogates for UQ of subsurface flow models. Since then many different 
approaches have been proposed some of which use specific architectures or target particular 
families of PDEs \cite{Hesthaven-PCA-Net, khoo2019switchnet, fno, khoo2021solving, Lu2021, gin2021deepgreen, boulle2022learning, kropfl2022operator, kissas2022learning}.
The most relevant of among these methods to our proposed framework 
are the DeepONet family \cite{Lu2021, wang2021learning, DeepONet-FNO, wang2022improved}, 
FNO \cite{fno}, and PCA-Net 
\cite{Hesthaven-PCA-Net, pca}
where the main novelty appears to be 
the use of novel, flexible, and expressive NN architectures that 
allow the algorithm to learn and adapt 
the bases that are selected for the input and outputs of the 
solution map as well as possible nonlinear dependencies between the basis coefficients.
Although not part of our comparisons, we note that \cite{fan2019bcr,fan2019multiscale,fan2019multiscaleH} obtained competitive accuracy by using deep neural networks with architectures inspired by conventional fast solvers.

\paragraph{Classical numerical approximation methods}\label{subsec:literature-review}

Operator learning has been the subject of intense research in the computational mathematics literature in the context of stochastic Galerkin methods \cite{ghanem2003stochastic, xiu2009efficient}, polynomial chaos \cite{xiu2010numerical, xiu2002wiener}, reduced basis methods \cite{noor1980reduced, maday2002priori} and numerical homogenization \cite{owhadi2007metric, owhadi2015bayesian, owhadi_scovel_2019, altmann2021numerical}.
In the setting of stochastic and parametric PDEs, the
the goal is often to approximate the solution of a PDE
as a function of a random or uncertain parameter.
The well-established approach to such problems is
to pick or construct appropriate bases for the input parameter
and the solution of the PDE and then
construct a parametric, high-dimensional map, that
transforms the input basis coefficients to the output
coefficients. Well-established methods such as
polynomial chaos, stochastic finite element methods, reduced basis methods   \cite{ghanem2003stochastic, xiu2010numerical, cohen2015approximation,
hesthaven2016certified, lucia2004reduced}
fall within this category. A vast amount of literature in applied mathematics exists on this subject, and the theoretical
analysis of these methods is extensive; see for example \cite{beck2012optimal, Chkifa2012, chkifa2014high, nobile2008sparse, nobile2008anisotropic, gunzburger2014stochastic} and references therein.

\paragraph{Operator compression}
For solving PDEs, the objectives of operator learning are also similar to those of operator compression \cite{feischl2020sparse, kropfl2022operator} as
formulated in numerical homogenization \cite{owhadi_scovel_2019, altmann2021numerical} and reduced order modeling (ROM) \cite{amsallem2008interpolation, lucia2004reduced}, i.e., 
the approximation of the solution operator from pairs of solutions and source/forcing terms.
While both ROM and numerical homogenization seek operator compression through the identification of reduced basis functions that are as accurate as possible (this translates into low-rank approximations with SVD and its variants \cite{boulle2022learning}), numerical homogenization also requires those functions to be as localized as possible  \cite{maalqvist2014localization} and in turn leverages both low rank and sparse approximations. These localized reduced basis functions are known as Wannier functions in the physics literature \cite{marzari2012maximally}, and can be interpreted as linear combinations of eigenfunctions that are localized in both frequency space and 
the physical domain, akin to wavelets.
The hierarchical generalization of numerical homogenization \cite{owhadi2017multigrid} (gamblets) has led to the current state-of-the-art for operator compression of linear elliptic 
\cite{schafer2021KL, schafer2021compression} and parabolic/hyperbolic PDEs \cite{owhadi2017gamblets}. In particular, for arbitrary (and possibly unknown) elliptic PDEs \cite{schafer2021sparse} shows that the solution operator (i.e., the Green's function) can be approximated in near-linear complexity to accuracy $\epsilon$
from only $\mathcal{O}(\log^{d+1}(\frac{1}{\epsilon}))$ solutions of the PDE.

\paragraph{GP emulators}
In the case where the range of the operator of interest is finite
dimensional, then operator learning coincides with surrogate modeling techniques
that were developed in the UQ literature, such as GP surrogate modeling/emulation \cite{kennedy2001bayesian, bastos2009diagnostics}. When the kernels of the underlying GPs are also learned from data \cite{owhadi2019kernelkf, chen2021consistency}, GP surrogate modeling has been shown to offer a simple, low-cost, and accurate solution to learning dynamical systems
  \cite{hamzi2021learning}, geophysical forecasting
\cite{hamzi2021simple}, and radiative transfer emulation \cite{susiluoto2021radiative}, and the inference of the structure of convective storms from passive microwave observations
 \cite{prasanth2021kernel}. Indeed, our proposed kernel framework for operator learning can be interpreted as an
 extension of these well-established GP surrogates to the setting where the range of the operator is a
 function space.

\subsection{Outline of the article}\label{subsec:outline}
The remainder of the article is organized as follows: We present 
our operator learning framework in \Cref{sec: proposed solution}
for the generalized setting where $\phi, \varphi$ can be 
any collection of bounded and linear operators along with an interpretation  of our method from the GP perspective. Our convergence 
analysis and quantitative error bounds are presented in \Cref{sec: convergence} where we present the full version of \Cref{thm:main2}. 
 Our numerical 
experiments, implementation details, and 
benchmarks against FNO and DeepONet are collected in 
\Cref{sec: Numerical results}. We discuss future directions 
and open problems in \Cref{sec: conclusions}. The appendix collects
a review of operator valued kernels and GPs along with other auxiliary details.


\section{The RKHS/GP framework for operator learning}\label{sec: proposed solution}
We now present our general kernel framework for operator learning, i.e., the proposed solution to Problem \ref{sec: problem}. We 
emphasize that here we do not require the spaces $\U$ and $\V$ to be spaces of continuous functions and in particular, 
we do not require the maps $\phi$ and $\varphi$ to be obtained from pointwise measurements.
To describe this, we will introduce the dual spaces of $\U$ and $\V$ to define optimal recovery with respect to kernel operators rather than just kernel functions. 
 
Write $\U^\ast$ and $\V^\ast$ for the duals of $\U$ and $\V$, and write $[ \cdot, \cdot ]$ for the pertinent duality pairings.
Assume that $\U$ is endowed with a quadratic norm $\| \cdot \|_Q$, i.e., there exists a linear bijection $Q: \U^\ast \to \U$ that is symmetric  ($[\phi_a,Q\phi_b]=[ \phi_b,Q\phi_a]$), positive ($[ \phi_a,Q\phi_a]>0$ for $\phi_a \ne 0$), and such
 that
$ \| u\|^2_Q = [ Q^{-1} u, u ], \: \forall u \in \U\, .$

As in \cite[Ch.~11]{owhadi_scovel_2019},
although $\U$, and $\U^\ast$ are also  Hilbert spaces under $\|\cdot\|_Q$ and its dual norm $\|\cdot\|_Q^\ast$ (with inner products $\<u,v\>_Q=[Q^{-1}u,v]$ and $\<\phi_a,\phi_b\>_Q^\ast=[\phi_a,Q\phi_b]$), we will keep using the Banach space terminology to emphasize the fact that our dual pairings will not be based on the
inner product through the Riesz representation theorem,
  but on a  different realization of the dual space, as this setting is more practical.
  
  If $\U$ is a space of continuous functions on a subset $\Omega \subset \R^{d_\Omega}$ then $\U^\ast$ contains delta Dirac functions and, to simplify notations, we also write
$Q(x,y):=[ \updelta_x,  Q \updelta_y ]$  for $x,y\in \R^{d_\Omega}$ to denote the kernel induced by the operator $Q$. Note that in that case, $\U$ is a RKHS with norm $\|\cdot\|_Q$ induced by the kernel $Q$.
Since $\phi$ is bounded and linear, its entries $\phi_i$ (write $\phi:=(\phi_1,\ldots,\phi_{n})$) must be elements of $\U^\ast$. We assume those elements to be linearly independent.
Write $\psi\,:\, \R^{n}\to \U$ for the linear operator defined by
\begin{equation}\label{eqpsidef}
\psi(Y):= (Q \phi)\, Q(\phi,\phi)^{-1} Y \text{ for } Y \in \R^{n},
\end{equation}
where we write $Q(\phi,\phi)$ for the $n\times n$ symmetric positive definite (SPD) matrix with entries $Q(\phi_i,\phi_j):=[\phi_i, Q \phi_j]$ \footnote{For linear measurements involving derivatives the computation 
of these kernel matrices requires the computation of  derivatives of the kernels; see \cite{gp_pdes} for practical examples and considerations.} and $Q \phi$ for $(Q \phi_1,\ldots, Q \phi_{n})\in \U^{n}$.
As described in \cite[Chap.~11]{owhadi_scovel_2019},
for $u\in \U$, given $\phi(u)=Y$, $\psi(Y)$ is the minmax optimal recovery of $u$ when using the relative error in $\|\cdot\|_Q$-norm as a loss.

Similarly, assume that $\V$ is endowed with a quadratic norm $\| \cdot \|_K$, defined by the symmetric positive linear bijection $K: \V^\ast \to \V$.
Write $\varphi:=(\varphi_1,\ldots,\varphi_{m})$ and assume the entries of $\varphi$ to be linearly independent elements of $\V^\ast$.
Using the same notations as in \eqref{eqpsidef}  write $\chi\,:\, \R^{m}\to \V$ for the linear operator defined by
\begin{equation}\label{eqchidef2}
\chi(Z):= (K \varphi)\, K(\varphi,\varphi)^{-1} Z\text{ for } Z \in \R^{m}\,.
\end{equation}
Then, as above, for $v\in \V$, given $\varphi(v)=Z$, $\chi(Z)$ is the minmax optimal recovery of $v$ when using the relative error in $\|\cdot\|_K$-norm as a loss.

Write $\Lc(\R^{m})$ for the space of bounded linear operators mapping $\R^{m}$ to itself , i.e., 
$m \times m$ matrices.
Let $\Gamma\,:\, \R^{n}\times \R^{n}\to \Lc(\R^{m})$ be a matrix-valued kernel \cite{alvarez2012kernels} defining an RKHS $\Hc_\Gamma$ of continuous functions $f\,:\,\R^{n}\to \R^{m}$ equipped with an RKHS norm $\|\cdot\|_\Gamma$.
For $i\in \{1,\ldots,N\}$, write $U_i:=\phi(u_i)$ and $V_i:=\varphi(v_i)$.
Write $\Ub$ and $\Vb$ for the block-vectors with entries $U_i$ and $V_i$.
Write $\Gamma(\Ub,\Ub)$ for the $N\times N$ block-matrix with entries $\Gamma(U_i,U_j)$ and assume $\Gamma(U,U)$ to be invertible (which is satisfied if $\Gamma$ is non-degenerate and $U_i\not=U_j$ for $i\not=j$).
Let $f^\dagger$ be an element of $\Hc_\Gamma$ and write 
$f^\dagger(\Ub)$ for the block vector with entries $f^\dagger(U_i)$.
Then given $f^\dagger(\Ub)=\Vb$ it follows that
\begin{equation}\label{ewqkehdu}
\bar{f} ( U ):= \Gamma( U, \Ub) \Gamma(\Ub,\Ub)^{-1} \Vb\,,
\end{equation}
is the minimax optimal recovery of $f^\dagger$,
where  $\Gamma(\cdot, \Ub)$ is the block-vector with entries $\Gamma(\cdot, U_i)$.

To this end, we propose to approximate the ground truth operator  
$\Gc^\dagger$ with
\begin{equation}
\bar{\Gc}:=\chi \circ \bar{f}\circ \phi\,,
\end{equation}
also recall Figure~\ref{fig:commutative-diagram}.
Combining \eqref{eqchidef2} and \eqref{ewqkehdu} we further infer that $\bar{\Gc}$ 
admits the following explicit representer formula
    \begin{equation}\label{eq:bar-G-rep-formula}
\bar{\Gc}(u)= (K \varphi)\, K(\varphi,\varphi)^{-1} \Gamma(\phi(u), \Ub) \Gamma(\Ub,\Ub)^{-1} \Vb.
\end{equation}



In the remainder of this section we will provide more details and observations regarding our 
approximate operator $\bar{\Gc}$ that is useful later in Section~\ref{sec: convergence} and 
of independent interest.

\subsection{The kernel and RKHS associated with $\bar{\Gc}$}\label{subsec:opvalker}
The explicit formula \eqref{eq:bar-G-rep-formula} suggests that the operator $\bar{\Gc}$ is an 
element of an RKHS defined by an operator-valued kernel, which we now characterize.
For $u_1,u_2\in \U$ and $v\in \V$ write
\begin{equation}\label{eqjhybfvf}
G(u_1,u_2) v:=(K\varphi)\,(K(\varphi,\varphi))^{-1}\Gamma(\phi(u_1),\phi(u_2)) (K(\varphi,\varphi))^{-1} \varphi(v).
\end{equation}
It turns out that $G: \U \times \U \to \Lc(\V)$ is a well-defined operator-valued kernel 
whose RKHS contains operators of the form $\bar{\Gc}$.
\begin{Proposition}\label{prop:opvalued-RKHS-of-barG}
The kernel $G$ in \eqref{eqjhybfvf} is an operator-valued kernel. 
 Write $\Hc_G$ for its RKHS and  $\|\cdot\|_G$ for the associated norm. 
 Then it holds that 
 $\Gc \in \Hc_G$ if and only if $\Gc= \chi \circ f \circ \phi$ for 
$f= \varphi \circ \Gc \circ \psi \in \Hc_\Gamma$ and 
$\|\Gc\|_G=\|f\|_\Gamma\,.$
\end{Proposition}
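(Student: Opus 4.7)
The first step is to rewrite \eqref{eqjhybfvf} in the more transparent form
\[
G(u_1,u_2) = \chi \,\Gamma(\phi(u_1),\phi(u_2))\, \chi^{*},
\]
where $\chi^{*}\colon \V\to \R^{m}$ is the adjoint of $\chi$ with respect to the Hilbert inner products $\langle\cdot,\cdot\rangle_K$ on $\V$ and the standard one on $\R^m$. A short computation using $\langle \chi(Z),v\rangle_K = [K^{-1}\chi(Z),v]$ and the definition \eqref{eqchidef2} shows that $\chi^{*}v = K(\varphi,\varphi)^{-1}\varphi(v)$, which is exactly the second factor appearing in \eqref{eqjhybfvf}. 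The same computation (or a direct one) gives the \emph{inverse identities} $\varphi\circ\chi = I_{\R^{m}}$ and, symmetrically, $\phi\circ\psi = I_{\R^{n}}$; these will be used repeatedly. From the conjugation form, operator symmetry $G(u_1,u_2)^{*}=G(u_2,u_1)$ follows from the analogous property of $\Gamma$, and positive semi-definiteness follows from
\[
\sum_{i,j}\langle v_i, G(u_i,u_j)v_j\rangle_K = \sum_{i,j}\langle \chi^{*}v_i,\Gamma(\phi(u_i),\phi(u_j))\chi^{*}v_j\rangle_{\R^{m}} \geq 0 ,
\]
so $G$ is a bona fide operator-valued kernel $\U\times\U\to \Lc(\V)$.

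The second step is to define the lifting $\Phi\colon \Hc_\Gamma \to \{\text{operators }\U\to\V\}$ by $\Phi(f) := \chi\circ f\circ \phi$ and to show that $\Phi$ is an isometry onto $\Hc_G$. Injectivity is immediate: if $\chi\circ f_1\circ\phi = \chi\circ f_2\circ\phi$, then $\phi\circ\psi = I$ shows that $\phi$ is surjective, and $\varphi\circ\chi=I$ shows that $\chi$ is injective, whence $f_1=f_2$ pointwise on $\R^{n}$ (and thus in $\Hc_\Gamma$, since $\mathcal{H}_\Gamma$ is a function space). Conversely, the inverse identities yield $\varphi\circ\Phi(f)\circ\psi = f$, so any $\mathcal G=\Phi(f)$ recovers its $f$ via $f=\varphi\circ\mathcal G\circ\psi$. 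I then equip $\Phi(\Hc_\Gamma)$ with the transported inner product $\langle \Phi(f_1),\Phi(f_2)\rangle := \langle f_1,f_2\rangle_\Gamma$, which makes it a Hilbert space of operators by construction and satisfies $\|\Phi(f)\|=\|f\|_\Gamma$.

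The third step is to verify the reproducing property for $G$ on this space, which pins down $\Phi(\Hc_\Gamma)=\Hc_G$ by uniqueness of the RKHS attached to a kernel. For $u\in\U$ and $v\in\V$, set $\tilde f := \Gamma(\cdot,\phi(u))\,\chi^{*}v \in \Hc_\Gamma$; evaluating at $u'\in\U$ gives
\[
\Phi(\tilde f)(u') = \chi\bigl(\Gamma(\phi(u'),\phi(u))\chi^{*}v\bigr) = G(u',u)\,v ,
\]
so $G(\cdot,u)v = \Phi(\tilde f)$. Using the reproducing property of $\Gamma$,
\[
\bigl\langle \Phi(f),\,G(\cdot,u)v\bigr\rangle = \langle f,\tilde f\rangle_\Gamma = \bigl\langle f(\phi(u)),\chi^{*}v\bigr\rangle_{\R^{m}} = \bigl\langle \chi(f(\phi(u))),v\bigr\rangle_K = \bigl\langle \Phi(f)(u),v\bigr\rangle_K ,
\]
which is exactly the reproducing identity associated with the operator-valued kernel $G$. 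By the standard uniqueness theorem for operator-valued RKHS, this forces $\Hc_G = \Phi(\Hc_\Gamma)$ with $\|\Phi(f)\|_G = \|f\|_\Gamma$. Combined with the recovery formula $f=\varphi\circ\mathcal G\circ\psi$, this delivers the stated characterisation.

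The main friction point will be the bookkeeping around the two different pairings (Banach duality $[\cdot,\cdot]$ versus the Hilbert inner product $\langle\cdot,\cdot\rangle_K$) when computing $\chi^{*}$ and verifying $\varphi\circ\chi = I$; once the formula $\chi^{*}=K(\varphi,\varphi)^{-1}\varphi$ is in hand, everything else reduces to routine manipulations with the two reproducing properties.
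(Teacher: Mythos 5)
Your proof is correct, and it takes a mildly different route from the paper's. The paper's proof constructs $\Hc_G$ \emph{from the inside out}: it shows that finite linear combinations of kernel sections $\sum_j G(\cdot,\tilde u_j)\tilde v_j$ coincide with $\chi\circ f\circ\phi$ for the explicitly written $f$, verifies that the $\|\cdot\|_G$-norm of such a combination equals $\|f\|_\Gamma$, and then takes closures to identify $\Hc_G$ with $\{\chi\circ f\circ\phi:f\in\Hc_\Gamma\}$. You instead build the candidate Hilbert space $\Phi(\Hc_\Gamma)$ \emph{from the outside in} by transporting the inner product, verify that it satisfies the reproducing identity for $G$, and then appeal to the uniqueness theorem for the RKHS associated with a kernel. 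Both are standard, and the computations underlying them (the positivity of $\Gamma$, the formula for kernel sections, the inverse identities $\varphi\circ\chi=I$, $\phi\circ\psi=I$) are the same. What your variant buys is notational cleanliness: the factorization $G(u_1,u_2)=\chi\,\Gamma(\phi(u_1),\phi(u_2))\,\chi^{*}$ with $\chi^{*}v=K(\varphi,\varphi)^{-1}\varphi(v)$ makes the Hermitian and positive-semidefinite properties immediate as a conjugation, and the isometry/uniqueness argument avoids any explicit density bookkeeping. The paper's closure argument is more self-contained in that it does not rely on the general RKHS uniqueness theorem, but it is notationally heavier. One small remark: your injectivity argument is sound but, strictly speaking, redundant — once you have established the reproducing property and $\Phi(\Hc_\Gamma)$ is complete, the identification $\Hc_G=\Phi(\Hc_\Gamma)$ with $\|\Phi(f)\|_G=\|f\|_\Gamma$ forces $\Phi$ to be injective automatically; the uniqueness of $f$ in the representation is most cleanly read off from $f=\varphi\circ\Gc\circ\psi$, exactly as the paper does.
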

\begin{proof}
    Since $G$ is Hermitian and positive, we deduce that $G$ is an operator-valued kernel. Indeed for
$\tilde{u}_1,\ldots,\tilde{u}_m\in \U $ and $\tilde{v}_1,\ldots,\tilde{v}_m\in \V$,
using  $\<\tilde{v}_i, K\varphi_s\>_K=\varphi_s(\tilde{v}_i)$ and the fact that $\Gamma$ is a matrix-valued kernel we have
\begin{equation}\label{eqlkjednkjd}
\begin{aligned}
 \<\tilde{v}_i, G(\tilde{u}_i,\tilde{u}_j) \tilde{v}_j\>_K &=   \varphi(\tilde{v}_i)^T (K\varphi)\,(K(\varphi,\varphi))^{-1}\Gamma(\phi(\tilde{u}_i),\phi(\tilde{u}_j)) (K(\varphi,\varphi))^{-1} \varphi(\tilde{v}_j)\\ 
 & = \< G(\tilde{u}_j,\tilde{u}_i) \tilde{v}_i,  \tilde{v}_j\>_K\, ,
\end{aligned}
\end{equation}
where we used  $\<\tilde{v}_i, K\varphi_s\>_K=\varphi_s(\tilde{v}_i)$ and the fact that $\Gamma$ is a matrix-valued kernel.
Furthermore, summing \eqref{eqlkjednkjd}, we deduce that
$\sum_{i,j=1}^m \<\tilde{v}_i, G(\tilde{u}_i,\tilde{u}_j) \tilde{v}_j\>_K \geq 0$.
From \eqref{eqjhybfvf} we infer
\begin{equation}\label{eqjhyvttvbfvf}
\sum_{j=1}^m G(u,\tilde{u}_j) \tilde{v}_j=\chi\circ f\circ \phi(u)
\end{equation}
with the function
\begin{equation}\label{eqlhbhbdehdbds}
f( U )=\sum_{j=1}^m \Gamma( U ,\phi(\tilde{u}_j)) (K(\varphi,\varphi))^{-1} \varphi(\tilde{v_j})\,.
\end{equation}
 Furthermore using the reproducing property of $G$ and \eqref{eqlkjednkjd} we have
$\left\|\sum_{j=1}^m G(u,\tilde{u}_j) \tilde{v}_j \right\|_G^2=\|f\|_\Gamma^2\,.$
Therefore the closure of the space of operators of the form \eqref{eqjhyvttvbfvf} with respect to the RKHS norm induced by $G$ is the space of functions of the form $\chi \circ f \circ \phi$ where $f$ lives in the closure of functions of the form \eqref{eqlhbhbdehdbds} with respect to the RKHS norm induced by $\Gamma$. We deduce that $\Hc_G =\{\chi \circ f \circ \phi\mid f\in \Hc_\Gamma\}$. The uniqueness of $f$ in the representation $\Gc=\chi \circ f \circ \phi$ for $f\in \Hc_G$ follows from  $f=\varphi \circ \Gc \circ \psi $ following the 
identities  $\varphi \circ \chi=I_d$ and $\phi \circ \psi=I_d$.
\end{proof}

Using the above result we can further characterize $\bar{\Gc}$ and $\bar{f}$ via optimal recovery 
problems in $\Hc_G$ and $\Hc_\Gamma$ respectively. In what follows 
we will write $\ub$ for the $N$ vector whose entries are the $u_i$, and $\Gc(\ub)$ for the
$N$ vector whose entries are $\Gc^\dagger(u_i)$.

\begin{Proposition}\label{prop:bar-G-bar-f-optimal-recovery}
The operator $\bar{\Gc}$ is the minimizer of
\begin{equation}\label{eqlkjbdebddbs}
\begin{cases}
\text{Minimize }& \|\Gc\|_G^2\\
\text{Over } &\Gc \in \Hc_G \text{ such that } \varphi\circ \Gc(\ub)=\varphi \circ \Gc^\dagger(\ub)\,.
\end{cases}
\end{equation}
while the map
$\bar{f}$ is the minimizer of
\begin{equation}\label{eqbarfvar}
\begin{cases}
\text{Minimize }& \|f\|_\Gamma^2\\
\text{Over } &f \in \Hc_\Gamma \text{ such that } f\circ \phi(\ub)=\varphi \circ \Gc^\dagger(\ub)\,.
\end{cases}
\end{equation}
\end{Proposition}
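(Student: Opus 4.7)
The plan is to dispatch \eqref{eqbarfvar} first by the standard representer-theorem argument for vector-valued RKHS, and then to obtain \eqref{eqlkjbdebddbs} as an immediate corollary through the isometric bijection $f \leftrightarrow \chi\circ f\circ \phi$ between $\Hc_\Gamma$ and $\Hc_G$ furnished by Proposition~\ref{prop:opvalued-RKHS-of-barG}.

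For \eqref{eqbarfvar} I would observe that the closed-form $\bar{f}$ in \eqref{ewqkehdu} lies in the subspace $\mathcal{S}\subset\Hc_\Gamma$ spanned by the kernel sections $U\mapsto \Gamma(U,U_j)c$ (for $j=1,\ldots,N$ and $c\in\R^m$), and that by the assumed invertibility of $\Gamma(\Ub,\Ub)$ it already satisfies the interpolation constraint $\bar{f}(\phi(\ub))=\Vb$. For any other admissible $f\in\Hc_\Gamma$, the residual $g:=f-\bar{f}$ vanishes on $\Ub$; the vector-valued reproducing identity $\<h,\Gamma(\cdot,U)c\>_\Gamma=\<h(U),c\>_{\R^m}$ (valid for $h\in\Hc_\Gamma$, $U\in\R^n$, $c\in\R^m$) then gives $g\perp\mathcal{S}$, and in particular $g\perp\bar{f}$. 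One application of Pythagoras yields $\|f\|_\Gamma^2=\|\bar{f}\|_\Gamma^2+\|g\|_\Gamma^2\geq\|\bar{f}\|_\Gamma^2$, with equality only when $f=\bar{f}$.

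Next I would transfer this to \eqref{eqlkjbdebddbs}. By Proposition~\ref{prop:opvalued-RKHS-of-barG}, every $\Gc\in\Hc_G$ factors uniquely as $\Gc=\chi\circ f\circ\phi$ with $f=\varphi\circ\Gc\circ\psi\in\Hc_\Gamma$ and $\|\Gc\|_G=\|f\|_\Gamma$. Applying $\varphi$ to $\Gc(u_i)$ and using $\varphi\circ\chi=I_d$ gives $\varphi\circ\Gc(u_i)=f(\phi(u_i))$, so the constraint in \eqref{eqlkjbdebddbs} translates exactly into the constraint of \eqref{eqbarfvar}. The two problems are therefore in isometric bijection, and the unique minimizer $\bar{f}$ of \eqref{eqbarfvar} corresponds to the unique minimizer $\chi\circ\bar{f}\circ\phi=\bar{\Gc}$ of \eqref{eqlkjbdebddbs}. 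No serious obstacle is anticipated; the main pitfall is simply to keep track of which composition of $\chi$ and $\varphi$ is the identity ($\varphi\circ\chi$, not $\chi\circ\varphi$, which is only a projection onto the range of $\chi$) when rewriting the constraints, and to invoke the invertibility of $\Gamma(\Ub,\Ub)$ to certify that $\bar{f}$ is a genuine interpolant.
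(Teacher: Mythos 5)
Your proposal is correct and follows essentially the same route as the paper: establish that $\bar{f}$ solves \eqref{eqbarfvar} by the standard representer argument for matrix-valued kernels, and transfer to \eqref{eqlkjbdebddbs} via the isometric bijection $\Gc\leftrightarrow f$ and the identity $\varphi\circ\chi = I_d$ from Proposition~\ref{prop:opvalued-RKHS-of-barG}. The only difference is cosmetic --- you spell out the Pythagoras/orthogonality step that the paper dismisses as ``standard representer formulae,'' and you present the two halves in the opposite order.
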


\begin{proof}
    By Proposition~\ref{prop:opvalued-RKHS-of-barG} $\bar{G}$ is completely identified by $\bar{f}$ and 
    $\| \bar{G} \|_G = \| \bar{f} \|_\Gamma$. Then solving \eqref{eqlkjbdebddbs} is equivalent to solving 
    \eqref{eqbarfvar}.
    The statement regarding $\bar{f}$ follows directly from representer formulae for 
    optimal recovery with matrix-valued kernels.
\end{proof}

\subsection{Regularizing $\bar{G}$ by operator regression}

As is often the case with optimal recovery/kernel regression the estimator for $\bar{f}$ in 
\eqref{ewqkehdu} is susceptible to numerical error due to ill-conditioning of the kernel 
matrix $\Gamma(\Ub, \Ub)$. To overcome this issue we regularize our estimator by adding a 
small diagonal  perturbation  to this matrix. More precisely,
let $\gamma>0$ and write $I$ for the identity matrix. We then define the regularized map
\begin{equation}\label{ewqkehdub}
\bar{f}_\gamma (U):= \Gamma( U, \Ub) \big(\Gamma(\Ub,\Ub)+\gamma I\big)^{-1} \Vb\,.
\end{equation}
This regularized map gives rise to the regularized  approximate operator
\begin{equation*}
\bar{\Gc}_{\gamma}:=\chi \circ \bar{f}_\gamma\circ \phi\,,
\end{equation*}
which admits the following representer formula
\begin{equation}\label{eq:def-bar-G-gamma}
\bar{\Gc}_{\gamma} (u)= (K \varphi)\, K(\varphi,\varphi)^{-1} \Gamma(\phi(u), \Ub)  \big(\Gamma(\Ub,\Ub)+\gamma I\big)^{-1} \Vb\,.
\end{equation}
We can further characterize this operator as the solution to an operator regression problem.
\begin{Proposition}\label{thmjgytycrc}
$\bar{\Gc}_{\gamma}$ is the solution to
\begin{equation}\label{eqiheuddh1}
\text{Minimize }_{\Gc \in \Hc_G}  \|\Gc\|_G^2+  \gamma^{-1}|\varphi\circ \Gc(\ub)-  \varphi\circ \Gc^\dagger(\ub) |^2.
\end{equation}
\end{Proposition}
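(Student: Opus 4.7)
The plan is to reduce the operator-regression problem over $\Hc_G$ to a standard regularized kernel regression problem over $\Hc_\Gamma$, and then invoke the classical representer theorem for the latter.

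First, I would invoke \Cref{prop:opvalued-RKHS-of-barG} to parameterize feasible operators. Any $\Gc \in \Hc_G$ can be written uniquely as $\Gc = \chi \circ f \circ \phi$ with $f = \varphi \circ \Gc \circ \psi \in \Hc_\Gamma$, and the norms agree: $\|\Gc\|_G = \|f\|_\Gamma$. Using the identity $\varphi \circ \chi = I_d$, we get $\varphi \circ \Gc(u_i) = f(\phi(u_i)) = f(U_i)$, while $\varphi \circ \Gc^\dagger(u_i) = V_i$ by definition. Hence the objective in \eqref{eqiheuddh1} becomes
\begin{equation*}
\|f\|_\Gamma^2 + \gamma^{-1} \sum_{i=1}^{N} |f(U_i) - V_i|^2,
\end{equation*}
and minimization over $\Gc \in \Hc_G$ is equivalent to minimization over $f \in \Hc_\Gamma$.

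Second, I would identify the unique minimizer $\bar{f}_\gamma$ of this reduced problem via the standard representer theorem for regularized regression with matrix-valued kernels. This is routine: the functional is strictly convex and coercive on $\Hc_\Gamma$, so a minimizer exists and is unique; writing a first-order optimality condition and using the reproducing property shows that $\bar{f}_\gamma$ must lie in the finite-dimensional span of $\{\Gamma(\cdot, U_i) c : c \in \R^m, i=1,\dots,N\}$. Substituting the ansatz $\bar{f}_\gamma(U) = \sum_i \Gamma(U, U_i) c_i$ into the functional and differentiating in the coefficients $c = (c_1,\ldots,c_N)$ yields the linear system $(\Gamma(\Ub,\Ub) + \gamma I) c = \Vb$, giving exactly \eqref{ewqkehdub}.

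Finally, composing with $\chi$ and $\phi$ recovers $\bar{\Gc}_\gamma = \chi \circ \bar{f}_\gamma \circ \phi$ as in \eqref{eq:def-bar-G-gamma}, which is therefore the unique minimizer of \eqref{eqiheuddh1}. There is no real obstacle in this argument beyond clean bookkeeping; the only subtle point is ensuring that the representer-theorem reduction is legitimate, which follows from the bijective correspondence $\Gc \leftrightarrow f$ and the isometry $\|\Gc\|_G = \|f\|_\Gamma$ established in \Cref{prop:opvalued-RKHS-of-barG}. Everything else is a direct translation of the classical kernel-ridge-regression solution to the operator-valued setting.
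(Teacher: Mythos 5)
Your proof is correct and follows essentially the same route as the paper's: reduce via Proposition~\ref{prop:opvalued-RKHS-of-barG} and the isometry $\|\Gc\|_G=\|f\|_\Gamma$ to the ridge regression problem over $\Hc_\Gamma$, then apply the standard representer theorem for matrix-valued kernels. You simply spell out the bookkeeping (using $\varphi\circ\chi=I_d$ to rewrite the data-fit term and sketching the representer-theorem derivation) that the paper leaves implicit or delegates to its appendix.
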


\begin{proof}
By Proposition~\ref{prop:opvalued-RKHS-of-barG}, 
$\Gc=\chi\circ f \circ \phi$ solves \eqref{eqiheuddh1} if and if $f$ solves
\begin{equation}\label{eqiheuddh2}
\text{Minimize }_{f \in \Hc_\Gamma}  \|f\|_\Gamma^2+\gamma^{-1} |f(\Ub)-\Vb|^2\,.
\end{equation}
It then follows, by standard representer theorems for  matrix-valued kernel regression
(see Section~\ref{app:operator-valued-ridge-regression})
that  $\bar{f}_\gamma$ is the minimizer of \eqref{eqiheuddh2}.
\end{proof}

\subsection{Interpretation as conditioned operator valued GPs}\label{secGPint}
Our kernel approach to operator learning has a natural GP regression interpretation 
that is compatible with Bayesian inference and UQ pipelines. We present some facts and observations 
in this direction.

Write $\xi \sim \cN(0,G)$ for the centered operator-valued GP with covariance kernel $G$ \footnote{See 
\Cref{subsecuideydiueyd} for a review of operator valued GPs.} and 
 $\zeta \sim \cN(0,\Gamma)$ for a centered vector valued GP with covariance kernel $\Gamma$. Then it 
 is straightforward to show  
 that the law of $\xi$ is equivalent to  that of $\chi \circ \zeta \circ \phi$.
 Let $Z=(Z_1,\dots,Z_N)$ be a random block-vector,  independent from $\xi$, with i.i.d. entries $Z_j \sim \cN(0,\gamma I_{m})$
 for $j = 1, \dots, N$; here $\gamma \ge 0$ and $I_m$ is the $m \times m$ identity matrix.
 
Then
$\xi$ conditioned on $\varphi\circ \xi(\ub)=\varphi(\vb)+Z$ is an operator-valued GP with mean $\bar{\Gc}_{\gamma}$, 
as in \eqref{eq:def-bar-G-gamma},and conditional covariance kernel 
\begin{equation*}
\begin{split}
 G^\perp &(u,u') v=  (K\varphi)\,(K(\varphi,\varphi))^{-1}\Gamma(\phi(u_1),\phi(u_2))\\
& \big(\Gamma(\phi(u),\phi(u'))-\Gamma(\phi(u),\Ub) (\Gamma(\Ub,\Ub)+\gamma I)^{-1} \Gamma(\Ub,\phi(u'))\big) (K(\varphi,\varphi))^{-1} \varphi(v)
\end{split}
\end{equation*}
 Furthermore, the law of
$\xi$ conditioned on $\varphi \circ \xi(\ub)=\varphi(\vb)+Z$ is equivalent to that of $\chi \circ \zeta^\perp \circ \phi$ where $\zeta^\perp\sim \cN(\bar{f}_\gamma, \Gamma^\perp)$ is the GP $\zeta$ conditioned on $\zeta(\Ub)=\Vb+Z'$, whose mean is $\bar{f}_\gamma$ as in \eqref{ewqkehdub} and conditional covariance kernel is
\begin{equation*}
 \Gamma^\perp(U,U')=\Gamma(U,U')-\Gamma(U,\Ub) (\Gamma(\Ub,\Ub)+\gamma I)^{-1} \Gamma(\Ub,U').   
\end{equation*}
We also use the GP approach to
derive an alternative regularization of \eqref{eqiheuddh1} in \Cref{sec:alternative-regularization}.


\subsection{Measurement and mesh invariance}

As argued in \cite{fno}, mesh invariance is a key property for operator learning methods, i.e, 
the learned operator should be generalizable at test time beyond the specific discretization that was 
used during training. In our framework, this translates to being able to predict the output of a test input function $\tilde{u}$ given only a linear measurement $\tilde{\phi}(\tilde{u})$, where $\tilde{\phi}$ was unknown at training time. For example $\tilde{\phi}$ could be of the same form as $\phi$ (say \eqref{eq:pointwise-eval-functionals})
but on a finer or coarser grid. Similarly, we may choose to output with an operator $\tilde{\varphi}$ 
which is a coarse/fine version of $\varphi$. Our proposed framework can easily provide mesh invariance 
using additional optimal recovery and measurement operators
at the input and outputs of the operator $\bar{\Gc}$ as depicted in \Cref{fig:mesh-invariance}. 
In fact, we can not only accommodate modification of the grid but completely different measurement 
operators at testing time. For example, while $\phi, \varphi$ may be of the form \eqref{eq:pointwise-eval-functionals}
we may take $\tilde{\phi}$ and $\tilde{\varphi}$ to be integral operators such as Fourier or Radon transforms.

 Let us describe our approach to mesh invariance in detail. 
 Given bounded and linear operators  
 $\tilde{\phi}: \mathcal{U} \to \R^{\tilde{n}}$
 and $\tilde{\varphi}: \mathcal{V} \to \R^{\tilde{m}}$ 
 we can approximate $\tilde{\varphi}(\mathcal{G}^\dagger(\tilde{u}))$ 
 using the map $\bar{f}$ obtained from \eqref{ewqkehdu} defined in terms of our training. 
 To achieve mesh invariance we simply need 
 a consistent approach to interpolate/extend the testing measurment operators to those 
 used for training and we achieve this using the optimal recovery map $\tilde{\psi}$ that is 
 defined from $\tilde{\phi}$ analogously to $\psi$ in \eqref{eqpsidef}. This setup gives rise 
 to a natural approximation of $\Gc^\dagger$ in terms of the function 
 $h^\dagger: \R^{\tilde{n}} \to \mathbb{R}^{\tilde{m}}$ depicted in \Cref{fig:mesh-invariance} 
 which in turn can be approximated with $\bar{h} :=  \tilde{\varphi}\circ
    \chi \circ \bar{f} \circ \phi \circ \tilde{\psi} \equiv \tilde{\varphi} \circ \bar{\Gc} \circ \tilde{\psi}$. 
    This expression 
    further gives rise to another approximation to $\Gc^\dagger$ given by the operator 
    $\tilde{\Gc} = \tilde{\chi} \circ \bar{h} \circ \tilde{\phi}$. 

    \begin{remark}\label{remark: mesh invariance}
        Observe that the definition of $\bar{h}$ (and consequently $\bar{\Gc}$)
        is independent of the fact that $\bar{f}$  is constructed using the kernel approach. Thus, 
        the optimal recovery maps $\chi$ and $\tilde{\psi}$ can be used to retrofit any 
        fixed-mesh operator learning algorithm, to become mesh-invariant and able to use arbitrary linear measurements of the function $\tilde{u}$ at test time.
    \end{remark}




\begin{figure}
 \centerline{\scalebox{0.9}{
\begin{tikzpicture}[->,>=stealth',shorten >=1pt,auto,node distance=3cm,
                    thick,main node/.style={font=\sffamily\Large\bfseries}]

\node[main node] (1) {$\mathcal{U}$};
\node[main node] (2)  [right of=1] {$\mathcal{V}$};
\node[main node] (3) [below of=1,node distance=1.5cm] {$\mathbb{R}^{n}$};
\node[main node] (4) [below of=2,node distance=1.5cm] {$\mathbb{R}^{m}$};
\node[main node] (5) [below left= 0.5cm and 2.5cm of 3] {$\mathbb{R}^{\tilde{n}}$};
\node[main node] (6) [below right= 0.5cm and 2.5cm of 4]{$\mathbb{R}^{\tilde{m}}$};

\path[every node/.style={font=\sffamily\Large\bfseries},red]
    (1) edge node [above,red ] {$\mathcal{G}^\dagger$} (2);

\path[every node/.style={font=\sffamily\Large\bfseries},red]
    (3) edge node [above,red ] {$f^\dagger$} (4);

\path[every node/.style={font=\sffamily\Large\bfseries},red]
    (5) edge node [above,red ] {$h^\dagger:= \tilde{\varphi}\circ
    \chi \circ f^\dagger \circ \phi \circ \tilde{\psi}$} (6);

\begin{scope}[transform canvas={xshift=.2em}]
\tikzstyle{every to}=[draw]
\draw (1) to[style={font=\sffamily\Large\bfseries}] node[right] {$\phi$} (3);
\end{scope}

\begin{scope}[transform canvas={xshift=-.2em}]
\tikzstyle{every to}=[draw]
\draw (3) to[style={font=\sffamily\Large\bfseries}] node[left] {$\psi$} (1);
\end{scope}

\begin{scope}[transform canvas={xshift=.2em}]
\tikzstyle{every to}=[draw]
\draw (2) to[style={font=\sffamily\Large\bfseries}] node[right] {$\varphi$} (4);
\end{scope}

\begin{scope}[transform canvas={xshift=-.2em}]
\tikzstyle{every to}=[draw]
\draw (4) to[style={font=\sffamily\Large\bfseries}] node[left] {$\chi$} (2);
\end{scope}

\begin{scope}[transform canvas={xshift=-.3em}]
\tikzstyle{every to}=[draw]
\draw (6) to[style={font=\sffamily\Large\bfseries}] node {$\tilde{\chi}$} (2);
\end{scope}

\begin{scope}[transform canvas={xshift=+.3em}]
\tikzstyle{every to}=[draw]
\draw (2) to[style={font=\sffamily\Large\bfseries}] node {$\tilde{\varphi}$} (6);
\end{scope}

\begin{scope}[transform canvas={xshift=-.3em}]
\tikzstyle{every to}=[draw]
\draw (5) to[style={font=\sffamily\Large\bfseries}] node {$\tilde{\psi}$} (1);
\end{scope}

\begin{scope}[transform canvas={xshift=+.3em}]
\tikzstyle{every to}=[draw]
\draw (1) to[style={font=\sffamily\Large\bfseries}] node {$\tilde{\phi}$} (5);
\end{scope}
\end{tikzpicture}}}

\caption{Generalization of \cref{fig:commutative-diagram} to the mesh invariant 
setting where the measurement 
functionals are different at test time.}
\label{fig:mesh-invariance}
\end{figure}
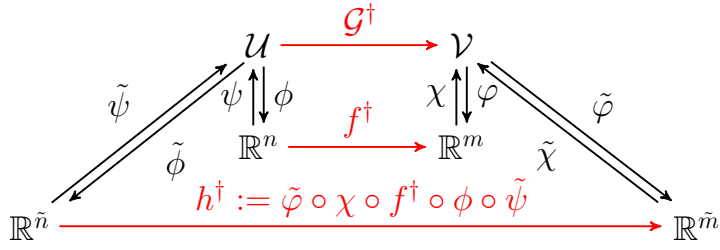

\section{Convergence and error analysis}\label{sec: convergence}

In this section, we present convergence guarantees and rigorous a priori error bounds for our proposed kernel
method for operator learning and give a detailed statement and proof of \Cref{thm:main}.
We assume that $\Hc_Q$ is a space of continuous functions from $\Omega\subset \R^{d_\Omega}$ and that $\Hc_K$ is a space of continuous functions from $D\subset \R^{d_D}$. Abusing notations we write $Q: \Omega \times \Omega \to \mathbb{R}^{d_\Omega}$ and $K: D \times D \to \mathbb{R}^{d_D}$ for the kernels induced by the operators $Q$ and $K$.
Let $X = (X_1, \dots, X_{n}) \subset \Omega$ and $Y = (Y_1, \dots, Y_{m}) \subset D$
be distinct collections of points and define their fill-distances
\begin{equation*}
    h_X := \max_{x' \in \Omega} \min_{x \in X} | x - x'|, \qquad
    h_Y := \max_{y' \in D} \min_{y \in Y} | y - y'|.
\end{equation*}
This section focuses on operators $\phi$ and $\varphi$ that are linear combinations of pointwise measurements in $X$ and $Y$. The presented results can be extended by using analogs of the sampling inequalities for other linear measurements, see \cite[Theorem 4.11, Lemma 14.34] {owhadi_scovel_2019} 
for a general framework that allows one to obtain such inequalities.

Let $L_Q$ and $L_K$ be invertible $n \times n$ and $m \times m$  matrices.
For $u\in \Hc_Q$ write $u(X)$ for the $n$-vector with entries $u(X_i)$ and let $\phi \,:\, \Hc_Q\to \R^{n}$ be the bounded linear map defined by
\begin{equation}\label{eqdiueds}
\phi(u)= L_Q u(X)\,.
\end{equation}
For $v\in \Hc_K$ write $v(Y)$ for the $m$-vector with entries $v(Y_j)$ and let $\varphi \,:\, \Hc_K\to \R^{m}$ be the bounded linear map defined by
\begin{equation}\label{eqdiueds2}
\varphi(v)= L_K v(Y)\,.
\end{equation}
Write $\|\phi\|:=\sup_{u \in \Hc_Q} |\phi(u)|/\|u\|_Q$ and $\|\psi\|:=\sup_{U' \in \R^{n}} \|\psi(U')\|_Q/|U'|$, 
and similarly
 $\|\varphi\|:=\sup_{v \in \Hc_K} |\varphi(v)|/\|v\|_K$ and $\|\chi\|:=\sup_{V' \in \R^{m}} \|\chi(V')\|_K/|V'|$.
We will also assume the following regularity conditions on the domains $\Omega, D$, the kernels $Q, K$, and the operator $\Gc^\dagger$.

\begin{Condition}\label{Condiwhdiwidh}
Assume that the following conditions hold.
\begin{enumerate}[label=(\ref{Condiwhdiwidh}.\arabic*)]
    \item  $\Omega$ and $D$
are compact sets with Lipschitz boundary.
\label{cond:domains}
\item There exist indices $s > d_\Omega/2$ and $t > d_D/2$ so that
$\mathcal{H}_Q \subset H^s(\Omega)$  and
$\mathcal{H}_K \subset H^t(D)$, with  inclusions indicating continuous embeddings.
\label{cond:RKHS-embedding}
\item $\Gc^\dagger$ is a (possibly) nonlinear operator from $H^{s'}(\Omega)$ to $\Hc_K$
with $s' < s$
that satisfies,
\begin{equation}\label{eqiuygiugyy}
  \| \Gc^\dagger(u) - \Gc^\dagger(v) \|_{K}
     \le \omega \left( \| u - v \|_{H^{s'}(\Omega)} \right)\,,
\end{equation}
where $\omega: \mathbb{R} \to \mathbb{R}_+$ is the {\it modulus of continuity} of $\Gc^\dagger$.
\label{cond:cal-G}
\end{enumerate}
\end{Condition}

Note that conditions \ref{cond:RKHS-embedding} and \ref{cond:cal-G} imply
\begin{equation*}
 \| \Gc^\dagger \|_{B_R(\mathcal{H}_Q) \to \mathcal{H}_K} :=  \sup_{u \in B_R(\mathcal{H}_Q)} { \| \Gc^\dagger(u) \|_K } < +\infty\,.
\end{equation*}


\begin{proposition}\label{thm:error-analysis-ideal}
Suppose that  \Cref{Condiwhdiwidh} holds. Let  $0<t' <  t$.
 Then there exist constants $ h_\Omega, h_D, C_\Omega, C_D >0$ such that if
$h_X < h_\Omega$ and $h_Y < h_D$, then 
\begin{equation*}
\| \mathcal{G}^\dagger(u) - \chi \circ f^\dagger \circ \phi(u) \|_{H^{t'}(D)}
\le C_D\,\omega \left( C_\Omega h_X^{s - s'} R \right) + C_D\, h_Y^{t - t'}
\big(\|\Gc^\dagger(0)\|_K+\omega(C_\Omega R)\big),
\end{equation*}
for any $u \in B_R(\mathcal{H}_Q)$, 
where $f^\dagger$ is defined as in \eqref{eqdeffdagger}.
\end{proposition}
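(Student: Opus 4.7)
The plan is to decompose the error using two standard sampling (kernel-interpolation) inequalities, one for the input side and one for the output side, together with the modulus of continuity of $\mathcal{G}^\dagger$. Setting $\tilde u := \psi\circ\phi(u)$, I note that because $L_Q$ and $L_K$ are invertible, the constraints $\phi(w)=\phi(u)$ and $\varphi(w)=\varphi(v)$ are equivalent to $w(X)=u(X)$ and $w(Y)=v(Y)$ respectively. Therefore $\tilde u$ and $\chi\circ\varphi(v)$ are precisely the standard minimum-norm kernel interpolants of $u$ and $v$ at $X$ and $Y$. Since $\chi\circ f^\dagger\circ\phi(u) = \chi\circ\varphi\circ\mathcal{G}^\dagger(\tilde u)$, I would split
\begin{equation*}
\mathcal{G}^\dagger(u) - \chi\circ\varphi\circ\mathcal{G}^\dagger(\tilde u)
= \underbrace{\bigl(\mathcal{G}^\dagger(u) - \chi\circ\varphi\circ\mathcal{G}^\dagger(u)\bigr)}_{(\mathrm{I})}
+ \underbrace{\chi\circ\varphi\bigl(\mathcal{G}^\dagger(u)-\mathcal{G}^\dagger(\tilde u)\bigr)}_{(\mathrm{II})}.
\end{equation*}

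For $(\mathrm{I})$, using \ref{cond:domains}--\ref{cond:RKHS-embedding} the hypotheses of the standard Sobolev sampling inequality for kernel interpolation are met: there exist $h_D,C_D>0$ such that, whenever $h_Y<h_D$,
\begin{equation*}
\|w-\chi\circ\varphi(w)\|_{H^{t'}(D)} \le C_D\, h_Y^{\,t-t'}\,\|w\|_K \qquad \forall\, w\in\mathcal{H}_K.
\end{equation*}
Applied to $w=\mathcal{G}^\dagger(u)\in\mathcal{H}_K$ and combined with the bound
\begin{equation*}
\|\mathcal{G}^\dagger(u)\|_K \le \|\mathcal{G}^\dagger(0)\|_K + \omega(\|u\|_{H^{s'}(\Omega)}) \le \|\mathcal{G}^\dagger(0)\|_K + \omega(C_\Omega R),
\end{equation*}
obtained from \ref{cond:cal-G} and the continuous embedding $\mathcal{H}_Q\hookrightarrow H^{s'}(\Omega)$ (adjusting $C_\Omega$ to absorb the embedding constant), this produces the second summand of the claimed estimate.

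For $(\mathrm{II})$, $\chi\circ\varphi$ is the $\|\cdot\|_K$-orthogonal projection onto the interpolation subspace, hence contractive in $\|\cdot\|_K$; the embedding $\mathcal{H}_K\hookrightarrow H^{t'}(D)$ from \ref{cond:RKHS-embedding} then yields
\begin{equation*}
\|(\mathrm{II})\|_{H^{t'}(D)} \le C\,\|\mathcal{G}^\dagger(u)-\mathcal{G}^\dagger(\tilde u)\|_K \le C\,\omega\!\bigl(\|u-\tilde u\|_{H^{s'}(\Omega)}\bigr),
\end{equation*}
using \ref{cond:cal-G}. A second application of the same Sobolev sampling inequality, this time on $\Omega$ with indices $s,s'$, gives $h_\Omega,C_\Omega>0$ such that, whenever $h_X<h_\Omega$,
\begin{equation*}
\|u-\tilde u\|_{H^{s'}(\Omega)} \le C_\Omega\, h_X^{\,s-s'}\,\|u\|_Q \le C_\Omega\, h_X^{\,s-s'}\,R.
\end{equation*}
Monotonicity of $\omega$ then produces the first summand of the bound. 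Adding the two contributions and absorbing all constants into the single pair $C_\Omega,C_D$ yields the stated inequality.

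The only nontrivial ingredient is the sampling inequality on embedded RKHSs (in both $\mathcal{H}_Q\subset H^s(\Omega)$ and $\mathcal{H}_K\subset H^t(D)$); everything else is a one-line composition. The smallness thresholds $h_\Omega,h_D$ come from the Lipschitz-boundary hypothesis \ref{cond:domains}, which ensures the interior cone/extension properties needed to run the Narcowich--Ward--Wendland argument for Sobolev-norm errors at the intermediate regularities $s',t'$. No other subtlety arises because the optimal recovery maps $\psi$ and $\chi$ act as genuine kernel interpolants by the invertibility of $L_Q$ and $L_K$.
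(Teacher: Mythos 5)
Your proof is correct and obtains the same bound, but it uses a genuinely different triangle-inequality split from the paper's. Writing $P:=\psi\circ\phi$ and $\Pi:=\chi\circ\varphi$, the paper inserts $\mathcal{G}^\dagger(Pu)$ in the middle, giving $T_1 = \mathcal{G}^\dagger(u)-\mathcal{G}^\dagger(Pu)$ (bounded by the modulus of continuity and the input-side sampling inequality) and $T_2 = (I-\Pi)\mathcal{G}^\dagger(Pu)$ (bounded by the output-side sampling inequality). You instead insert $\Pi\mathcal{G}^\dagger(u)$, giving $(\mathrm{I}) = (I-\Pi)\mathcal{G}^\dagger(u)$ and $(\mathrm{II}) = \Pi\bigl(\mathcal{G}^\dagger(u)-\mathcal{G}^\dagger(Pu)\bigr)$; these are the dual intermediate points. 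Both routes use the same ingredients — the Sobolev sampling inequality on both sides and a projection-contractivity fact — but attach them to different terms. Your split has the mild advantage that $(\mathrm{I})$ is directly the output interpolation error of $\mathcal{G}^\dagger(u)$ itself (so the bound on $\|\mathcal{G}^\dagger(u)\|_K$ is cleaner than bounding $\|\mathcal{G}^\dagger(Pu)\|_K$), at the cost of explicitly invoking contractivity of $\Pi$ in $\|\cdot\|_K$ for $(\mathrm{II})$, where the paper instead uses contractivity of $I-P$ in $\|\cdot\|_Q$ inside $T_1$. One small point worth spelling out in your input-side step: the sampling inequality first yields $\|u-\tilde u\|_{H^{s'}(\Omega)}\le C h_X^{s-s'}\|u-\tilde u\|_{H^{s}(\Omega)}\le C h_X^{s-s'}\|u-\tilde u\|_Q$, and you then need the projection bound $\|u-\tilde u\|_Q\le\|u\|_Q$ to pass to $R$; you absorb this into $C_\Omega$ but it deserves a word since it is the same fact you invoke symmetrically as contractivity of $\Pi$.
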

\begin{proof}
By the definition of $f^\dagger$ and
 the triangle inequality we have
\begin{align*}
\| \mathcal{G}^\dagger(u) - \chi \circ \varphi \circ \mathcal{G}^\dagger \circ \psi^\dagger \circ \phi(u) \|_{H^{t'}(\Gamma)}
 \le & \| \mathcal{G}^\dagger(u) - \mathcal{G}^\dagger \circ \psi \circ \phi(u)  \|_{H^{t'}(\Gamma)}  \\
& + \| \mathcal{G}^\dagger \circ \psi \circ \phi (u)  - \chi \circ \varphi \circ  \mathcal{G}^\dagger \circ \psi \circ \phi (u) \|_{H^{t'}(\Gamma)} \\
& =: T_1 + T_2.
\end{align*}
Let us first bound $T_1$: By conditions  \ref{cond:RKHS-embedding} and \ref{cond:cal-G}, we have
\begin{equation*}
    T_1 \le  C_D \| \mathcal{G}^\dagger(u) - \mathcal{G}^\dagger \circ \psi \circ \phi(u)  \|_{K} \leq C_D \omega \left( \| u - \psi \circ \phi (u) \|_{H^{s'}(\Omega)} \right).
\end{equation*}
At the same time, since $(u - \psi \circ \phi (u))(X)=0$,  condition~\ref{cond:domains} and the sampling inequality for interpolation in Sobolev spaces \cite[Thm.~4.1]{arcangeli2007extension}, and
condition \ref{cond:RKHS-embedding} imply that
there exists a constant $h_\Omega> 0$ so that
if $h_X < h_\Omega$ then
\begin{equation}\label{eqlkjndekjdn}
    \| u - \psi \circ \phi (u) \|_{H^{s'}(\Omega)} \le C'_\Omega h_X^{s - s'} \| u - \psi \circ \phi (u) \|_{H^s(\Omega)}
    \le C_\Omega h_X^{s - s'} \| u - \psi \circ \phi (u)\|_Q \,,
\end{equation}
where $C'_\Omega , C_\Omega > 0$ are constants that are independent of $u$. Using
$\| u - \psi \circ \phi (u)\|_Q \leq \|u\|_Q$ \cite[Thm.~12.3]{owhadi_scovel_2019}  we deduce the
 desired bound
\begin{equation}\label{eq:T_1_bound}
    T_1 \le C_D \omega \left( C_\Omega h_\Omega^{s - s'} \| u\|_Q \right)\,.
\end{equation}
Let us now bound $T_2$: Once again, by the continuous embedding of condition~\ref{cond:RKHS-embedding} and the sampling inequality for interpolation in Sobolev spaces, we have that,  there exists $h_D >0$ so that if $h_Y < h_D$, then for any $v \in H^{t}(D)$
it holds that
\begin{equation*}
    \| v - \chi \circ \varphi (v) \|_{H^{t'}(D) } \le C'_D h_Y^{t - t'} \| v - \chi \circ \varphi (v)\|_{H^{t}(D)} \leq C_D h_Y^{t - t'} \| v - \chi \circ \varphi (v)\|_{K} \leq C_D h_Y^{t - t'} \| v\|_{K}\,.
\end{equation*}
Taking $v \equiv \mathcal{G}^\dagger \circ \psi \circ \phi(u)$, we deduce that
\begin{align*}
    T_2 & \le C_D h_Y^{t -t'} \| \mathcal{G}^\dagger  \circ \psi \circ \phi(u) \|_{K}, \\
        & \le C_D  h_Y^{t -t'} \big(\|\Gc^\dagger(0)\|_K+\omega(\| \psi \circ \phi(u) \|_{H^{s'}(\Omega)}) \big)\,.
\end{align*}
Using $\| \psi \circ \phi(u) \|_{H^{s'}(\Omega)}\leq C_\Omega \| \psi \circ \phi(u) \|_{Q}\leq C_\Omega \|u\|_Q$ concludes the proof.
\end{proof}

While \Cref{thm:error-analysis-ideal} gives an error bound for the distance between the maps $\Gc^\dagger$ and  $\varphi \circ f^\dagger \circ \phi$, 
we can never compute this map  when $N<\infty$ and so we have to approximate this map as well. Given the kernel $\Gamma$, our optimal recovery approximant for the map $f^\dagger$ is  $\bar{f}$ as in \eqref{ewqkehdu}, 
which we recall is the minimizer of \eqref{eqbarfvar}.

To proceed, we need to  consider another intermediary problem that defines an approximation
$\hat{f}$ to the map $f^\dagger$:
\begin{equation}\label{eqhatfvar}
\hat{f} :=\begin{cases}
\text{Minimize }& \|f\|_\Gamma^2\\
\text{Over } &f \in \Hc_\Gamma \text{ such that } f\circ \phi(\ub)=f^\dagger\circ \phi(\ub)\,.
\end{cases}
\end{equation}
We emphasize that the difference between the problems \eqref{eqbarfvar} and  \eqref{eqhatfvar} 
is
simply in the training data that is injected in the equality constraints, and this difference is quite subtle:

In  practical applications, observations may be taken from $\Gc^\dagger(u_i)$, which is
different from
$f^\dagger \circ \phi (u_i) \equiv \varphi \circ \Gc^\dagger \circ \psi \circ \phi(u_i)$.
To make our analysis simple, henceforth we assume the following condition on our input data. 
\begin{Condition}\label{cond-input-data-simplified}
The input data points $u_i$ satisfy
\begin{equation*}
u_i=\psi\circ \phi(u_i)\text{ for }i=1,\ldots,N\,,
\end{equation*}    
\end{Condition}
We observe that this condition implies
$\Gc^\dagger(u_i)=f^\dagger \circ \phi (u_i)$ and  $\bar{f}=\hat{f}$. Removing this assumption requires bounding some norm of the error $f^\dagger-\bar{f}$, and we postpone that analysis to a sequel paper as this step 
can become very technical.

The next step in our convergence analysis is then to control the error between 
the maps $\hat{f}$ and $f^\dagger$ which we
will achieve using similar arguments as in the proof of \Cref{thm:error-analysis-ideal}.
For our analysis,  we take $\Gamma$ to be a diagonal, matrix-valued
kernel, of the form \eqref{eq:diagonal-kernel} which we recall for 
reference
\begin{equation}\label{eq:diagonal-kernel-recall}
\Gamma( U, U' ) = S(U, U' ) I    
\end{equation}
where $I$ is the $m \times m$ identity matrix and $S: \R^n \times \R^n \to \R$ is a real valued kernel.

\begin{proposition}\label{thm:error-analysis-bar-g-and-g}
Suppose that \Cref{cond-input-data-simplified} holds.
    Let $\Upsilon \subset \R^{n}$ be a compact set with Lipschitz boundary and
    consider $U = ( U_1, \dots, U_N) \subset \Upsilon$ with
    fill distance
    \begin{equation*}
        h_\Upsilon := \max_{U' \in \Upsilon}  \min_{1\leq i \leq N} | U_i - U' |.
    \end{equation*}
    Let $\Gamma$ be of the form \eqref{eq:diagonal-kernel-recall},
    with $S$ restricted to the set $\Upsilon$, and suppose
    $\mathcal{H}_S \subset H^r(\Upsilon)$ for $r > n/2$ and that
    $f^\dagger_j \in \mathcal{H}_S$ for $j=1, \dots, m$. Then there exist
    constants $h'_\Upsilon, C_\Upsilon >0$ so that whenever $h_\Upsilon < h'_\Upsilon$ then for 
    any $r' < r$ it holds that
    \begin{equation*}
        \| f^\dagger_j - \hat{f}_j \|_{H^{r'}(\Upsilon)} \le C_\Upsilon h_\Upsilon^{r - r'} \| f_j^\dagger \|_{S}.
    \end{equation*}
\end{proposition}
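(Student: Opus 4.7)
The plan is to mirror the argument used for the term $T_1$ in the proof of Proposition~\ref{thm:error-analysis-ideal}, but now at the level of the map $f^\dagger$ on $\Upsilon\subset\R^n$ rather than the map $\Gc^\dagger$ on $\Omega$. Three ingredients do the work: (i) decoupling of the vector-valued optimal recovery into scalar problems, (ii) a Sobolev sampling inequality, and (iii) the standard minimality bound for optimal recovery.

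First I would exploit the diagonal form \eqref{eq:diagonal-kernel-recall}. Because $\Gamma(U,U')=S(U,U')I$, the matrix-valued RKHS $\Hc_\Gamma$ factorizes as $\Hc_S^{m}$ with $\|f\|_\Gamma^2=\sum_{j=1}^m \|f_j\|_S^2$. Hence the optimal recovery problem \eqref{eqhatfvar} splits into $m$ independent scalar problems, and the $j$-th component $\hat f_j$ is the minimum-$\|\cdot\|_S$-norm interpolant of the data $\{(\phi(u_i),f^\dagger_j(\phi(u_i)))\}_{i=1}^N$. Under Condition~\ref{cond-input-data-simplified} we have $U_i=\phi(u_i)$ and $\bar f=\hat f$, and in particular $(f^\dagger_j-\hat f_j)(U_i)=0$ for all $i=1,\dots,N$.

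Next, since $(f^\dagger_j-\hat f_j)$ vanishes on the point set $U=\{U_i\}$ whose fill distance in $\Upsilon$ is $h_\Upsilon$, I apply the Sobolev sampling inequality \cite[Thm.~4.1]{arcangeli2007extension} on the Lipschitz domain $\Upsilon\subset\R^n$, valid for $r>n/2$ and any $0<r'<r$. This yields a constant $h'_\Upsilon>0$ and $C_\Upsilon>0$ such that whenever $h_\Upsilon<h'_\Upsilon$,
\begin{equation*}
\|f^\dagger_j-\hat f_j\|_{H^{r'}(\Upsilon)}\;\le\;C_\Upsilon\,h_\Upsilon^{\,r-r'}\,\|f^\dagger_j-\hat f_j\|_{H^{r}(\Upsilon)}.
\end{equation*}
Then I use the hypothesis $\Hc_S\subset H^r(\Upsilon)$ (a continuous embedding) to pass from the $H^r$-norm to the $\|\cdot\|_S$-norm, absorbing the embedding constant into $C_\Upsilon$.

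Finally, I invoke the optimal-recovery/minimality property: since $\hat f_j$ is the $\|\cdot\|_S$-norm minimizer among $\Hc_S$-interpolants of the data produced by $f^\dagger_j$, $\hat f_j$ is the $\<\cdot,\cdot\>_S$-orthogonal projection of $f^\dagger_j$ onto the subspace of such interpolants, which gives the Pythagorean identity $\|f^\dagger_j-\hat f_j\|_S^2=\|f^\dagger_j\|_S^2-\|\hat f_j\|_S^2\le \|f^\dagger_j\|_S^2$ (this is the analogue of $\|u-\psi\circ\phi(u)\|_Q\le\|u\|_Q$ used in \eqref{eqlkjndekjdn}, cited from \cite[Thm.~12.3]{owhadi_scovel_2019}). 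Chaining the three bounds delivers the claimed estimate. The only non-routine point is verifying the applicability of the sampling inequality on the possibly irregular set $\Upsilon$, which is handled by the Lipschitz-boundary assumption, and ensuring the threshold $h'_\Upsilon$ and constant $C_\Upsilon$ depend only on $\Upsilon$, $r$, $r'$ (and the embedding constant), not on $f^\dagger_j$ or $N$; this is precisely the form in which the cited sampling inequality is stated.
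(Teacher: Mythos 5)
Your proposal is correct and follows essentially the same route as the paper's proof, which simply points to the diagonal-kernel decoupling, the Arcangeli et al.\ sampling inequality, and the optimal-recovery minimality bound already used in the proof of Proposition~\ref{thm:error-analysis-ideal}. You have merely filled in the details that the paper leaves implicit.
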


\begin{proof}
    The proof is a direct consequence of the fact that the components of $\hat{f}$ are given by the optimal recovery
    problems \eqref{eqhatfvar} and the sampling inequality for interpolation in Sobolev spaces \cite[Thm.~4.1]{arcangeli2007extension} following the
    same arguments used in the proof of Theorem~\ref{thm:error-analysis-ideal}.
\end{proof}

We can now combine the above results to obtain the following theorem.

\begin{theorem}\label{thm:main}

    Suppose that Conditions \ref{Condiwhdiwidh} and \ref{cond-input-data-simplified} hold in addition to 
    those of \Cref{thm:error-analysis-bar-g-and-g}  with a set of inputs $(u_i)_{i=1}^N \subset B_R(\mathcal{H}_Q) $, the set
    $\Upsilon = \phi \left( B_R(\mathcal{H}_Q) \right)$, and index $n/2 < r' < r$.
    Then for any $u \in B_R(\mathcal{H}_Q)$, it holds that
\begin{equation}\label{eqliedhieuddd}
\begin{aligned}
\| \Gc^\dagger(u) - \chi \circ \bar{f} \circ \phi(u) \|_{H^{t'}(D)}
 \le & C_D\,\omega \left( C_\Omega h_X^{s - s'} R \right) + C_D\, h_Y^{t - t'}
\big(\|\Gc^\dagger(0)\|_K+\omega(C_\Omega R)\big)\\
& +  \sqrt{m} C_D C_\Upsilon \|\chi\| h_\Upsilon^{(r - r')}
    \max_{1 \le j \le m} \| f^\dagger_j\|_S
\end{aligned}
\end{equation}

\end{theorem}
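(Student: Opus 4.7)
The plan is to decompose the error via the triangle inequality into two pieces that correspond exactly to the two propositions proved just before the theorem. Specifically, writing
\begin{equation*}
\| \Gc^\dagger(u) - \chi \circ \bar{f} \circ \phi(u) \|_{H^{t'}(D)} \le \underbrace{\| \Gc^\dagger(u) - \chi \circ f^\dagger \circ \phi(u) \|_{H^{t'}(D)}}_{=:E_1} + \underbrace{\| \chi \circ (f^\dagger - \bar{f}) \circ \phi(u) \|_{H^{t'}(D)}}_{=:E_2},
\end{equation*}
the term $E_1$ is bounded directly by \Cref{thm:error-analysis-ideal}, producing the first two summands on the right-hand side of \eqref{eqliedhieuddd}. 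So the remaining work is to control $E_2$ and tie it to \Cref{thm:error-analysis-bar-g-and-g}.

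For $E_2$, I would first invoke \Cref{cond-input-data-simplified} to replace $\bar{f}$ with $\hat{f}$, since under this condition the data-consistent optimal recovery coincides with the one defined via $f^\dagger \circ \phi$. Then I would pass to the $\Hc_K$ norm using the continuous embedding in condition~\ref{cond:RKHS-embedding}, namely $\|\cdot\|_{H^{t'}(D)} \le C_D \|\cdot\|_K$, and use linearity together with the operator norm $\|\chi\|$ to get
\begin{equation*}
E_2 \le C_D \|\chi\|\, \bigl| f^\dagger(\phi(u)) - \hat{f}(\phi(u)) \bigr|_{\R^m} \le C_D \|\chi\|\, \sqrt{m}\, \max_{1\le j\le m} \bigl| f^\dagger_j(\phi(u)) - \hat{f}_j(\phi(u)) \bigr|.
\end{equation*}

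The next step is to turn the pointwise discrepancy on the right into the Sobolev bound supplied by \Cref{thm:error-analysis-bar-g-and-g}. Since $\phi(u) \in \Upsilon = \phi(B_R(\Hc_Q))$ for every $u \in B_R(\Hc_Q)$, and $r' > n/2$, the Sobolev embedding $H^{r'}(\Upsilon) \hookrightarrow C(\Upsilon)$ gives $\|g\|_{L^\infty(\Upsilon)} \lesssim \|g\|_{H^{r'}(\Upsilon)}$ for any $g$ on $\Upsilon$ (the embedding constant can be absorbed into $C_\Upsilon$). Applied to $g = f^\dagger_j - \hat{f}_j$ and combined with \Cref{thm:error-analysis-bar-g-and-g}, this yields
\begin{equation*}
\bigl| f^\dagger_j(\phi(u)) - \hat{f}_j(\phi(u)) \bigr| \le C_\Upsilon h_\Upsilon^{r-r'} \|f^\dagger_j\|_S,
\end{equation*}
and substituting back produces precisely the third term in \eqref{eqliedhieuddd}.

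The only mildly delicate step is ensuring that the constant $C_\Upsilon$ from the sampling inequality genuinely depends only on $\Upsilon$ (not on $u$ or $N$) and that $\Upsilon$ has Lipschitz boundary as required; this is already assumed in the hypotheses taken over from \Cref{thm:error-analysis-bar-g-and-g}, so the argument is a bookkeeping exercise. The main conceptual obstacle, which is cleanly side-stepped by \Cref{cond-input-data-simplified}, would otherwise be bounding $\|\bar f - \hat f\|$ when the training inputs $u_i$ are not themselves in the range of $\psi$; the authors explicitly defer this case, so under the stated assumption the proof reduces to a clean triangle-inequality combination of the two preceding propositions together with one Sobolev embedding.
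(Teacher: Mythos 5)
Your proposal is correct and takes essentially the same approach as the paper: the paper uses a three-term triangle inequality with the third term $\| \chi \circ \hat{f} \circ \phi(u) - \chi \circ \bar{f} \circ \phi(u) \|_{H^{t'}(D)}$ vanishing by \Cref{cond-input-data-simplified}, which is just a bookkeeping variant of your two-term split after invoking $\bar f = \hat f$ up front, and both arguments then proceed through the same chain (embedding $\Hc_K \hookrightarrow H^{t'}(D)$, operator norm $\|\chi\|$, Sobolev embedding $H^{r'}(\Upsilon)\hookrightarrow C(\Upsilon)$ since $r' > n/2$, and \Cref{thm:error-analysis-bar-g-and-g}).
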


\begin{proof}
    An application of the triangle inequality yields
\begin{equation*}
\begin{aligned}
    \| \Gc^\dagger(u) - \chi \circ \bar{f} \circ \phi(u) \|_{H^{t'}(D)}
 \le  &  \| \Gc^\dagger(u) - \chi \circ f^\dagger \circ \phi(u) \|_{H^{t'}(D)} \\
& + \| \chi \circ f^\dagger \circ \phi(u) - \chi \circ \hat{f} \circ \phi(u) \|_{H^{t'}(D)} \\
& + \| \chi \circ \hat{f} \circ \phi(u) - \chi \circ \bar{f} \circ \phi(u) \|_{H^{t'}(D)}
=: I_1 + I_2 + I_3.
\end{aligned}
\end{equation*}
We can bound $I_1$ immediately using \Cref{thm:error-analysis-ideal}. Furthermore, by \Cref{cond-input-data-simplified} we have that $I_3 = 0$. So it remains for us to bound $I_2$:
By the continuous embedding of
$\Hc_K$ into $H^{t'}(D)$ we can write
\begin{equation*}
   \begin{aligned}
    I_2  \le C_D \| \chi \circ f^\dagger \circ \phi(u) - \chi \circ \hat{f} \circ \phi(u) \|_{K} 
    &\le C_D\|\chi\| | f^\dagger \circ \phi(u) - \hat{f} \circ \phi(u) |    \\
        & \le C_D\|\chi\| \sqrt{\sum_{j=1}^{m} \| f^\dagger_j   - \hat{f}_j \|_{H^{r'}(\Upsilon)}^2 }\,,
   \end{aligned}
\end{equation*}
where the last line follows from the Sobolev embedding theorem and the assumption that $r' > n/2$.
Then an application of \Cref{thm:error-analysis-bar-g-and-g} yields,
\begin{equation*}
    I_2 \le  \sqrt{m} C_D C_\Upsilon \|\chi\| h_\Upsilon^{(r - r')}
    \max_{1 \le j \le m} \| f^\dagger_j\|_S.
\end{equation*}
{}
\end{proof}

\subsection{Convergence theorem}
Our next step will be to consider the limits $N, n, m \rightarrow \infty$ 
and show the convergence of $\bar{\Gc}$ to $\Gc^\dagger$. To obtain this 
result we first need to make assumptions on the regularity of the true 
operator $\Gc^\dagger$. 

For $k\geq 1$ write $D^k \Gc^\dagger$ for the functional derivative of $\Gc^\dagger$ of order $k$.
Recall that for $u\in \Hc_Q$, $D^k \Gc^\dagger(u)$ is a multilinear operator mapping $\otimes_{i=1}^k \Hc_Q$ to $\Hc_K$.
For $w_1,\ldots,w_k\in \Hc_Q$ write $[D^k \Gc^\dagger(u), \otimes_{i=1}^k w_i]$ for the (multilinear) action of $D^k \Gc^\dagger(u)$ on $\otimes_{i=1}^k w_i$ and write $\|D^k \Gc^\dagger(u)\|$ for the smallest constant such that for $w_1,\ldots,w_k\in \Hc_Q$,
\begin{equation}\label{eqliheudhdsh}
\big\|[D^k \Gc^\dagger(u), \otimes_{i=1}^k w_i]\big\|_{\Hc_K}\leq \|D^k \Gc^\dagger(u)\|  \prod_{i=1}^k \|w_i\|_{\Hc_Q}
\end{equation}

Similarly, for $k\geq 1$ write $D^k f^\dagger$ for the derivation tensor of $f^\dagger$ of order $k$ (the gradient for $k=1$ and the Hessian for $k=2$, etc).
Recall that for $U\in \R^{n}$, $D^k f^\dagger(U)$ is a multilinear operator mapping $\otimes_{i=1}^k \R^{n}$ to $\R^{m}$.
For $W_1,\ldots,W_k\in \R^{n}$ write $[D^k f^\dagger(U), \otimes_{i=1}^k W_i]$ for the (multilinear) action of $D^k f^\dagger(U)$ on $\otimes_{i=1}^k W_i$ and write $\|D^k f^\dagger(U)\|$ for the smallest constant such that for $W_1,\ldots,W_k\in \R^{n}$,
\begin{equation}\label{eqleklhdeiudh}
\big|[D^k f^\dagger(U), \otimes_{i=1}^k W_i]\big| \leq \|D^k f^\dagger(U)\|  \prod_{i=1}^k |W_i|
\end{equation}
where $|\cdot|$ is the Euclidean norm.
\begin{Lemma}\label{thmoepdjddei}
It holds true that 
$\|D^k f^\dagger(U)\| \le \|\varphi\| \|\psi\|^k \|D^k \Gc^\dagger\circ \psi(U)\|, \; \forall U \in \R^n.$
\end{Lemma}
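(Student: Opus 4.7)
The plan is to derive the stated bound by applying the chain rule for Fréchet derivatives to the composition $f^\dagger=\varphi\circ \Gc^\dagger\circ \psi$, exploiting that the outer maps $\varphi$ and $\psi$ are linear, and then taking operator norms term by term.

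First, since $\psi:\R^n\to\Hc_Q$ is linear, its Fréchet derivative at any point equals $\psi$ itself and all its higher-order derivatives vanish; the same is true of $\varphi:\Hc_K\to\R^m$. Applying the Faà di Bruno / chain rule to $f^\dagger=\varphi\circ \Gc^\dagger\circ\psi$ and discarding all terms involving $D^{\ell}\psi$ or $D^{\ell}\varphi$ with $\ell\ge 2$, only the single ``all derivatives land on $\Gc^\dagger$'' term survives, giving for any $U\in\R^n$ and any $W_1,\dots,W_k\in\R^n$,
\begin{equation*}
\bigl[D^k f^\dagger(U),\otimes_{i=1}^k W_i\bigr]
=\varphi\Bigl(\bigl[D^k \Gc^\dagger(\psi(U)),\otimes_{i=1}^k \psi(W_i)\bigr]\Bigr).
\end{equation*}

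Next I apply the norm bounds. Using the definition of $\|\varphi\|$ on $\Hc_K\to\R^m$, then the defining bound \eqref{eqliheudhdsh} for $\|D^k\Gc^\dagger(\psi(U))\|$, and finally the operator norm $\|\psi\|$ on each factor, I obtain
\begin{equation*}
\bigl|[D^k f^\dagger(U),\otimes_{i=1}^k W_i]\bigr|
\le \|\varphi\|\,\|D^k \Gc^\dagger(\psi(U))\|\prod_{i=1}^k \|\psi(W_i)\|_{\Hc_Q}
\le \|\varphi\|\,\|\psi\|^k\,\|D^k \Gc^\dagger(\psi(U))\|\prod_{i=1}^k |W_i|.
\end{equation*}
Since this holds for all choices of $W_i$, the characterization \eqref{eqleklhdeiudh} of $\|D^k f^\dagger(U)\|$ as the smallest such constant yields the claimed inequality.

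The only subtle point is writing down the chain rule correctly for $k\ge 2$; the naive worry is a combinatorial explosion of terms, but the linearity of $\psi$ and $\varphi$ collapses the Faà di Bruno expansion to the single multilinear term above, after which everything reduces to successive applications of the definitions of operator norms. No regularity beyond the existence of the derivatives used in \eqref{eqliheudhdsh} is needed, and the result holds pointwise in $U$.
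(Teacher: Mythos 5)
Your proof is correct and follows essentially the same route as the paper's: apply the chain rule to $f^\dagger=\varphi\circ\Gc^\dagger\circ\psi$ using linearity of $\varphi$ and $\psi$ to obtain the single surviving multilinear term, then bound by operator norms. The only difference is that you explicitly invoke Faà di Bruno to explain why the expansion collapses, whereas the paper states this as a one-line consequence of linearity; the substance is the same.
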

\begin{proof}
The  chain rule and the linearity of $\varphi$ and $\psi$ imply that
\begin{equation*}
\begin{aligned}
[D^k f^\dagger(U), \otimes_{i=1}^k W_i]= \varphi [D^k \Gc^\dagger\circ \psi(U), \otimes_{i=1}^k \psi (W_i)]\,.
\end{aligned}
\end{equation*}
We then conclude the proof by writing
\begin{equation*}
\begin{aligned}
\big|[D^k f^\dagger(U), \otimes_{i=1}^k W_i]\big|& \le \|\varphi\| \|D^k \Gc^\dagger \circ \psi(U) \|  \prod_{i=1}^k \|\psi (W_i)\|_{\Hc_Q}\\
& \le \|\varphi\| \|\psi\|^k \|D^k \Gc^\dagger \circ \psi(U) \|  \prod_{i=1}^k |W_i|\,.
\end{aligned}
\end{equation*}
{}
\end{proof}

Let us now consider an infinite and dense sequence of points $X_1, X_2,  X_3,\dots$ of $\Omega$, such that the closure of $\cup_{i=1}^\infty \{X_i\}$ is the closure of $\Omega$. Write $X^n$ for the $n$-vector formed by the first $n$ points, i.e.,
\begin{equation}
X^n:=(X_1,\ldots,X_n)
\end{equation}
and 
let $L_Q^n$ be an arbitrary invertible $n\times n$ matrix. Further let $\phi^n\,:\, \Hc_Q\rightarrow \R^n$ be defined by
\begin{equation}\label{phi-L-Q}
\phi^n(u)=L_Q^n u(X^n)\,.
\end{equation}
Write $\psi^n$ for the corresponding optimal recovery $\psi$-map.
Similarly, we assume that we are given an infinite and dense sequence of points $Y_1, Y_2, Y_3, \dots$ of $D$, such that the closure of $\cup_{i=1}^\infty \{Y_i\}$ is the closure of $D$.
Write $Y^m$ for the $m$-vector formed by the first $m$ points, i.e.,
\begin{equation}
Y^m:=(Y_1,\ldots,Y_m)\,.
\end{equation}
Let $L_K^m$ be an arbitrary invertible $m\times m$ matrix and let $\varphi^m\,:\, \Hc_K\rightarrow \R^m$ be defined by
\begin{equation}\label{varphi-L-K}
\varphi^m(v)=L_K^m u(Y^m)\,.
\end{equation}
Write $\chi^m$ for the corresponding optimal recovery $\chi$-map.
We also assume that we are given a sequence of diagonal matrix-valued kernels $\Gamma^{m,n}\,:\, \R^n \times \R^n \to \Lc(\R^m)$ with scalar-valued kernels $S^n\,:\, \R^n \times \R^n \to \R$ as diagonal entries.
Write $\bar{f}^{m,n}_N$ for the corresponding minimizer of \eqref{eqbarfvar}
(also identified by the formula \eqref{ewqkehdu}) for the above setup.

\begin{theorem}\label{thm:main2}
   Let $m, n$ be the dimensionality of the input and output observations 
   $\phi: \mathcal U \to \mathbb{R}^n$ and
   $\varphi: \mathcal V \to \mathbb{R}^m$. Suppose that the closure of $\lim_{n\uparrow \infty} \cup_{i=1}^n \{X_i\}$ is equal to the closure of $\Omega$ and that the closure of $\lim_{m\uparrow \infty} \cup_{i=1}^m \{Y_i\}$ is equal to the closure of $D$.
    Suppose \Cref{Condiwhdiwidh} is satisfied and that 
    \begin{equation}\label{eqregDkG}
     \sup_{u\in B_R(\mathcal{H}_Q)}\big\|[D^k \Gc^\dagger(u)\|<\infty\text{ for all }k\geq 1\,,
     \end{equation}
     for an arbitrary $R >0$.
      Assume that for any $n\geq 1$ and any compact set $\Upsilon$ of $\R^n$,
     the RKHS  of $S^n$ restricted to  $\Upsilon$ (which we write $\mathcal{H}_{S^n}(\Upsilon)$) is contained in $H^{r}(\Upsilon)$ for some $r > n/2$ and contains
     $H^{r'}(\Upsilon)$ for some $r'>0$ that may depend on $n$.
    Let $(u_i)_{i=1}^N$ be a sequence of inputs in $B_R(\mathcal{H}_Q)$.  Assume that there exists an integer $n_0$ such that for $n\geq n_0$, the data points $(u_i)_{i=1}^N$ satisfy
    \Cref{cond-input-data-simplified}, i.e., they 
     satisfy $u_i=\psi^{n} \circ \phi^{n}(u_i)$ for all $i\geq 1$. 
     Further assume that the $(\phi^n(u_i))_{1\leq i \leq N}$ are space filling in the sense 
     that 
     for any $n \ge n_0$ we have
    \begin{equation}\label{eqjguyguy}
    \lim_{N\rightarrow \infty}\sup_{u\in B_R(\mathcal{H}_Q)} \min_{1\leq i \leq N}\big|u_i(X^n)-u(X^n)|=0\,.
    \end{equation}
     Then for  any $t'\in (0,t)$, it holds that
\begin{equation}\label{eqlieredhieuddd}
\lim_{n, m \rightarrow \infty}\lim_{N\rightarrow \infty}\sup_{u \in B_R(\mathcal{H}_Q)}\| \Gc^\dagger(u) - \chi^m \circ \bar{f}^{m,n}_N \circ \phi^n(u) \|_{H^{t'}(D)}
=0
\end{equation}

\end{theorem}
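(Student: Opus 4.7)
The plan is to apply the quantitative bound of Theorem~\ref{thm:main} and show that each of the three resulting error contributions vanishes in the prescribed iterated limit. Set $\bar{\Gc}^{m,n}_N := \chi^m \circ \bar{f}^{m,n}_N \circ \phi^n$ and $\Upsilon := \overline{\phi^n(B_R(\Hc_Q))}$. Since $\phi^n$ is a surjective bounded linear map onto $\R^n$, $\Upsilon$ is a compact ellipsoid in $\R^n$ with Lipschitz (in fact smooth) boundary, so Theorem~\ref{thm:main} applies with this $\Upsilon$ and yields, once $n,m,N$ are large enough,
\begin{equation*}
\sup_{u \in B_R(\Hc_Q)} \| \Gc^\dagger(u) - \bar{\Gc}^{m,n}_N(u) \|_{H^{t'}(D)} \le T_1 + T_2 + T_3,
\end{equation*}
where $T_1, T_2, T_3$ are the three summands on the right of \eqref{eqliedhieuddd}: $T_1$ depends only on $n$ through $h_{X^n}$, $T_2$ only on $m$ through $h_{Y^m}$, and $T_3$ on $(n,m,N)$ through $h_\Upsilon$, $\sqrt{m}$, and $\max_j\|f^\dagger_j\|_{S^n}$.

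The inner limit $N\to\infty$ with $n,m$ fixed is treated first. Only $T_3$ depends on $N$, through the fill distance $h_\Upsilon$. Since $\phi^n(u) = L_Q^n u(X^n)$, for any $U'=\phi^n(u)\in\Upsilon$ one has $\min_i |\phi^n(u_i) - U'| \le \|L_Q^n\|\min_i |u_i(X^n) - u(X^n)|$, so the space-filling hypothesis \eqref{eqjguyguy} forces $h_\Upsilon\to 0$ as $N\to\infty$. The prefactor $\max_j\|f^\dagger_j\|_{S^n}$ does not depend on $N$, and I need to argue it is finite. Because $\psi^n\circ\phi^n$ is an $\|\cdot\|_Q$-orthogonal projection, $\psi^n(\Upsilon)\subset B_R(\Hc_Q)$, so by \eqref{eqregDkG} the derivatives $D^k\Gc^\dagger$ are uniformly bounded on $\psi^n(\Upsilon)$ for every $k\ge 1$. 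Lemma~\ref{thmoepdjddei} applied to $f^\dagger=\varphi^m\circ\Gc^\dagger\circ\psi^n$ then yields $\sup_{U\in\Upsilon}\|D^k f^\dagger(U)\|<\infty$ for every $k$, so each $f^\dagger_j$ is smooth on $\Upsilon$ and in particular lies in the Sobolev space that the $S^n$-hypothesis embeds into $\Hc_{S^n}(\Upsilon)$. Hence $\|f^\dagger_j\|_{S^n}<\infty$ and $T_3\to 0$ as $N\to\infty$.

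I would then take $n,m\to\infty$. By the density of the point sequences $(X_i)\subset\Omega$ and $(Y_j)\subset D$, the fill distances $h_{X^n}$ and $h_{Y^m}$ tend to zero, and since $\omega$ is a modulus of continuity with $\omega(\tau)\to 0$ as $\tau\to 0^+$ and $s>s'$, $t>t'$, both $T_1$ and $T_2$ tend to $0$, yielding \eqref{eqlieredhieuddd}.

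The main obstacle is the argument in Step~1 that $\|f^\dagger_j\|_{S^n}$ is finite: it requires transferring the operator-level smoothness of $\Gc^\dagger$ encoded by the Fr\'{e}chet-derivative bounds \eqref{eqregDkG} into Sobolev regularity of the component functions $f^\dagger_j$ on the finite-dimensional set $\Upsilon$, through Lemma~\ref{thmoepdjddei} and the containment $H^{r'}(\Upsilon)\subset\Hc_{S^n}(\Upsilon)$. Crucially, the resulting bound on $\|f^\dagger_j\|_{S^n}$ need not be uniform in $(n,m)$; the iterated-limit structure of the theorem allows these constants to blow up with $n,m$, as long as they are finite for each fixed pair so that the inner limit $N\to\infty$ can first drive $T_3$ to zero before the outer limit is invoked.
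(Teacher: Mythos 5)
Your proposal is correct and follows essentially the same route as the paper: invoke the quantitative bound of Theorem~\ref{thm:main}, use Lemma~\ref{thmoepdjddei} together with \eqref{eqregDkG} to conclude that each $f^\dagger_j$ is smooth on $\Upsilon = \phi^n(B_R(\Hc_Q))$ and hence lies in $\Hc_{S^n}(\Upsilon)$ via the assumed Sobolev containment, use \eqref{eqjguyguy} to drive the fill distance $h_\Upsilon\to 0$ and kill $T_3$ in the inner limit $N\to\infty$, and finally use density of the collocation points to kill $T_1$ and $T_2$ as $n,m\to\infty$. Your write-up is somewhat more explicit than the paper's (e.g., verifying that $\Upsilon$ is a compact set with Lipschitz boundary, and that $\psi^n(\Upsilon)\subset B_R(\Hc_Q)$ via the projection property of $\psi^n\circ\phi^n$, and emphasizing that the constants need not be uniform in $(n,m)$), but these are elaborations of the same argument, not a different one.
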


\begin{proof}
Following \cite[Chap.~12.1]{owhadi_scovel_2019} define 
the projection $P_n^\U=\psi^n \circ \phi^n$  onto the range of $\psi^n$.
Since the points $X_i$ and $Y_j$ are dense in $\Omega$ and $D$ we have $h_{X^n}\downarrow 0$ as $n\rightarrow \infty$ and
$h_{Y^m}\downarrow 0$ as $m\rightarrow \infty$.
Given $n$, take
$\Upsilon = \phi^n \left( B_R(\mathcal{H}_Q) \right)$. Then \Cref{thmoepdjddei} and \eqref{eqregDkG} imply that
$\bar{f}^{m,n}_j\in H^{r'}(\Upsilon)$ for all $r'\geq 0$. Therefore $\bar{f}^{m,n}_j\in \mathcal{H}_{S^n}(\Upsilon)$. Now
\eqref{eqjguyguy} implies that for any $n$, the fill distance, in $\phi^n \left( B_R(\mathcal{H}_Q) \right)$, between the points $(\phi^n(u_i))_{1\leq i \leq N}$ goes to zero as $N\rightarrow \infty$.
Since the conditions of \Cref{thm:error-analysis-bar-g-and-g} are satisfied, we conclude by taking the limit $N\rightarrow \infty$ in \eqref{eqliedhieuddd} before taking the limit $m,n\rightarrow \infty$.
\end{proof}

\subsection{The effect of the $L_Q$ and $L_K$ preconditioners}\label{sec: cholesky factors}

We conclude this section and our discussion of convergence results, by 
highlighting the importance of the choice of the matrices 
$L^n_Q$ and $L^m_K$ in \eqref{phi-L-Q} and \eqref{varphi-L-K}.
It is clear from the bounds \eqref{eqliedhieuddd} and \eqref{eqleklhdeiudh} that our error estimates depend on the norms of the linear operators $\varphi^m, \psi^n$ and $\chi^m$.
To ensure that those norms do not blow up as $n, m\rightarrow \infty$ we can select the matrices $L^n_Q$ and $L^m_K$ to be the Cholesky factors of the precision matrices obtained from pointwise measurements of the kernels $Q$ and $K$, i.e.,
\begin{equation}\label{eqchLqLK}
L^n_Q (L^n_Q)^T=Q(X^n,X^n)^{-1} \quad \text{and} \quad  L^m_K (L^m_K)^T = K(Y^m,Y^m)^{-1}.
\end{equation}
We now obtain the following proposition.
\begin{Proposition}\label{propiehbdeyuid}
If $\phi^n$ is as in \eqref{phi-L-Q} and $L^n_Q$ as in \eqref{eqchLqLK}, then $\|\phi^n\|= 1$ and $\|\psi^n\|=1$.
If $\varphi^m$ is as in \eqref{eqdiueds2} and $L^m_K$ as in \eqref{eqchLqLK}, then $\|\varphi^m\|= 1$ and $\|\chi^m\|=1$.
\end{Proposition}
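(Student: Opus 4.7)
}
The essential observation is that the Cholesky preconditioner in \eqref{eqchLqLK} is chosen precisely so that the Gram matrix $Q(\phi^n,\phi^n)$ collapses to the identity, and symmetrically for $K(\varphi^m,\varphi^m)$. Once this is in place, $\phi^n$ and $\psi^n$ form an isometric pair in a transparent way: $\psi^n$ becomes a (partial) isometry $\R^n\to\mathcal{H}_Q$ onto the span of the Riesz representers $Q\phi^n_1,\dots,Q\phi^n_n$, and $\phi^n$ becomes its left inverse composed with the $\|\cdot\|_Q$-orthogonal projection onto that span.

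The key computation is the Gram identity. Write $\phi^n_i=\sum_{j}(L_Q^n)_{ij}\delta_{X_j}$. By bilinearity of the duality pairing,
\begin{equation*}
Q(\phi^n,\phi^n)_{ik}=[\phi^n_i,Q\phi^n_k]=\sum_{j,l}(L_Q^n)_{ij}(L_Q^n)_{kl}Q(X_j,X_l),
\end{equation*}
so in matrix form $Q(\phi^n,\phi^n)=L_Q^n\,Q(X^n,X^n)\,(L_Q^n)^T$. Using the Cholesky choice \eqref{eqchLqLK} (read with the transpose convention that makes the preconditioner act on the right on the vector $u(X^n)$), this product reduces to $I_n$. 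The analogous computation gives $K(\varphi^m,\varphi^m)=I_m$.

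From here the two norm equalities are direct. For $\psi^n$, using the standard identity $\|(Q\phi^n)c\|_Q^2=c^T Q(\phi^n,\phi^n)c$ for any $c\in\R^n$ (a consequence of $\<Q\phi^n_k,Q\phi^n_l\>_Q=[\phi^n_k,Q\phi^n_l]$), together with $c=Q(\phi^n,\phi^n)^{-1}U=U$, yields
\begin{equation*}
\|\psi^n(U)\|_Q^2=U^T Q(\phi^n,\phi^n)^{-1}U=|U|^2,
\end{equation*}
so $\|\psi^n\|=1$. For $\phi^n$, by the minmax optimal recovery property recalled in \cite[Ch.~12]{owhadi_scovel_2019}, $\psi^n\circ\phi^n$ is the $\|\cdot\|_Q$-orthogonal projection onto $\mathrm{range}(\psi^n)$; applying the previous display with $U=\phi^n(u)$ gives $|\phi^n(u)|^2=\|\psi^n\circ\phi^n(u)\|_Q^2\leq \|u\|_Q^2$, with equality whenever $u\in\mathrm{range}(\psi^n)$ (for instance, $u=Q(\cdot,X_i)$). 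Hence $\|\phi^n\|=1$. The claims $\|\varphi^m\|=\|\chi^m\|=1$ follow by repeating the argument verbatim with $(Q,X^n,L_Q^n,\phi^n,\psi^n)$ replaced by $(K,Y^m,L_K^m,\varphi^m,\chi^m)$.

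There is no substantive obstacle: the only care required is to align the transpose convention in \eqref{eqchLqLK} with the left/right action of $L_Q^n$ on $u(X^n)$ so that Step~1 genuinely yields $Q(\phi^n,\phi^n)=I_n$, and then to recognize the standard Gram identity for the Riesz representers used in the $\psi^n$ computation.
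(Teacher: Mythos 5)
Your proof is correct and follows essentially the same route as the paper's: both reduce $\|\psi^n\|=1$ to the direct computation $\|\psi^n(U)\|_Q^2=U^T Q(\phi^n,\phi^n)^{-1}U=|U|^2$, and both reduce $\|\phi^n\|=1$ to $|\phi^n(u)|^2=\|\psi^n\circ\phi^n(u)\|_Q^2$ combined with the fact that $\psi^n\circ\phi^n$ is a $\|\cdot\|_Q$-orthogonal projection. Your framing via the Gram identity $Q(\phi^n,\phi^n)=I_n$ is a cleaner way to package the same two computations, and you are right to flag the transpose/ordering convention in \eqref{eqchLqLK} (whether one reads $L_Q^n(L_Q^n)^T$ or $(L_Q^n)^T L_Q^n$, and correspondingly how $L_Q^n$ acts on $u(X^n)$), a point the paper glosses over.
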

\begin{proof}
For $u\in \Hc_Q$, $|\phi^n(u)|^2=u(X^n)^T Q(X^n,X^n)^{-1} u(X^n)=\| \psi^n\circ \phi^n(u)\|_Q^2$. Since $\psi^n\circ \phi^n$ is a projection  \cite[Chap.~12.1]{owhadi_scovel_2019} we deduce that  $\|\phi^n\|=1$.  Using $\psi^n(U')=Q(\cdot,X^n) L^n_Q U'$ leads to
$\|\psi^n(U')\|_Q^2= |U'|^2$ and $\|\psi^n\|=1$. The proof of $\|\varphi^n\|= 1$ and $\|\chi^n\|=1$ is similar.
\end{proof}

We note that although useful for obtaining tighter approximation errors, this particular choice for the matrices $L^n_Q$ and $L^m_K$ is not required for convergence if one first takes the limit $N\rightarrow \infty$ as in \Cref{thm:main2}, which does not put any requirements on the matrices $L^n_Q$ and $L^m_K$ beyond invertibility.

\section{Numerics}\label{sec: Numerical results}
In this section, we present numerical experiments and benchmarks that compare a 
straightforward implementation of our kernel operator learning framework to 
state-of-the-art NN-based techniques. We discuss some implementation details of 
our method in \Cref{sec: implementation considerations} followed by 
the setup of experiments and test problems in \Cref{sec: experimental setup,sec: numerical results}. A detailed discussion of our findings is presented in \Cref{sec: discussion}.

\subsection{Implementation considerations}\label{sec: implementation considerations}
Below we summarize some of the key details in the implementation of our kernel approach for 
operator learning for benchmark examples. Our code to reproduce the experiments can be found in a public repository\footnote{\href{https://github.com/MatthieuDarcy/KernelsOperatorLearning/}{https://github.com/MatthieuDarcy/KernelsOperatorLearning/}}. 
\subsubsection{Choice of the kernel {$\Gamma$}}
Following our theoretical discussions in \Cref{sec: proposed solution,sec: convergence}, we primarily 
take $\Gamma$ to be a diagonal kernel of the form \eqref{eq:diagonal-kernel-recall}. 
This implies that our estimation of $\bar{f}$ can be split into independent problems for each 
of its components $\bar{f}_j$ in the RKHS of the scalar kernel $S$. 
In our experiments, we investigate different choices of $S$ belonging to the families of  
the linear kernel, rational quadratic, and Mat{\'e}rn; see \Cref{app: kernels} for 
detailed expressions of these kernels.  
The rational quadratic kernel has two parameters, the lengthscale $l$ and the exponent $\alpha$. We tuned 
these parameters using standard cross validation or log marginal likelihood maximization over the 
training data (see \cite[p.112]{rasmussen} for a detailed description). 
The Mat\'{e}rn kernel is parameterized by two positive parameters: a smoothness parameter $\nu$ and the length scale $l$. The smoothness parameter $\nu$ controls the regularity of the RKHS and we considered  $\nu \in \big\{\frac{1}{2}, \frac{3}{2}, \frac{5}{2}, \frac{7}{2}, \infty \big\}$. In practice we found that
$\nu = \frac{5}{2}$ almost always had the best performance. For a fixed choice of $\nu$ we tuned the 
length scale $l$ similarly to the rational quadratic kernel. 
We implemented the kernel regressions of the $\bar{f}_j$ and parameter tuning algorithms in scikit-learn for low-dimensional examples and manually in JAX for high-dimensional examples.

\subsubsection{Preconditioning and dimensionality reduction}\label{sec: dim red}
Following \eqref{eqdiueds} and \eqref{eqdiueds2} and the discussion in \Cref{sec: cholesky factors}, we 
consider two preconditioning strategies for our pointwise measurements, i.e., choices of the 
matrices $L_Q$ and $L_K$: (1) we consider the Cholesky factors of the underlying covariance 
matrices as in \eqref{eqchLqLK}; (2) we use PCA projection matrices 
of the input and output functions computed  from the training data. 
 We truncated the PCA expansions to preserve $(0.90, 0.95, 0.99)$ of the variance.  The use of PCA  
 in learning mappings between infinite dimensional spaces was proposed in \cite{krischer1993model} and  recently revisited in \cite{pca, Hesthaven-PCA-Net}. 



\subsection{Experimental setup}\label{sec: experimental setup}

We compare the test performance of our method with different choices of the kernel $S$ of increasing complexity using the examples in \cite{de2022cost} and \cite{DeepONet-FNO} and their reported test relative $L^2$ loss (see \cref{eq:loss} below). 
We use the data provided by these papers for the training set and the test set\footnote{See \url{https://github.com/Zhengyu-Huang/Operator-Learning} and \url{https://github.com/lu-group/deeponet-fno}, respectively, for the data}. 
Both articles provide performance comparisons between different variants of Neural Operators (most notably FNO and DeepONet) on a variety of PDE operator learning tasks, where the data is sampled independently from a distribution $(Id, \mathcal{G}^\dagger)^\#\mu$ supported on $\mathcal U \times \mathcal V$, where $\mu$ is a specified (input) distribution on $\mathcal U$. 
The example problems are outlined in detail in \Cref{sec: numerical results}; a summary of 
the specific PDEs, problem type, and distribution $\mu$ for each test is given in \cref{table:examples}. 
In some instances the train-test split of the data was not clear from the available online repositories 
in which case we re-sampled them from the assumed distribution $\mu$. The datasets from \cite{DeepONet-FNO} contain 1000 training data-points per problem (which we will refer to as the \say{low-data regime}), whereas the datasets from \cite{de2022cost} contain 20000 training data-points (which we will refer to as the \say{high-data} regime). We make this distinction because the complexity of kernel methods, unlike that of neural networks, may depend on the number of data-points.  

Following the suggestion of
 \cite{de2022cost} we not only compare test errors and training complexity but also the  complexity of operator learning  at the inference/evaluation stage in \Cref{subsec:complex}. For the examples in \cite{de2022cost}, we investigate the accuracy-complexity trade-off of our method against the reported values of that article.

\begin{table}[H]
    \centering
    \begin{tabular}{|l|l|l|l|c|}
    \hline
        \textbf{Equation} & \textbf{Input} & \textbf{Output} & \textbf{Input Distribution $\mu$}  \\
        \hline
        Burger's & Initial condition & Solution at time $T$ & Gaussian field (GF)  \\
        \hline
            Darcy problem & Coefficient & Solution  & Binary function of GF   \\
            \hline
        Advection I & Initial condition & Solution at time $T$  & Random square waves   \\
        \hline
        Advection II & Initial condition & Solution at time $T$ & Binary function of GF  \\
        \hline
        Helmholtz & Coefficient & Solution & Function of Gaussian field \\
        \hline
        Structural mechanics & Initial force & Stress field & Gaussian field  \\
        \hline
        Navier Stokes & Forcing term & Solution at time $T$ & Gaussian field \\
        \hline
    \end{tabular}
    \caption{Summary of datasets used for benchmarking. The first three examples were considered in \cite{DeepONet-FNO}, and the last four were taken from \cite{de2022cost}.}
    \label{table:examples}
\end{table}

\subsubsection{Measures of accuracy}\label{sec: accuracy}

As our first performance metric we 
 measured the  accuracy of models  by a relative loss on the output space $\mathcal V$:
\begin{equation}
    \mathcal{R}( \Gc ) = \mathbb{E}_{u \sim \mu} \Bigg[\frac{\norm{\Gc^\dagger(u) - \Gc(u)}_{\mathcal V}}{\norm{\Gc^\dagger(u)}_{\mathcal V}} \Bigg]
\end{equation}
where $\Gc^\dagger$ is true operator and $\Gc$ is a candidate  operator. Following previous works, we often took $\norm{u}_{\mathcal V} = \norm{u}_{L^2} := (\int u(x)^2dx)^{\frac{1}{2}}$, which in turn is discretized using the trapezoidal rule. In practice, we do not have the access to the underlying probability measure $\mu$ and we compute the empirical loss on a withheld test set:
\begin{equation}\label{eq:loss}
    \mathcal{R}_N(\Gc) = \frac{1}{N} \sum_{n=1}^N \Bigg[\frac{\norm{\Gc^\dagger(u^n) - \Gc(u^n)}_{\mathcal V}}{\norm{\Gc^\dagger(u^n)}_{\mathcal V}} \Bigg], \qquad u^i \sim \mu.
\end{equation}

\subsubsection{Measures of complexity}\label{subsec:complex}
For our second performance metric we considered the complexity of operator 
learning algorithms at the inference stage (i.e., evaluating the 
learned operator).
Complexity at inference time is the main metric used in \cite{de2022cost} to compare numerical methods for operator learning. The motivation is that training of the methods can be performed in an offline fashion, and therefore the cost per test example dominates in the limit of many test queries. In particular, they compare the online evaluation costs of the neural networks by computing the requisite floating point operations (FLOPs) per test example. We adopt this metric as well for the methods not based on neural networks that we develop in this work, and we compare, when available, the cost-accuracy tradeoff with the numbers reported in \cite{de2022cost}. We computed the FLOPs with the same assumptions as in the original work: a matrix-vector product where the input vector is in $\mathbb{R}^n$ and the output vector is in $\mathbb{R}^m$ amounts to $m(2-1)$ flops, and non-linear functions with $n$-dimensional inputs (activation functions for neural networks, kernel computations for kernel methods)
are assumed to have  cost $ \mathcal{O}(n)$. 

\remark[Training complexity]{
While the inference complexity of a model  eventually dominates the cost of training 
during applications, the training cost cannot be ignored since the allocated 
computational resources during this stage may still be limited and the resulting 
errors will have a profound impact on the quality and performance of the 
learned operators. Therefore numerical methods in which the offline data assimilation step is cheaper, faster, and more robust will always be preferred. Computing the exact number of FLOPs at training time is difficult to estimate for NN methods, as it depends on the optimization algorithms used, the hyperparameters and the optimization over such hyperparameters, among many other factors. Therefore in this work we limit the training complexity evaluation to the qualitative observation that
kernel methods provided in this work are significantly simpler at training time, as they have no NN weights, they do not require the use of stochastic gradient descent, and have few or no hyperparameters which can be tuned using standard methods such as grid search or gradient descent in a low-dimensional 
space.
}

\subsection{Test problems and qualitative results}\label{sec: numerical results}

Below we outline the setup of each of our  benchmark problems.
In all cases, $\mathcal U$ and $\mathcal V$ are spaces of real-valued functions with input domains $\Omega, D\subset \mathbb{R}^k$ for $k = 1$ or $2$. Whenever $\Omega = D$, we simply write $\mathcal{D}$ for both.

\subsubsection{Burger's equation}\label{sec: burger}
Consider the one-dimensional Burger's equation:
\begin{equation}\label{eq:burgers}
\begin{aligned}
    \frac{\partial w}{\partial t } + w \frac{\partial w}{\partial x} &= \nu \frac{\partial^2 w}{\partial x^2}, && (x, t) \in (0,1) \times (0,1 ], \\
    w(x, 0) &= u(x), && x \in (0,1)
\end{aligned}
\end{equation}
with $\mathcal{D} = (0,1)$, and periodic boundary conditions. The viscosity parameter $\nu$ is set to  $0.1$. We learn the operator mapping the initial condition $u$ to 
$v = w(\cdot, 1)$,
the solution at time $t = 1$, i.e., $\mathcal{G}^\dagger: w(\cdot,0) 
\mapsto w(\cdot,1).$

The training data is generated by sampling the initial condition $u$ from a GP with a Riesz kernel, denoted by $\mu = \mathcal{GP}(0, 625(-\Delta  + 25I )^{-2}))$. As in \cite{DeepONet-FNO}, we used a spatial resolution with 128 grid points to represent the input and output functions, and used 1000 instances for training and 200 instances for testing. \Cref{fig:BurgersB} shows an example of training input and output pairs as well as a test example along with its pointwise error.

\begin{figure}[htp]%
    \centering
    \subfloat[\centering \footnotesize Training input]{{\includegraphics[width=0.195\columnwidth]{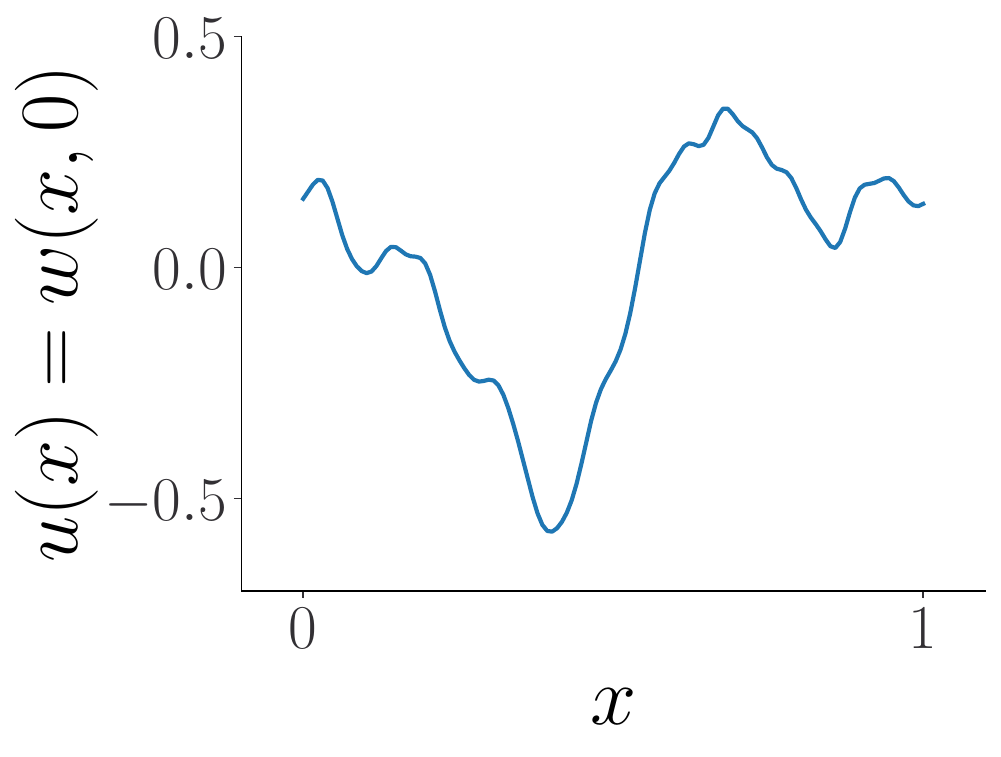} }}%
    \subfloat[\centering \footnotesize Training output]{{\includegraphics[width=0.195\columnwidth]{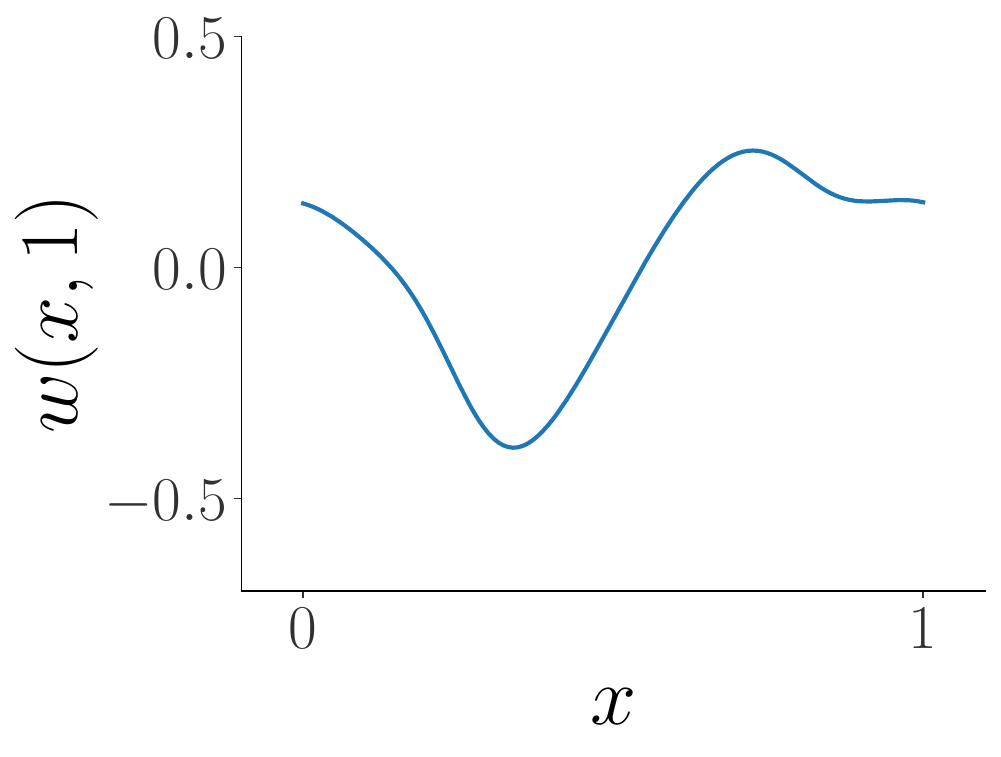} }}%
    \subfloat[\centering \footnotesize True test]{{\includegraphics[width=0.195\columnwidth]{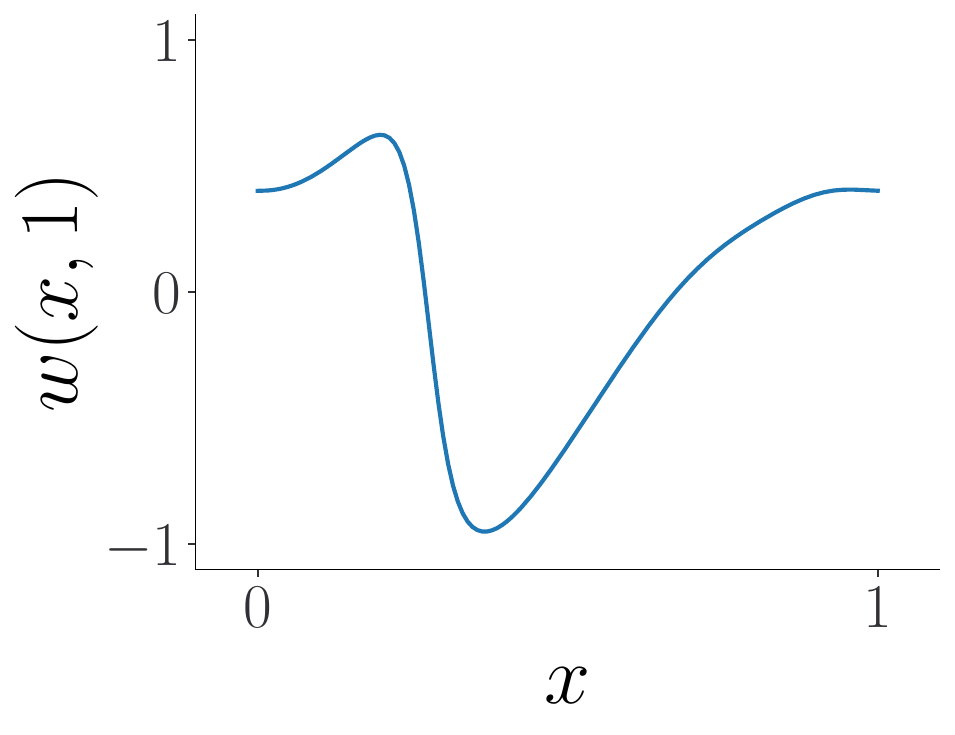}}}%
    \subfloat[\centering \footnotesize Predicted test]{{\includegraphics[width=0.195\columnwidth]{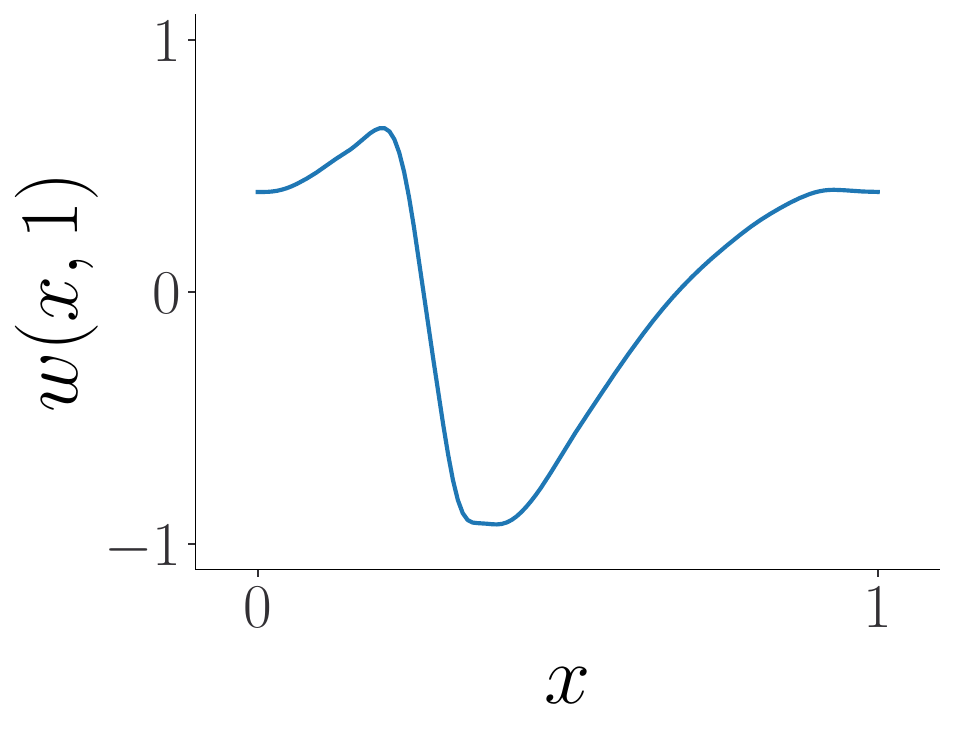}}}%
    \subfloat[\centering \footnotesize Pointwise error]{{\includegraphics[width=0.195\columnwidth]{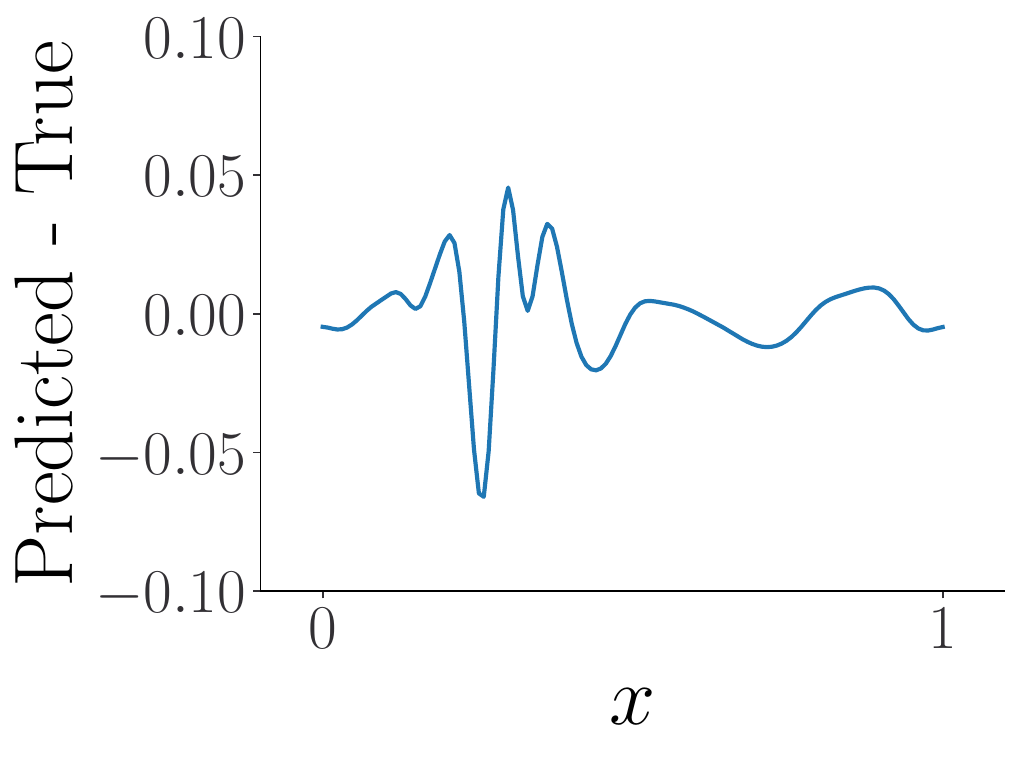}}}%
    \caption{Example of training data and test prediction and pointwise errors for the Burger's equation \eqref{eq:burgers}.}%
    \label{fig:BurgersB}%
\end{figure}


\subsubsection{Darcy flow}\label{sec: darcy problem}
Consider the two-dimensional Darcy flow problem \eqref{eq:Darcy-1D}.
Recall that in this example, we are interested in learning the mapping from the permeability field $u$ to the solution $v$ and the source term $w$ is assumed 
to be fixed, hence $\mathcal{D} \equiv \Omega  = (0,1)^2$ and 
$\Gc^\dagger: u \mapsto v$.
The coefficient $u$ is sampled by setting $u \sim \log \circ h_\sharp \mu$ where $\mu = \mathcal{GP}(0, (-\Delta + 9I)^{-2})$ is a GP and $h$ is binary function mapping positive inputs to $12$ and negative inputs to $3$. The resulting permeability/diffusion coefficient $e^u$ is therefore piecewise constant. As in \cite{DeepONet-FNO}, we use a discretized grid of resolution $29 \times 29$, with the data generated by the MATLAB PDE Toolbox. We use 1000 points for training and 200 points for testing. \Cref{fig: darcy_combined_figure} shows an example of training input and output of the map $\mathcal{G}^\dagger$, and an example 
of predictions along with pointwise error at the test stage.

\subsubsection{Advection equations (I and II)}\label{sec: advection 1}

Consider the one-dimensional advection equation:
\begin{equation}\label{eq:advection}
\begin{aligned}
    \frac{\partial w}{\partial t } + \frac{\partial w}{\partial x} &= 0  &&x \in (0,1), t \in (0,1 ] \\
    w(x, 0) &= u(x) &&x \in (0,1)
\end{aligned}
\end{equation}
with $\mathcal D = (0,1)$ and periodic boundary conditions.
Similar to the example for Burgers' equation, we learn the 
mapping from the initial condition $u$ to 
$v = w(\cdot, 0.5)$, the solution at $t=0.5$, i.e., 
$\Gc^\dagger: w(\cdot, 0) \mapsto w(\cdot, 0.5)$.


This problem was considered in \cite{DeepONet-FNO, de2022cost} with different distributions $\mu$ for the initial condition. We will show in the following section how these different distributions lead to different performances. In \cite{DeepONet-FNO}, henceforth referred to as Advection I, the initial condition is a square wave centered at $x = c$ of width $b$
and height $h$:
\begin{equation}
    u(x) = h \bold{1}_{\{c - \frac{b}{2} ,c + \frac{b}{2} \}},
\end{equation}
where the parameters $(c, b, h) \sim \mathcal{U}([0.3, 0.7] \times [0.3, 06] \times [1,2])$.
In \cite{de2022cost}, henceforth referred to as Advection II, the initial condition is
\begin{equation}
    u = -1 + 2 \bold{1}\{\tilde{u}_0 \geq 0\}
\end{equation}
where $\tilde{u}_0 \sim \mathcal{GP}(0, (-\Delta+3^2I)^{-2})$.

For Advection I, the spatial grid was of resolution 40, and we used 1000 instances for training and 200 instances for testing. For Advection II, the resolution was of 200 and we used 20000 training and test instances, 
following \cite{de2022cost}.

\Cref{fig:Adv1B,fig:Adv2A} show an example of training input and output 
for Advection the I  and II problems, respectively. Observe that the functional samples from the distribution in Advection I will have exactly two discontinuities almost surely, but the samples for Advection II can  have many more jumps. We observe that prediction is challenging around discontinuities, hence Advection II is a significantly harder problem (across all benchmarked methods) than Advection I. \Cref{fig:Adv1B,fig:Adv2A}  also show an instance of a test sample, along with 
a prediction and the pointwise errors.

\begin{figure}[htb]%
    \centering
    \subfloat[\centering Training input]{{\includegraphics[width=0.195\columnwidth]{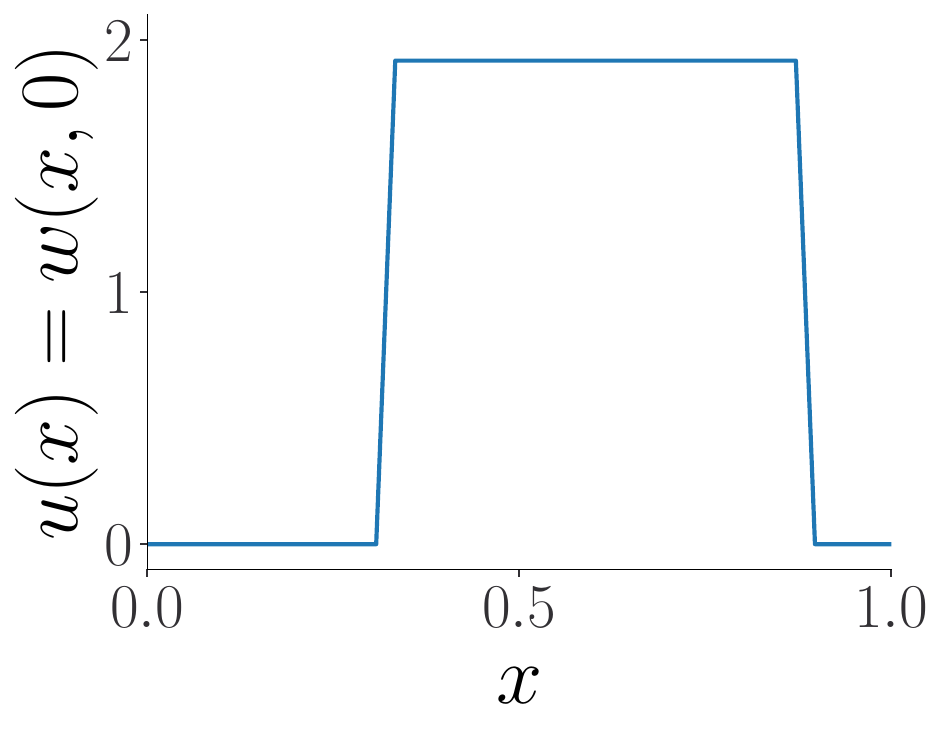} }}%
    \subfloat[\centering Training output]{{\includegraphics[width=0.195\columnwidth]{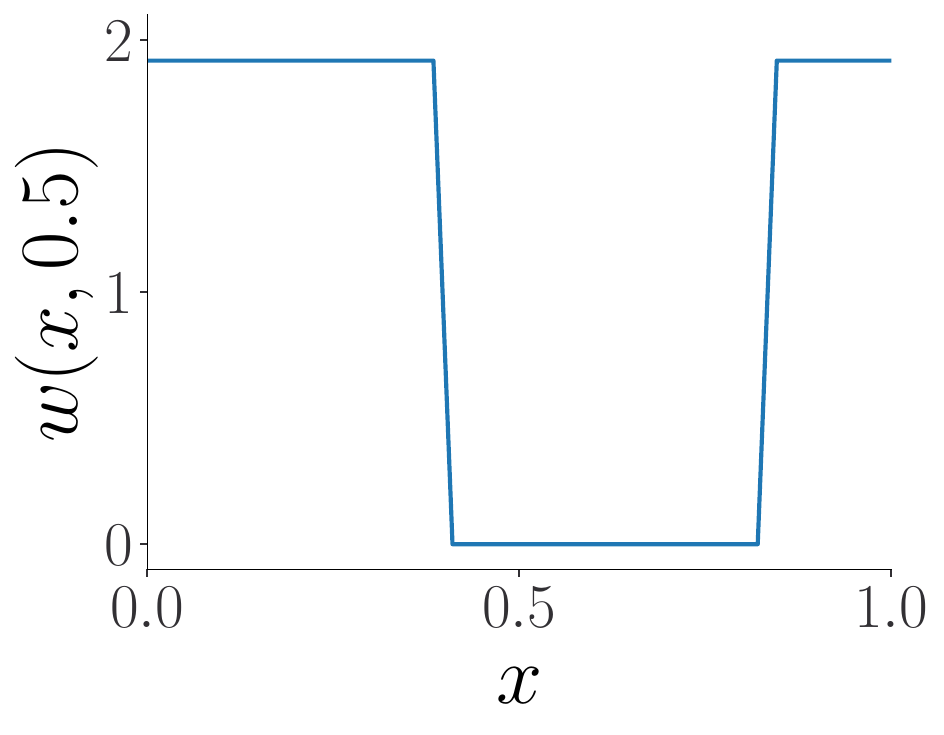} }}%
    \subfloat[\centering True test]{{\includegraphics[width=0.195\columnwidth]{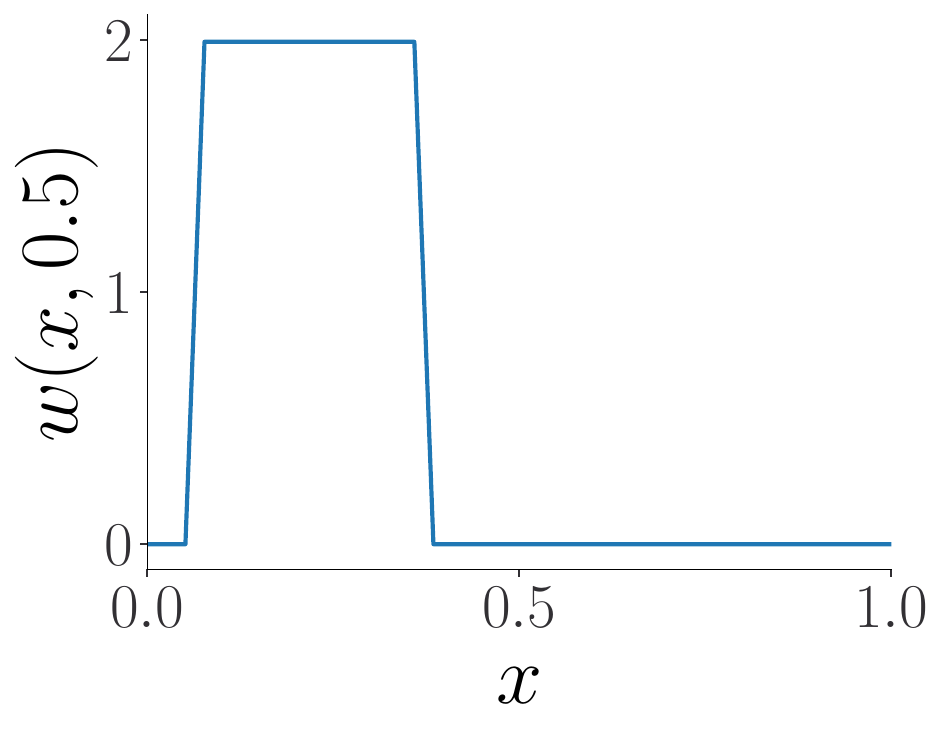}}}%
    \subfloat[\centering Predicted test]{{\includegraphics[width=0.195\columnwidth]{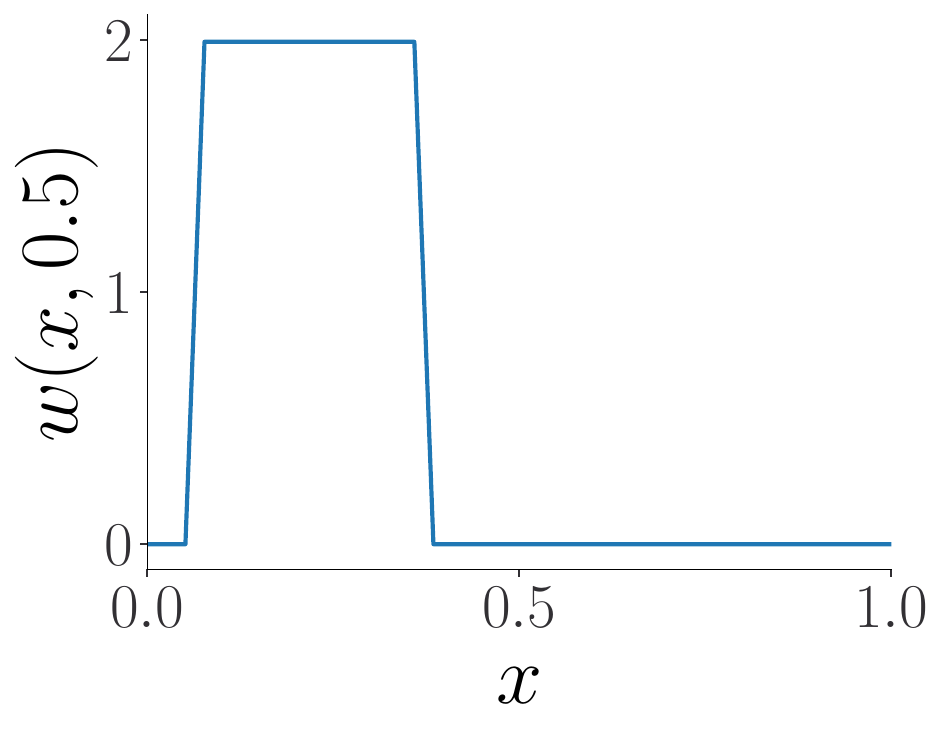}}}%
    \subfloat[\centering Pointwise error]{{\includegraphics[width=0.195\columnwidth]{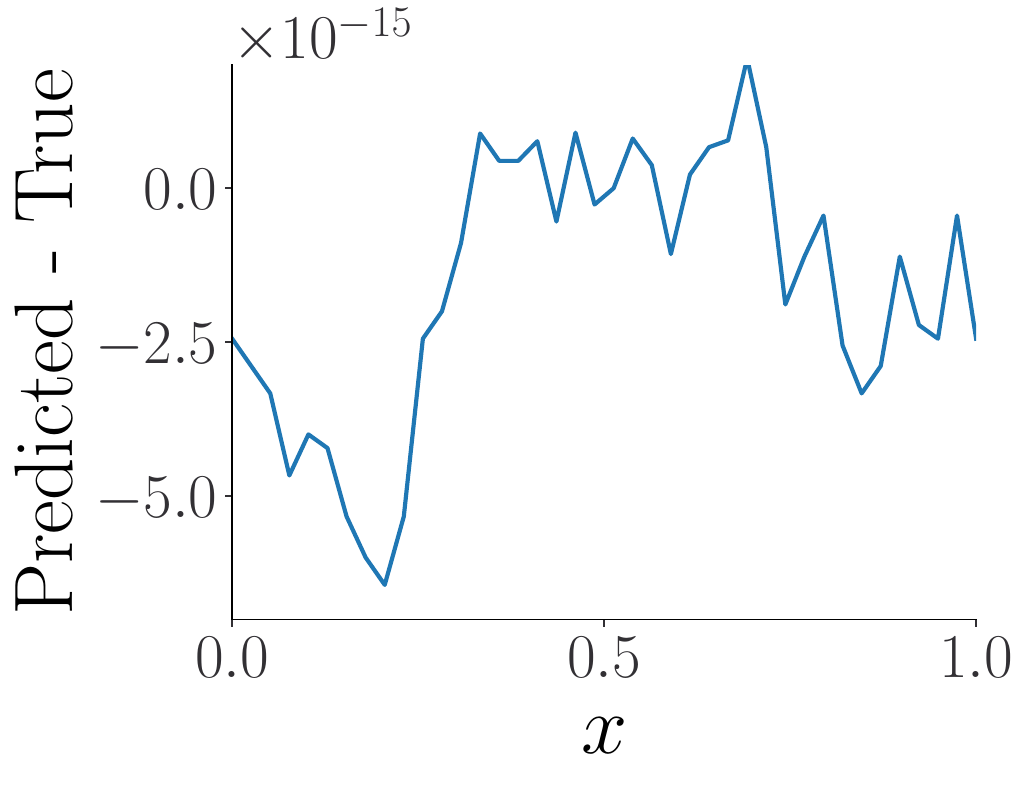}}}%
    \caption{ Example of training data and test prediction and pointwise errors for the Advection problem \eqref{eq:advection}-I.}
    \label{fig:Adv1B}%
\end{figure}

\begin{figure}[htb]%
    \centering
    \subfloat[\centering Training input]{{\includegraphics[width=0.195\columnwidth]{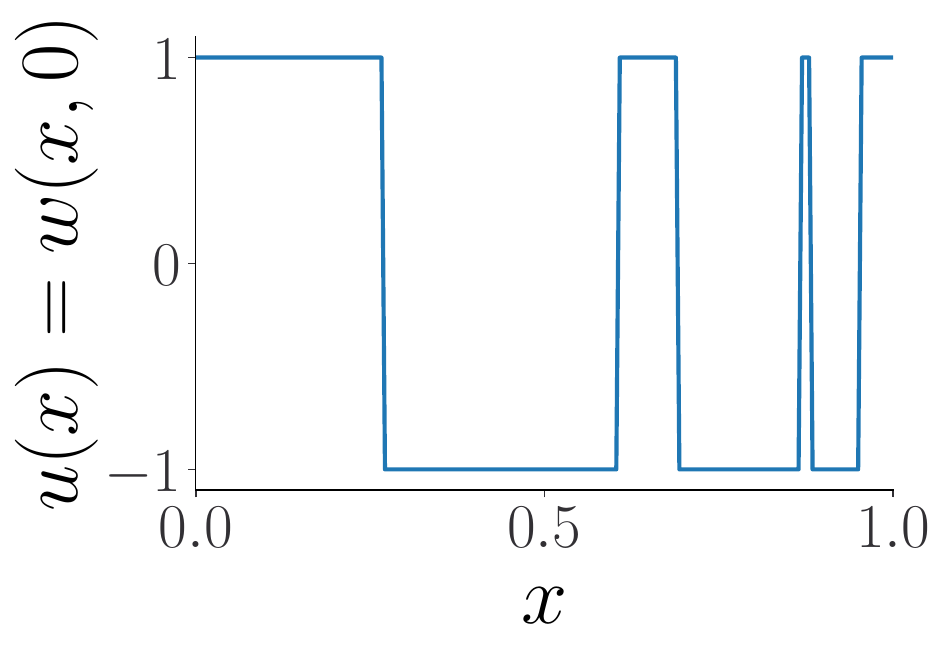} }}%
    \subfloat[\centering Training output]{{\includegraphics[width=0.195\columnwidth]{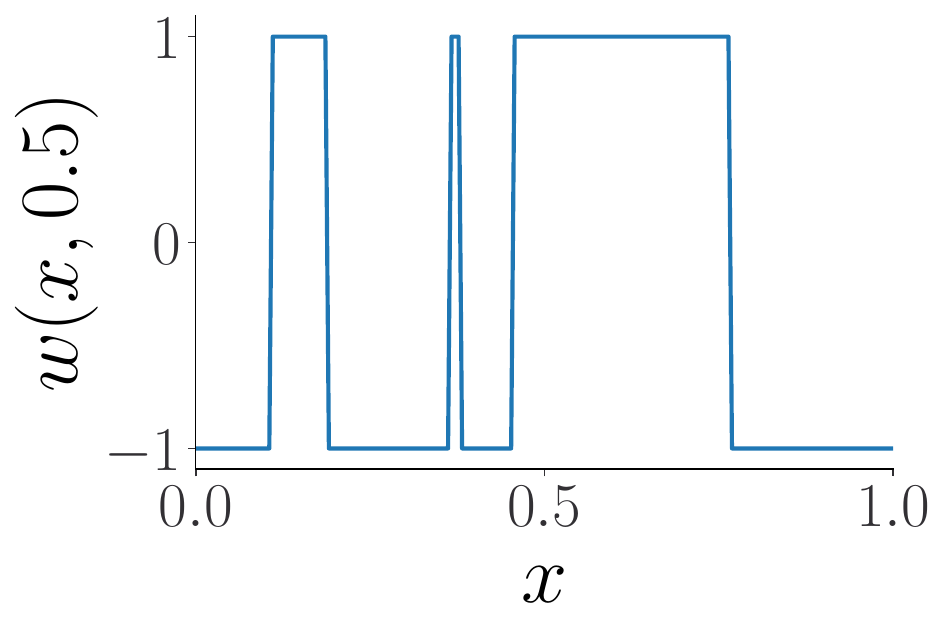} }}%
    \subfloat[\centering True test]{{\includegraphics[width=0.195\columnwidth]{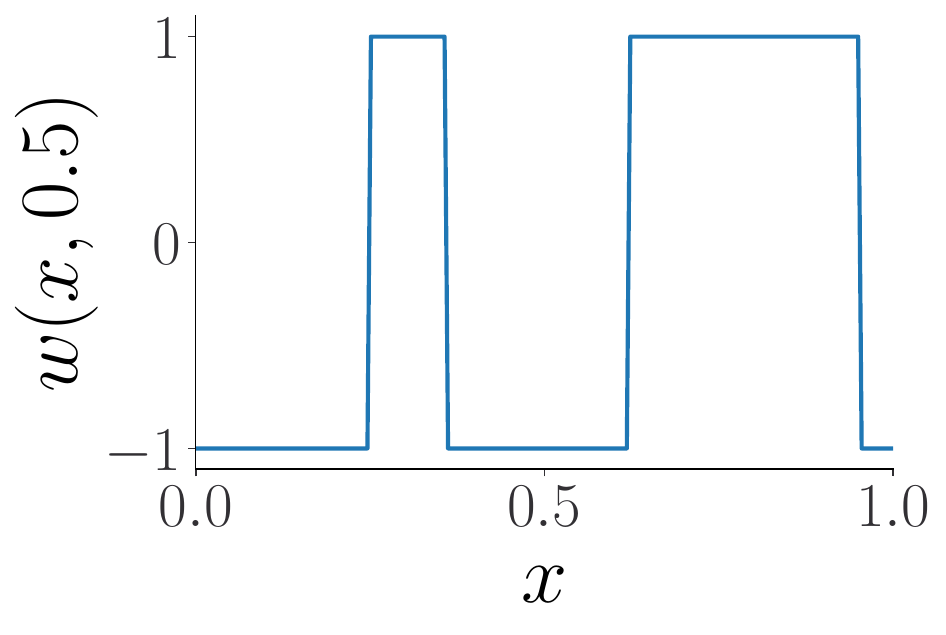}}}%
    \subfloat[\centering Predicted test]{{\includegraphics[width=0.195\columnwidth]{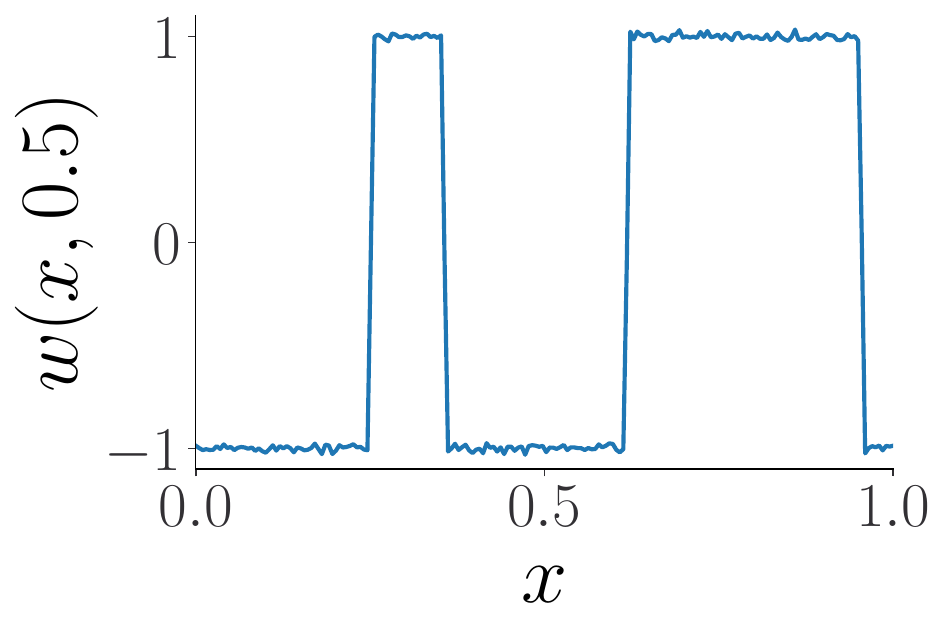}}}%
    \subfloat[\centering Pointwise error]{{\includegraphics[width=0.195\columnwidth]{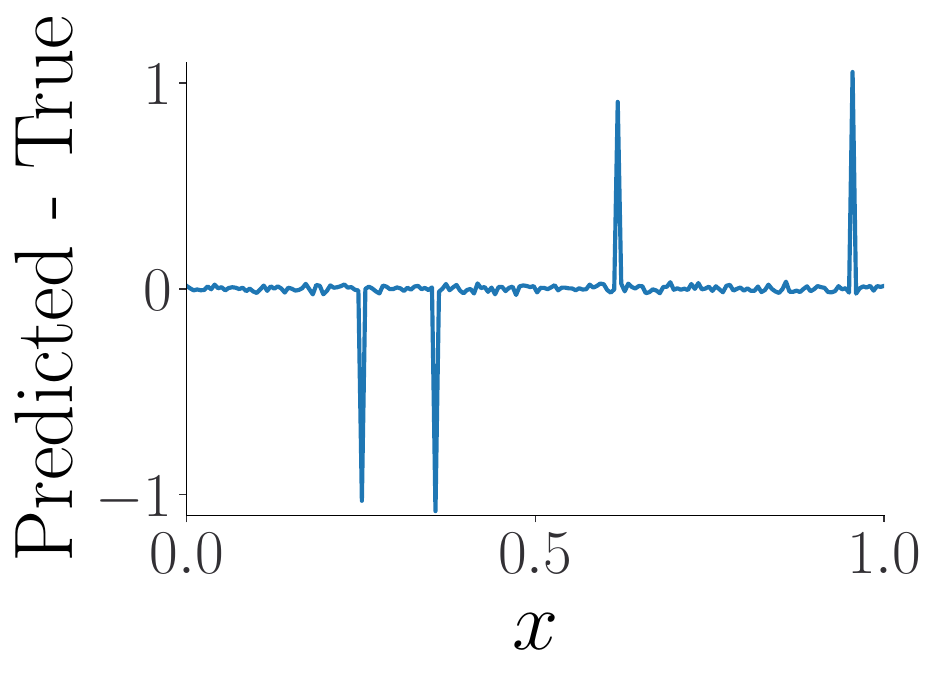}}}%
     \caption{Example of training data and test prediction and pointwise errors for the Advection problem \eqref{eq:advection}-II.}%
    \label{fig:Adv2A}%
\end{figure}




\subsubsection{Helmholtz's equation}\label{sec: helmholtz}
 For a given frequency $\omega$ and wavespeed field $u: \mathcal{D} \to \mathbb R$,
 with $\mathcal{D} = (0, 1)^2$,
 the excitation field $v: \mathcal{D}\to \mathbb R$ solves
\begin{equation}  \label{eq: helmholtz}
\begin{aligned}
    &\left (-\Delta - \frac{\omega^2}{u^2(x)} \right )v = 0, \quad x \in (0,1)^2 \\
   & \frac{\partial v}{\partial n} = 0, \quad x \in \{0, 1\}\times[0,1] \cup [0,1] \times \{0\} \quad \text{and} \quad 
    \frac{\partial v}{\partial n} = v_N, \quad x \in [0,1]\times \{1\}
\end{aligned}
\end{equation}
In the results that follow, we take $\omega = 10^3$, $v_N = \bold{1}_{\{0.35 \leq x \leq 0.65\}}$, and we aim to learn the map $\Gc: u \mapsto v$, i.e., the 
mapping from the wavespeed field to the excitation field.
The distribution $\mu$ is specified as the law of 
$u(x) = 20 + \tanh(\tilde{u}(x))$, where $\tilde{u}$ is drawn from the 
GP, $\mathcal{GP}(0,  (-\Delta +3^2I)^{-2})$. The training and test data were generated by solving \eqref{eq: helmholtz} with a Finite Element Method on a discretization of size $100\times100$ of the unit square. \Cref{fig:HellA} shows an example of training input and output, a test prediction, and pointwise errors.
 
\begin{figure}[htp]%
    \centering
    \subfloat[\centering Training input]{{\includegraphics[width=0.18\columnwidth]{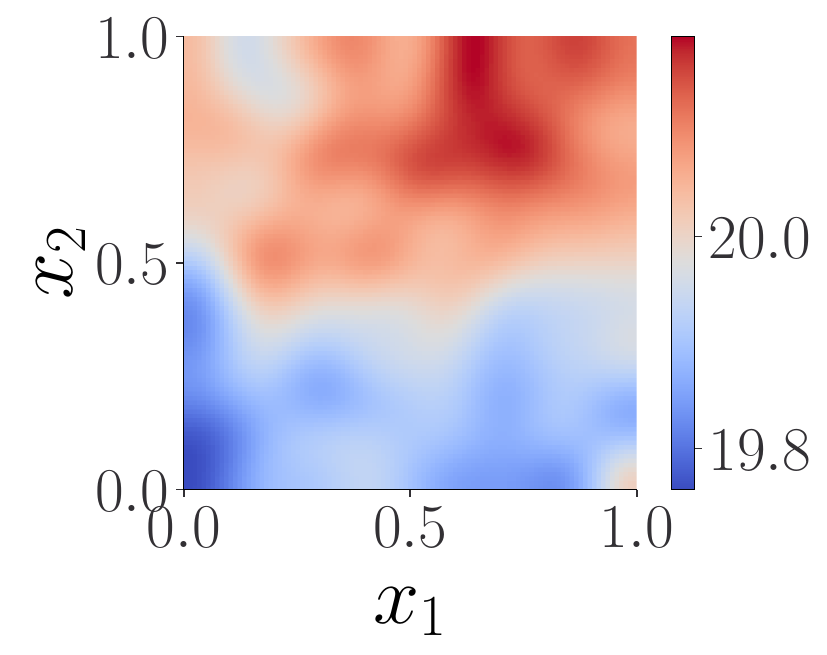} }}%
    \subfloat[\centering Training output]{{\includegraphics[width=0.2\columnwidth]{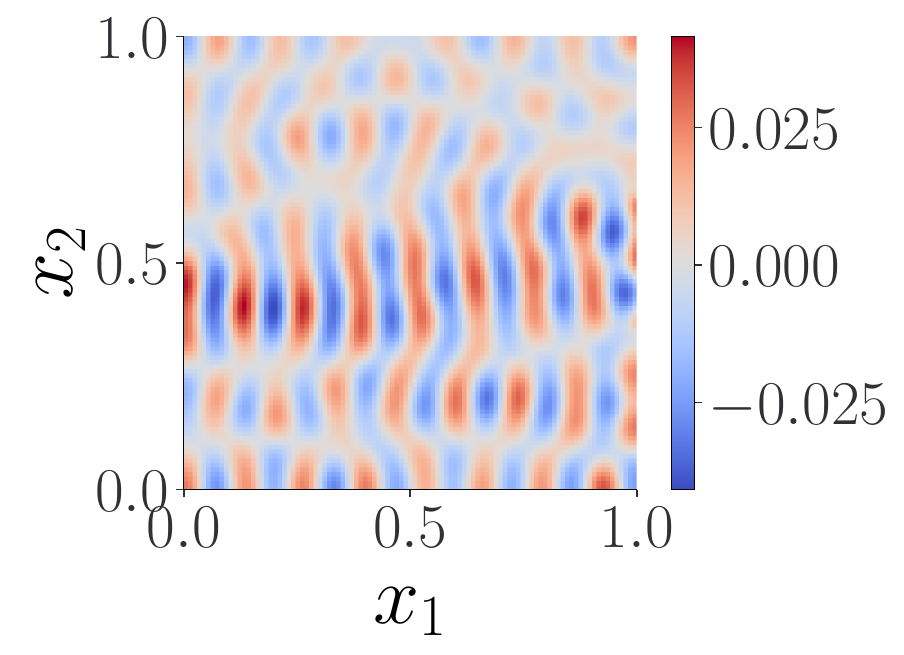} }}%
    \subfloat[\centering True test]{{\includegraphics[width=0.2\columnwidth]{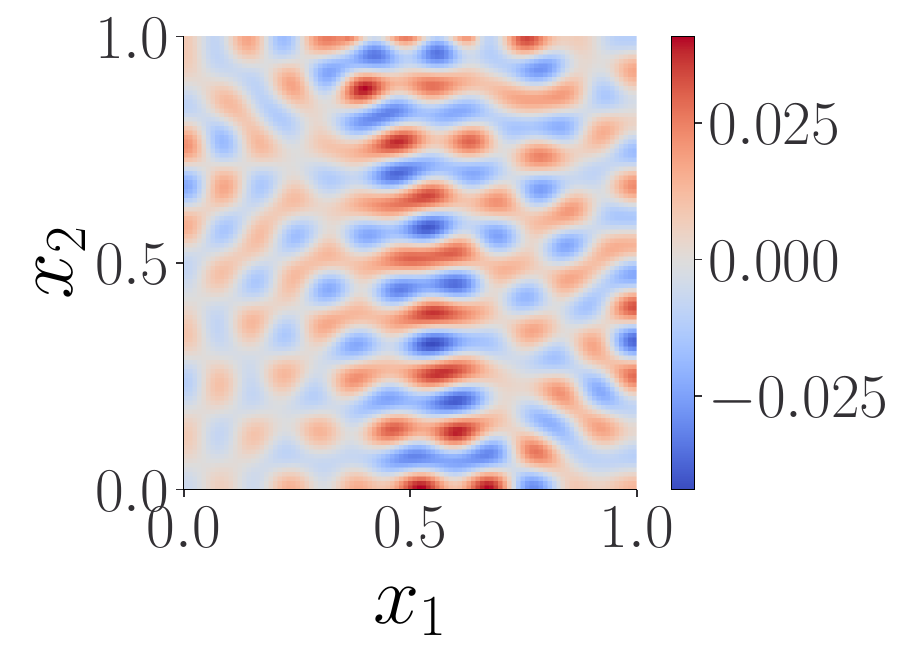}}}%
    \subfloat[\centering Predicted test]{{\includegraphics[width=0.2\columnwidth]{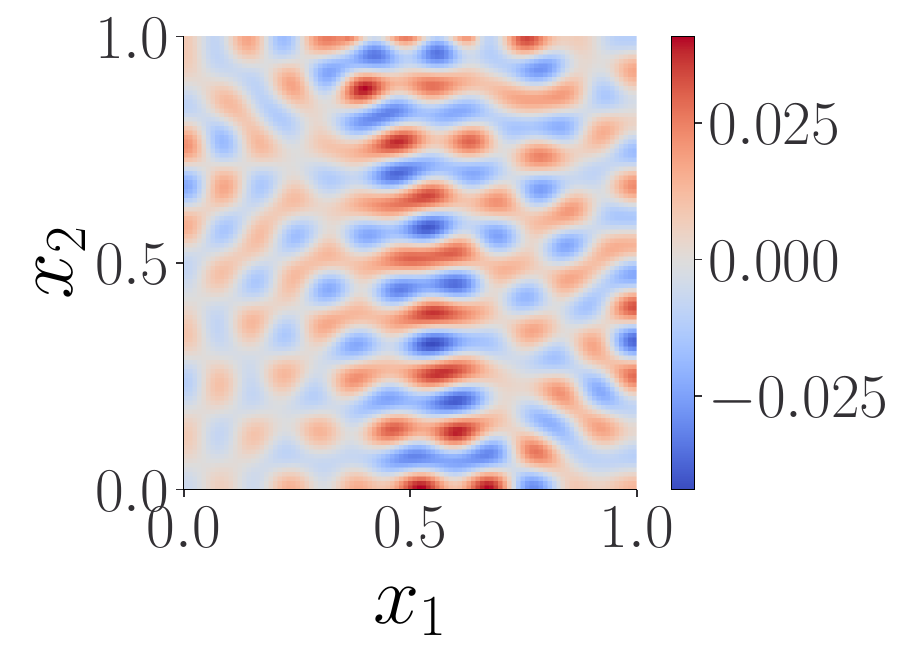}}}%
    \subfloat[\centering Pointwise error]{{\includegraphics[width=0.182\columnwidth]{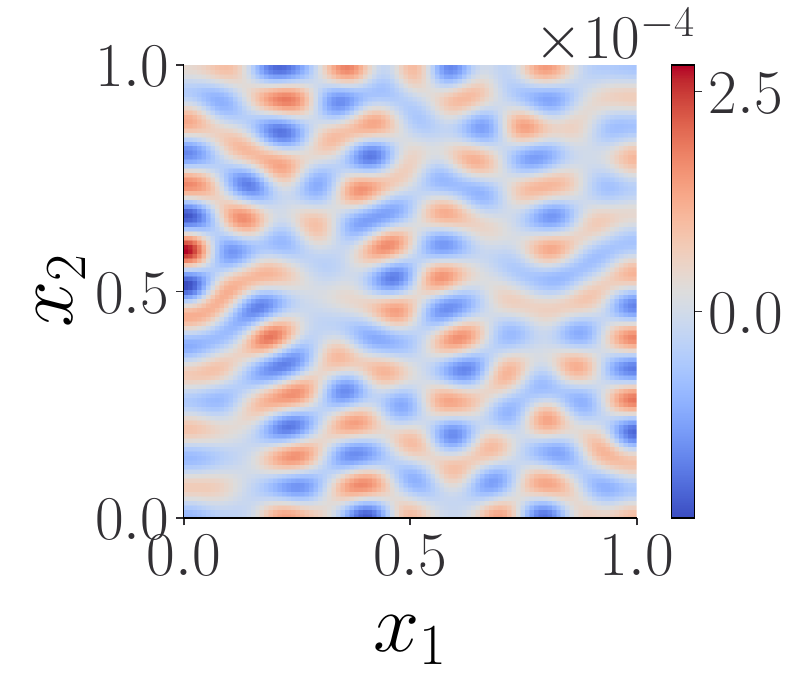}}}%
    \caption{Example of training data and test prediction and pointwise errors for the Helmholtz problem \eqref{eq: helmholtz}.}%
    \label{fig:HellA}%
\end{figure}


 \subsubsection{Structural mechanics}\label{sec: structural eq}
We let $\Omega = [0,1]$ $D = [0,1]^2$, the equation that governs the displacement vector $w$ in an elastic solid undergoing infinitesimal deformations is
\begin{equation}\label{eq:structural-mechanics}
    \begin{aligned}
    \nabla \cdot \sigma = 0 \quad \text{in } (0,1)^2, \qquad 
    w  = \bar{w}, \quad \text{on } \Gamma_w,  \qquad 
 \nabla \cdot n = u \quad \text{on } \Gamma_u,
    \end{aligned}
\end{equation}
where the boundary $\partial D$ is split in $[0,1]\times{1} = \Gamma_t$ (the part of the boundary subject to stress) and its complement $\Gamma_u$.

The goal is to learn the operator that maps the one-dimensional load $u$ on $\Gamma_u$ to the two-dimensional von Mises stress field $v$ on $\Omega$, i.e., 
$\Gc: u \mapsto v$.
Here the distribution $\mu$ is $\mathcal{GP}(100, 400^2(-\Delta+3^2 I)^{-1})$, with $\Delta$ being the Laplacian subject to homogeneous Neumann boundary conditions on the space of zero-mean functions. The function $v$ was obtained by a finite element code, see \cite{de2022cost} for implementation details and the constitutive model used. \Cref{fig:StrucA} shows an example of training input and outputs, a test prediction, and pointwise errors.

\begin{figure}[htp]%
    \centering
    \subfloat[\centering Training input]{{\includegraphics[width=0.195\columnwidth]{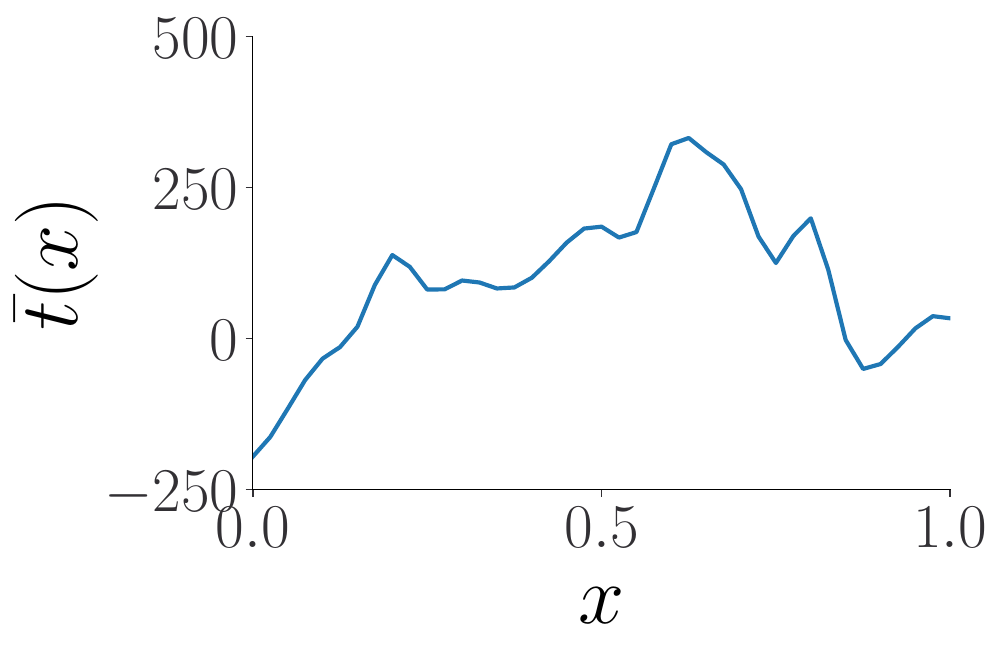} }}%
    \subfloat[\centering Training output]{{\includegraphics[width=0.195\columnwidth]{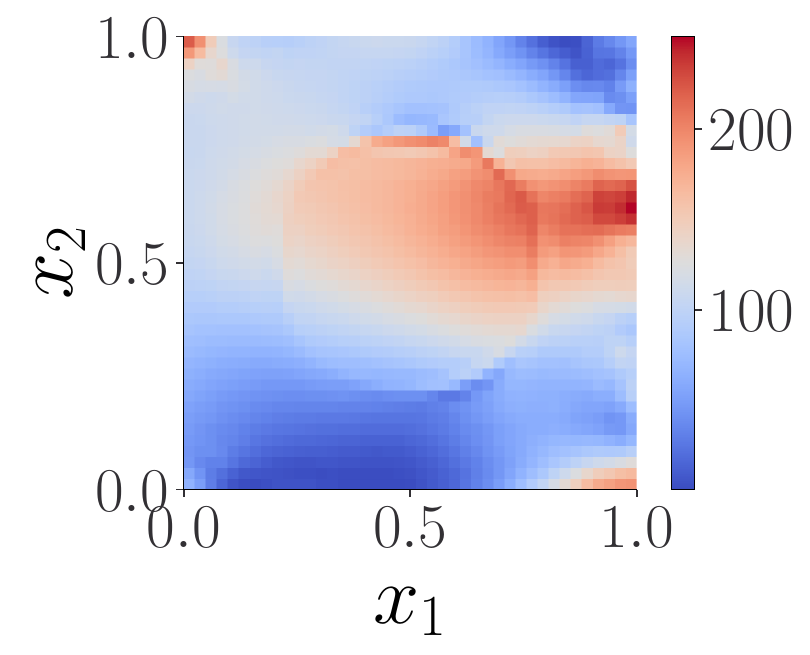} }}%
    \subfloat[\centering True test]{{\includegraphics[width=0.195\columnwidth]{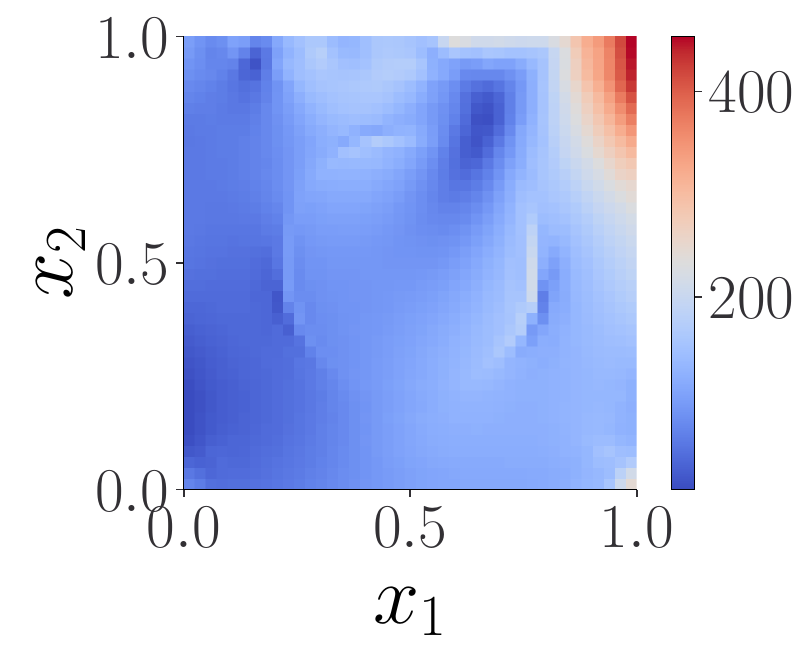}}}%
    \subfloat[\centering Predicted test]{{\includegraphics[width=0.195\columnwidth]{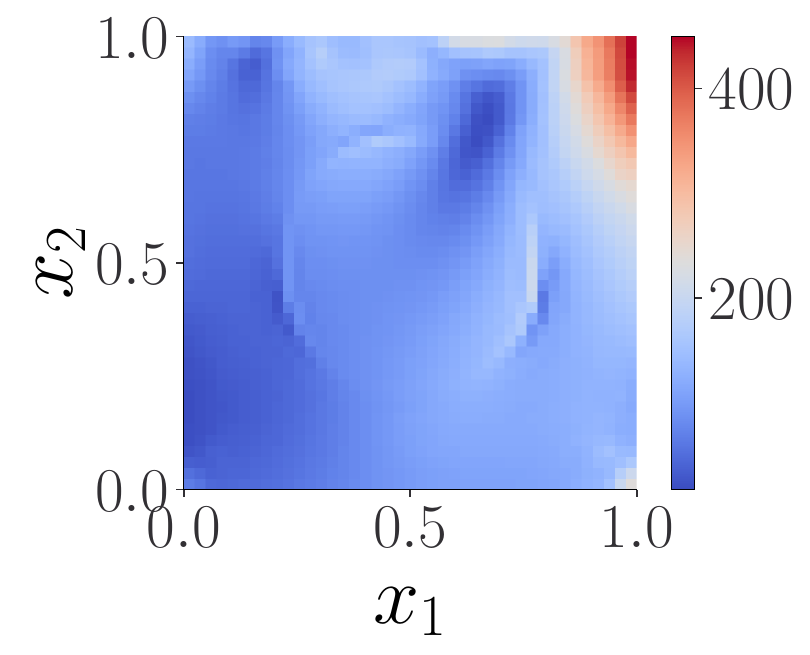}}}
    \subfloat[\centering Pointwise error]{{\includegraphics[width=0.185\columnwidth]{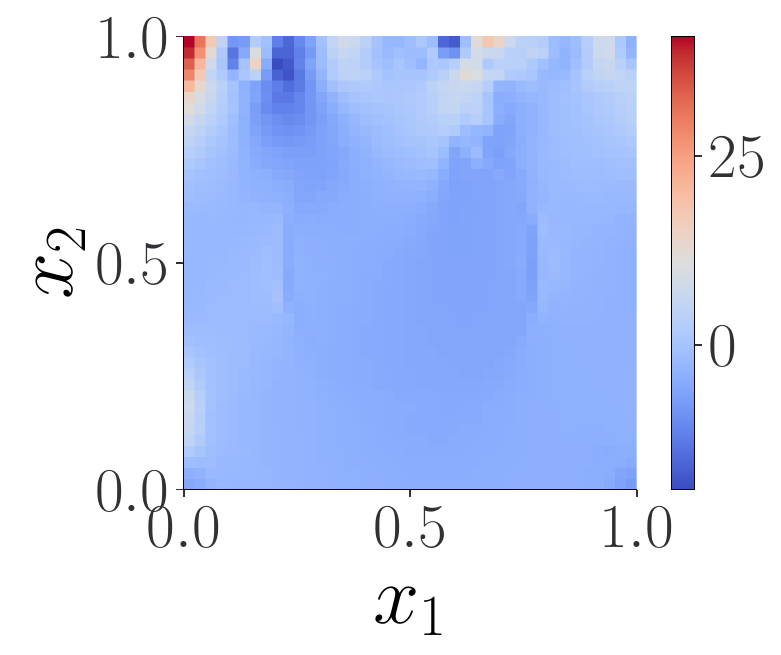}}}%
     \caption{Example of training data and test prediction and pointwise errors for the Structural Mechanics problem \eqref{eq:structural-mechanics}.}
    \label{fig:StrucA}%
\end{figure}


 
\subsubsection{Navier-Stokes equations}\label{sec: navier stokes}
Consider the vorticity-stream $(\omega, \psi)$ formulation of the incompressible Navier-Stokes equations:
\begin{equation}\label{eq:Navier-Stokes}
    \begin{aligned}
    \frac{\partial \omega}{\partial t} + (c \cdot \nabla)\omega - \nu \Delta \omega &= u, \quad
    \omega &= -\Delta\psi, \quad 
    \int_D \psi &= 0, \quad 
    c &= \left(\frac{\partial \psi}{\partial x_2}, -\frac{\partial \psi}{\partial x_1} \right)
    \end{aligned}
\end{equation}
where $\mathcal{D} = [0,2\pi]^2$,  periodic boundary conditions are considered 
and the initial condition $w(\cdot, 0)$ is fixed.
Here we are interested in the mapping from the forcing term $u$ to $v= \omega(\cdot, T)$, the 
vorticity field at a given time $t = T$, i.e., $\Gc^\dagger: u \mapsto \omega(\cdot, T)$.

The distribution $\mu$ is 
$\mathcal{GP}(0, (-\Delta +3^2 I)^{-4})$. The viscosity $\nu$ is fixed and equal to $0.025$, and the equation is solved on a $64\times 64$ grid with a pseudo-spectral method and Crank-Nicholson time integration; see \cite{de2022cost} for further implementation details. \Cref{fig:NSA} shows an example of input and output in the test set, along with an example of test prediction and pointwise errors.
\begin{figure}[htb]%
    \centering
    \subfloat[\centering Training input]{{\includegraphics[width=0.195\columnwidth]{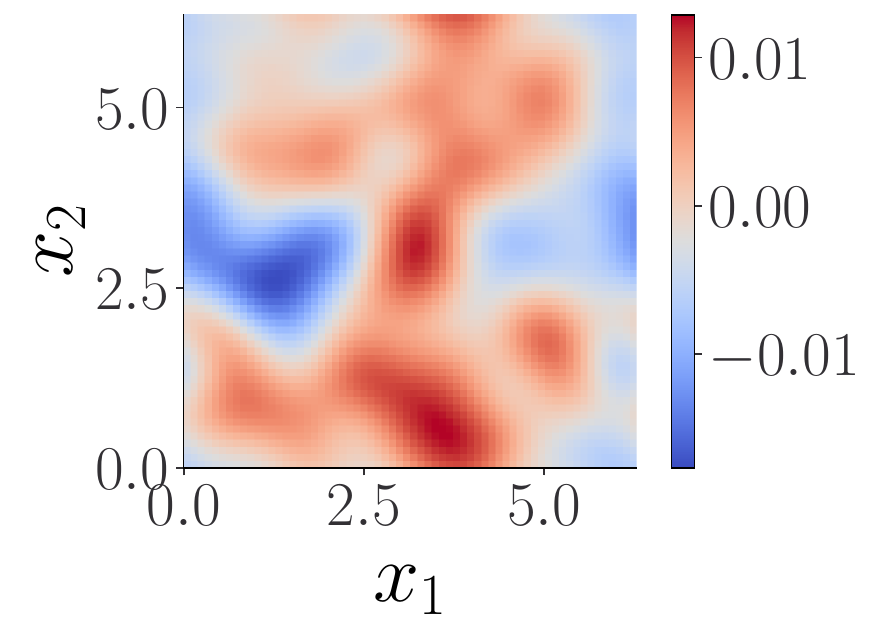} }}%
    \subfloat[\centering Training output]{{\includegraphics[width=0.195\columnwidth]{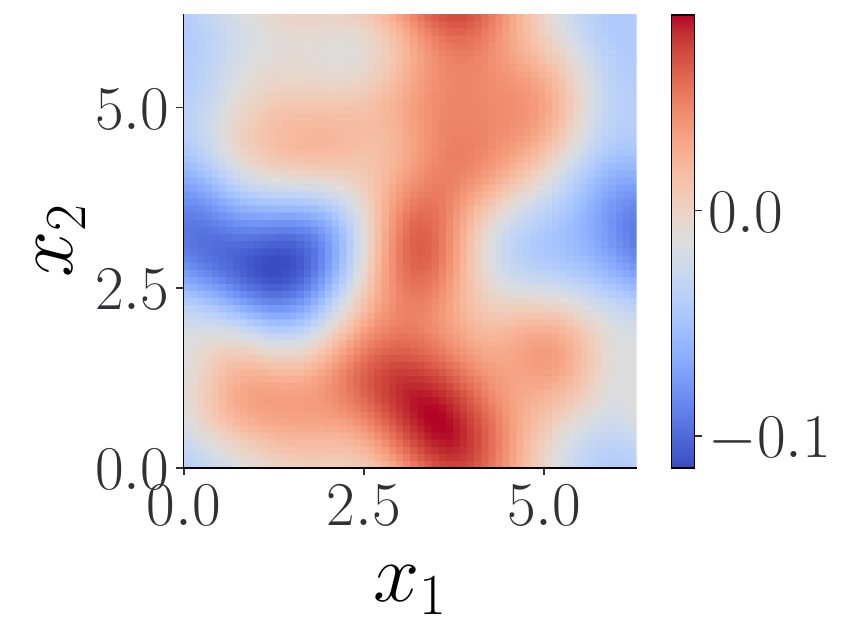} }}%
        \subfloat[\centering True test]{{\includegraphics[width=0.195\columnwidth]{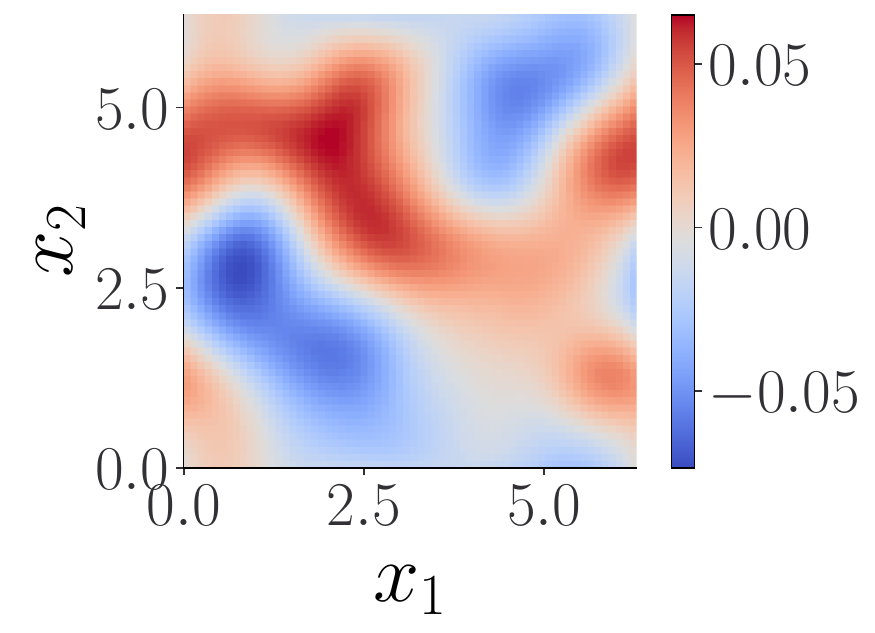}}}%
    \subfloat[\centering Predicted test]{{\includegraphics[width=0.195\columnwidth]{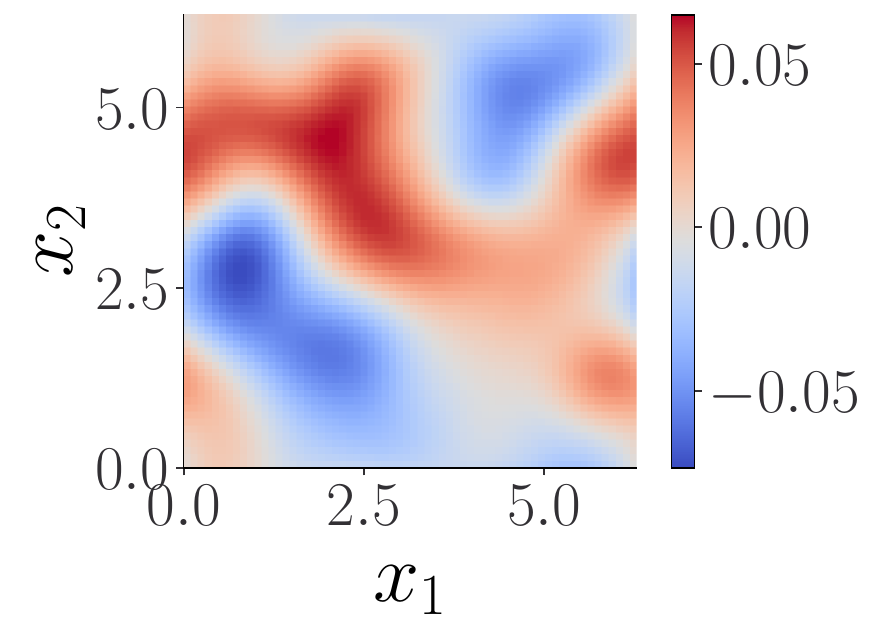}}}%
    \subfloat[\centering Pointwise error]{{\includegraphics[width=0.185\columnwidth]{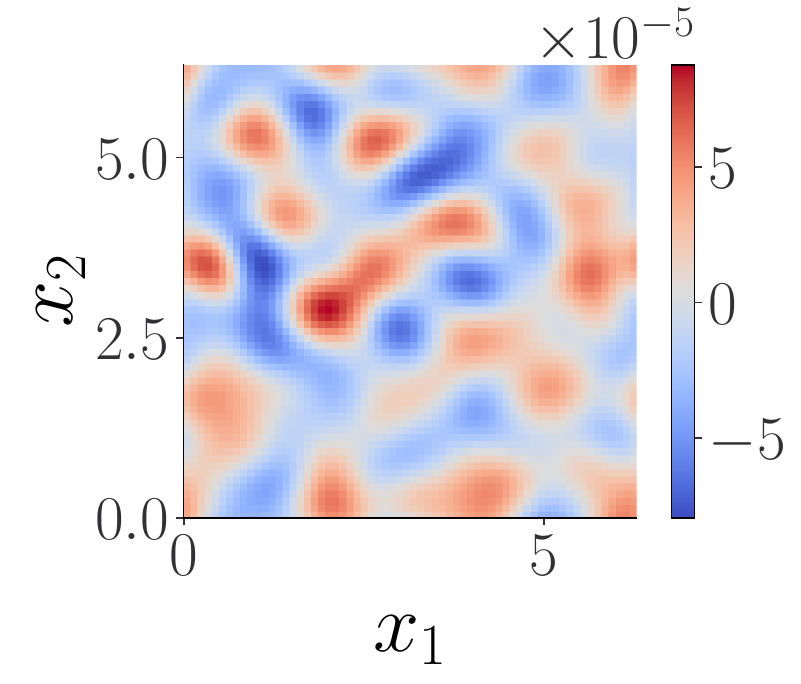}}}%
    \caption{Example of training data and test prediction and pointwise errors for the Navier-Stokes problem \eqref{eq:Navier-Stokes}.}%
    \label{fig:NSA}%
\end{figure}



\subsection{Results and discussion} \label{sec: discussion}
Below we discuss our main findings in benchmarking our kernel method 
against state-of-the-art NN based techniques

\subsubsection{Performance against NNs}
\Cref{results} summarizes the $L^2$ relative test error of our vanilla
implementation of the kernel method along with those of DeepONet, 
FNO, PCA-Net, and PARA-Net. We observed that our vanilla kernel method was 
reliable in terms of accuracy across all examples. In particular, 
observe that between the Mat{\'e}rn or rational quadratic kernel,  we always managed to get close 
to the other methods, see for example the results for 
the Burgers' equation or Darcy problem, 
and even outperform them in several examples 
such as  Navier-Stokes and Helmholtz. 
Overall we observed that the performance of the kernel method is stable across all examples suggesting that our method is reliable and provides a good baseline for a large class of problems. Moreover, we  did not observe a significant difference in performance  
in terms of the choice of the particular kernel family once the hyper-parameters were tuned. This indicates that a large class of kernels are effective for these problems. Furthermore,  we found the hyper-parameter tuning to be robust, i.e., 
results were consistent in a reasonable range of parameters such as length scales.

In the high data regime, we found the vanilla kernel method to be 
the most accurate, although this comes with a greater cost, as seen in 
\Cref{fig: acc-comp}. However, the kernel method appears to provide the highest accuracy for its level of complexity as the accuracy of NNs typically stagnates or 
even decreases after a certain level of complexity; see the Navier-Stokes 
and Helmholtz panels of \Cref{fig: acc-comp} where most of the NN methods 
seem to plateau after a certain complexity level.

We also observed that the linear model did not provide the best accuracy as it quickly saturated in performance. Nonetheless, it provided surprisingly good accuracy at low levels of complexity: for example, in the case of Navier-Stokes, the linear 
kernel provided the best accuracy below $10^6$ FLOPS of complexity. This indicates that while simple, the linear model can be a valuable low-complexity model. Another notable example is the  Advection equation (both I and II), where the operator $\mathcal{G}^\dagger$ is linear. In this case, the linear kernel had the best accuracy and the best complexity-accuracy tradeoff. We note, however, that while the linear model was close to machine precision on Advection I (error on the order $10^{-13}\%$), its performance was significantly worse on Advection II (error on the order of $10\%$). Moreover, the gap between the linear kernel and all other models was significantly smaller for Advection I; we conjecture this difference in performance is likely 
due to the setup of these problems. 


Finally, we note that the most challenging problem for our kernel method was the Structural Mechanics example. In this case, the vanilla kernel method has higher complexity but did not beat the NNs. In fact, the NNs seem to be able to 
reduce complexity without loss of accuracy compared to our  method.

\begin{table}[htb]
\centering
\resizebox{\textwidth}{!}{
\begin{tabular}{|l|c|c|c|c|c|c|c|}
\hline
&
\multicolumn{3}{|c|}{\textbf{Low-data regime}}& 
\multicolumn{4}{|c|}{\textbf{High-data regime}} \\
\hline
& \textbf{Burger's} & \textbf{Darcy problem} & \textbf{Advection I} & \textbf{Advection II} & \textbf{Hemholtz} & \textbf{Structural Mechanics} & \textbf{Navier Stokes} \\ \hline
\textbf{DeepONet} & 2.15\% & 2.91\% & 0.66\% & 15.24\% & 5.88\% & 5.20\% & 3.63\% \\ \hline
\textbf{POD-DeepONet} & 1.94\% & 2.32\% & 0.04\% & n/a & n/a & n/a & n/a \\ \hline
\textbf{FNO} & 1.93\% & 2.41\% & 0.22\% & 13.49\% & 1.86\% & 4.76\% & 0.26\% \\ \hline
\textbf{PCA-Net} & n/a & n/a & n/a & 12.53\% & 2.13\% & 4.67\% & 2.65\% \\ \hline
\textbf{PARA-Net} & n/a & n/a & n/a & 16.64\% & 12.54\% & 4.55\% & 4.09\% \\ \hline
\hline
\textbf{Linear} & 36.24\% & 6.74\% & $2.15\times10^{-13}\%$ & 11.28\% & 10.59\% & 27.11\% & 5.41\% \\ \hline
\textbf{Best of Matérn/RQ} & 2.15\% & 2.75\% & $2.75\times 10^{-3}\%$ & 11.44\% & 1.00\% & 5.18\% & 0.12\% \\ \hline
\end{tabular}
}
\caption{Summary of numerical results:  we report the $L^2$ relative test error of our numerical experiments and compare the kernel approach with variations of DeepONet , FNO, PCA-Net and PARA-Net. We considered two choices of  the kernel $S$, the rational quadratic and the Mat\'{e}rn, but we observed little difference between the two.}
\label{results}
\end{table}

\begin{figure}[htp]
    \centering
    \includegraphics[width = .95\columnwidth]{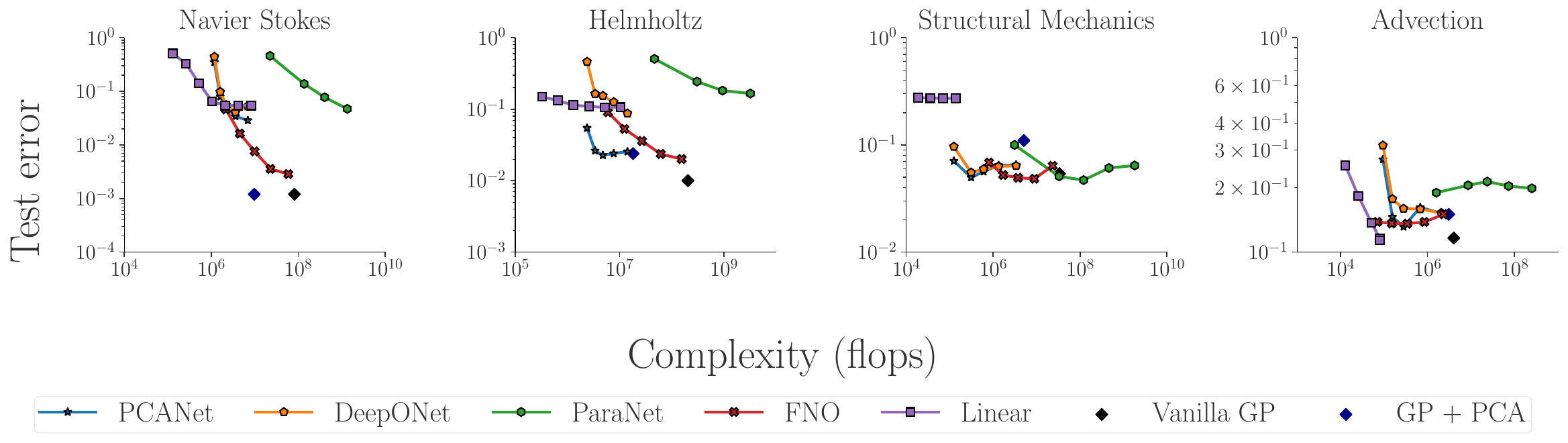}
    \caption{Accuracy complexity tradeoff achieved in the problems in \cite{de2022cost}. Data for NNs was obtained from the aforementioned 
    article. 
   Linear model refers to the linear kernel, 
    vanilla GP is our implementation with the nonlinear kernels and 
    minimal preprocessing, GP+PCA
    corresponds to preprocessing through PCA both the input and the output to reduce complexity.}
    
    \label{fig: acc-comp}
\end{figure}

\subsubsection{Effect of preconditioners}
\Cref{table: cho-pca} compares the performance of our method with the 
Mat{\'e}rn kernel family using various preconditioning steps. 
Overall we observed that both PCA and Cholesky preconditioning improved 
the performance of our vanilla kernel method. 

The  Cholesky preconditioning generally offers the greatest improvement. However, we observed that getting the best results from the Cholesky approach required careful tuning of the parameters of the kernels $K$ and $Q$ which we did using 
cross-validation. While tuning the parameters does not increase the inference complexity, it does increase the training complexity. 

On the other hand, the PCA approach was more robust to changes in hyperparameters, 
i.e., the number of PCA components following \Cref{sec: dim red}. 
We observed that applying PCA on the input and output reduces complexity and has varying levels of effectiveness in providing a better cost-accuracy tradeoff. For example, for Navier-Stokes, it greatly reduced the complexity without affecting accuracy. But for the Helmholtz and Advection equations, PCA reduced the accuracy while remaining competitive with NN models. For structural mechanics, however, PCA significantly reduced accuracy and was worse than other models. We hypothesize that the loss in accuracy can be related to the decay of the eigenvalues of the PCA matrix 
in that example.

\md{

\begin{table}[htp]
\centering
\begin{tabular}{|l|c|c|c|} 
\hline
& \textbf{Advection II} &\textbf{Burger's} & \textbf{Darcy problem}  \\ \hline
\textbf{No preprocessing} & 14.37\% & 3.04\% & 4.47\%\\ \hline
\textbf{PCA} & 14.50\% & 2.41\% & 2.89\%\\ \hline
\textbf{Cholesky} & 11.44\% & 2.15\% & 2.75\%\\ \hline
\end{tabular}
\caption{Comparison between Cholesky preconditioning and PCA dimensionality reduction on three examples for our vanilla kernel implementation with the Mat\'{e}rn kernel.}
\label{table: cho-pca}
\end{table}

}

\section{Conclusions}\label{sec: conclusions}

In this work we presented a kernel/GP framework for the learning of 
operators between function spaces. We presented an abstract formulation of 
our kernel framework along with convergence proofs and error bounds in 
certain asymptotic limits. Numerical experiments and benchmarking against 
popular NN based algorithms revealed that our vanilla implementation of 
the kernel approach is competitive and either matches the performance of 
NN methods or beats them in several benchmarks. Due to simplicity of implementation, 
flexibility, and the empirical results, we suggest that the proposed kernel methods 
are a good benchmark for future, perhaps more sophisticated, algorithms. Furthermore, these methods can be used to guide practitioners in the design of 
new and challenging benchmarks (e.g, identify problems where  vanilla kernel methods do not perform well).
Numerous directions of future research exist. In the theoretical direction it is 
interesting to remove the stringent \Cref{cond-input-data-simplified}
and we anticipate this to require a particular selection of the kernel employed to obtain
the map $\bar{f}$. Moreover, obtaining error bounds for more 
general measurement functionals beyond pointwise evaluations would be interesting. 
One could also adapt our framework to non-vanilla kernel methods such as random features or inducing point methods to provide a low-complexity alternative to NNs 
in the large-data regime. Finally, since the proposed approach is essentially a generalization of GP Regression to the infinite-dimensional setting, we anticipate that some of the hierarchical techniques of \cite{owhadi2017multigrid, schafer2021KL, schafer2021compression} could be extended to this setting and provide
 a better cost-accuracy trade-off than current methods.

%

\subsection*{Acknowledgments}
MD, PB, and HO acknowledge support by the Air Force Office of Scientific Research under MURI award number FA9550-20-1-0358 (Machine Learning and Physics-Based Modeling and Simulation).
BH acknowledges support by the National Science Foundation grant number 
NSF-DMS-2208535 (Machine Learning for Bayesian Inverse Problems).
HO also acknowedges support by the Department of Energy under award number DE-SC0023163 (SEA-CROGS: Scalable, Efficient and Accelerated Causal Reasoning Operators, Graphs and Spikes for
Earth and Embedded Systems). We thank F. Sch\"{a}fer for comments and references.

\bibliographystyle{siamplain}
\bibliography{references}
\appendix 

\section{Review of operator valued kernels and GPs}\label{secOpvalker}
We review  the theory of operator valued kernels and GPs 
\cite{owhadi2023ideas}
as these are utilized throughout the article.
Operator-valued kernels were introduced in \cite{kadri2016operator} as a generalization of vector-valued kernels \cite{alvarez2012kernels}.

\subsection{Operator valued kernels}
 Let $\U$ and $\V$ be  separable Hilbert spaces endowed with the inner products $\<\cdot,\cdot\>_\U$ and $\<\cdot,\cdot\>_\V$.
Write $\Lc(\V)$ for the set of bounded linear operators mapping $\V$ to $\V$.

\begin{Definition}
We call
$
G\,:\, \U\times \U\rightarrow \Lc(\V)
$
  an \say{operator-valued kernel} if
\begin{enumerate}
\item
$G$ is Hermitian, i.e.
$G(u,u')=G(u',u)^T$ for all $u,u'\in \U\,$, 
writing $A^T$ for the adjoint of the operator $A$ with respect 
to $\<\cdot,\cdot\>_\V$.

\item $G$ is non-negative, i.e.,  for all $m\in \mathbb{N}$ and 
any set of points $(u_i,v_i)_{i=1}^m \subset \U \times \V $ it holds that
 $\sum_{i,j=1}^m   \<v_i, G(u_i,u_j) v_j\>_\V\geq 0.$
 \end{enumerate}
 \end{Definition}
We call $G$ non-degenerate if $\sum_{i,j=1}^m   \<v_i, G(u_i,u_j) v_j\>_\V= 0$ implies $v_i=0$ for all $i$ whenever  $u_i\not=u_j$ for $i\not=j$.

\subsection{RKHSs}

Each non-degenerate, locally bounded and separately continuous  operator-valued kernel $G$  is in one to one correspondence with an RKHS $\Hc$ of continuous operators $\Gc\,:\, \U\rightarrow \V$  obtained as the closure of the linear span of the 
maps $z \mapsto G(z,u)v$  with respect to the inner product identified by the reproducing property
\begin{equation}\label{eqrepprop}
\<g, G(\cdot,u) v\>_\Hc=\<g(u),v\>_\V
\end{equation}

\subsection{Feature maps}\label{secfmaps}
Let $\F$ be a  separable Hilbert space (with inner product $\<\cdot,\cdot\>_\F$ and norm $\|\cdot\|_\F$) and let
$\psi\,:\, \U \rightarrow \Lc(\V,\F)$ be a continuous function mapping $\U$ to the space of bounded linear operators from
$\V$ to $\F$.
\begin{Definition}
We say that $\F$ and $\psi\,:\, \U \rightarrow \Lc(\V,\F)$ are a \emph{feature space} and  a \emph{feature map} for the kernel $G$ if, for all $(u,u',v,v')\in \U^2 \times \V^2$,
\begin{equation*}
\< v,  G(u,u')v' \>= \<\psi(u) v,\psi(u') v'\>_\F\,.
\end{equation*}
\end{Definition}
Write $\psi^T(u)$, for the adjoint of $\psi(u)$ defined as the linear function mapping $\F$ to $\V$ satisfying
\begin{equation*}
\<\psi(u) v, \alpha\>_\F=\< v, \psi^T(u)\alpha\>_\V
\end{equation*}
 for
 $u,v,\alpha \in \U\times \V \times \F$. Note that $\psi^T\,:\, \U\rightarrow \Lc(\F,\V)$ is therefore a function mapping $\U$ to the space of bounded linear functions from $\F$ to $\V$. Writing $\alpha^T \alpha':=\<\alpha,\alpha'\>_\F$ for the inner product in $\F$  we can ease our notations by writing
 \begin{equation}\label{eqkledkjdkejdd}
 G(u,u')= \psi^T(u) \psi(u')\,
\end{equation}
 which is consistent with the finite-dimensional setting and $v^T G(u,u') v'=(\psi(u) v)^T (\psi(u')v')$ (writing $v^T v'$ for the inner product in $\V$).
 For $\alpha \in \F$ write $\psi^T \alpha$ for the function  $\U\rightarrow \V$ mapping $u\in \U$ to the element $v\in \V$ such that
 \begin{equation*}
 \<v',v\>_\V=\<v',\psi^T(u)\alpha\>_\V=\<\psi(u) v',\alpha\>_\F \text{ for all }v'\in \V\,.
 \end{equation*}
 We can, without loss of generality, restrict $\F$ to be the range of $(u,v)\rightarrow \psi(u)v$ so that the RKHS $\Hc$ defined by
 $G$ is the closure of the pre-Hilbert space spanned by $\psi^T \alpha $ for $\alpha \in \F$.
 Note that the reproducing property \eqref{eqrepprop} implies that for $\alpha\in \F$
\begin{equation*}
\<\psi^T(\cdot) \alpha, \psi^T(\cdot) \psi(u) v\>_\Hc=\<\psi^T(u) \alpha,v\>_\V=\<\alpha, \psi(u) v\>_\F
\end{equation*}
for all $u,v\in \U\times \V$, which leads to the following theorem.

\begin{Theorem}\label{thmkjhkejhddjhd}
The RKHS $\Hc$ defined by the kernel \eqref{eqkledkjdkejdd} is the linear span of $\psi^T \alpha$ over  $\alpha \in \F $ such that $\|\alpha\|_\F< \infty$. Furthermore, $\<\psi^T(\cdot) \alpha, \psi^T(\cdot) \alpha'\>_\Hc=\<\alpha,\alpha'\>_\F$ and
\begin{equation*}
\|\psi^T(\cdot) \alpha\|^2_\Hc=\|\alpha\|^2_\F
\text{ for }\alpha,\alpha'\in \F\,.
\end{equation*}
\end{Theorem}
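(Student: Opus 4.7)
The plan is to exhibit the map $T\colon \alpha \mapsto \psi^T(\cdot)\alpha$ as a Hilbert-space isometry from $\F$ (which, by the WLOG restriction stated just before the theorem, is the closure of $\F_0 := \mathrm{span}\{\psi(u)v : (u,v) \in \U\times\V\}$) onto the RKHS $\Hc$. All three claims then follow by reading off properties of $T$.

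First I would verify the identity
\[
\bigl\langle \psi^T(\cdot)\alpha,\, \psi^T(\cdot)\psi(u)v \bigr\rangle_{\Hc} = \langle \alpha,\, \psi(u)v \rangle_\F
\]
for every $\alpha \in \F$ and every $(u,v)\in \U\times\V$. This is exactly the calculation displayed just above the theorem: the reproducing property \eqref{eqrepprop} applied to $g=\psi^T(\cdot)\alpha$ at the point $u$ yields $\langle g, G(\cdot,u)v\rangle_\Hc = \langle g(u),v\rangle_\V = \langle \psi^T(u)\alpha,v\rangle_\V$, which by the definition of the adjoint equals $\langle \alpha,\psi(u)v\rangle_\F$. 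Combined with $G(\cdot,u)v=\psi^T(\cdot)\psi(u)v$ from \eqref{eqkledkjdkejdd}, this gives the claimed identity.

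Second, I would extend this by sesquilinearity in the right argument: for any finite linear combination $\beta = \sum_k c_k \psi(u_k)v_k \in \F_0$ one obtains $\langle \psi^T(\cdot)\alpha, \psi^T(\cdot)\beta\rangle_\Hc = \langle \alpha,\beta\rangle_\F$. Specializing to $\alpha = \beta \in \F_0$ gives $\|\psi^T(\cdot)\beta\|_\Hc^2 = \|\beta\|_\F^2$, so $T$ restricted to $\F_0$ is an isometry into $\Hc$. Next, for a Cauchy sequence $\beta_n \in \F_0$ with $\beta_n \to \alpha' \in \F$, the isometry on $\F_0$ implies $(\psi^T(\cdot)\beta_n)$ is Cauchy in $\Hc$, and its limit is an element of $\Hc$ which we consistently denote $\psi^T(\cdot)\alpha'$; continuity of both inner products then promotes the isometry to all of $\F$, proving the two equalities in the second half of the theorem.

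Third, I would establish that the image $T(\F)$ is all of $\Hc$. By definition $\Hc$ is the closed linear span of $\{G(\cdot,u)v : (u,v)\in\U\times\V\} = \{\psi^T(\cdot)\psi(u)v:(u,v)\in\U\times\V\} = T(\F_0)$. Since $T$ is an isometry, $T(\F)$ is complete and therefore closed in $\Hc$, and it contains the dense subset $T(\F_0)$, hence coincides with $\Hc$. This gives the first claim.

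The only delicate step is the closure argument in part three: one must be careful that the \emph{limit} of a Cauchy sequence $\psi^T(\cdot)\beta_n$ in $\Hc$ is genuinely represented by $\psi^T(\cdot)\beta$ for some $\beta\in\F$, rather than merely an abstract completion element. This is precisely what the WLOG restriction of $\F$ to the closure of $\F_0$ buys us, together with the isometry of $T$ on $\F_0$; without that restriction $T$ would have a nontrivial kernel (the $\F$-orthogonal complement of $\F_0$) and the clean bijection would fail.
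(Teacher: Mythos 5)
Your proof is correct and follows the route the paper itself sketches in the paragraph immediately preceding the theorem: the paper displays the computation $\<\psi^T(\cdot)\alpha,\psi^T(\cdot)\psi(u)v\>_\Hc=\<\alpha,\psi(u)v\>_\F$ and then states the theorem without further argument, and your proposal simply fills in the linearity, density, and completeness steps that are left implicit. Two small remarks on your write-up: you should note (as you implicitly use) that convergence in $\Hc$-norm implies pointwise $\V$-valued convergence via the reproducing property, so that the $\Hc$-limit of $\psi^T(\cdot)\beta_n$ is genuinely the function $u\mapsto\psi^T(u)\alpha'$ and not merely ``denoted'' so; and the paper phrases the WLOG restriction as ``restrict $\F$ to be the range of $(u,v)\mapsto\psi(u)v$,'' which you correctly read as the closed linear span $\overline{\F_0}$ — your closing observation about the kernel of $T$ absent that restriction is exactly the right reason it is needed.
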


\subsection{Interpolation}
Let us consider the interpolation problem in operator valued RKHSs.
\begin{Problem}\label{pb828827hee}
Let $\Gc^\dagger$ be an unknown continuous operator mapping $\U$ to $\V$. Given the information\footnote{For a $N$-vector $\ub=(u_1,\ldots,u_N)\in \U^N$ and a function $\Gc\,:\, \U\rightarrow \V$, write $\Gc(\ub)$ for the $N$ vector with entries
$\big(\Gc(u_1),\ldots,\Gc(u_N)\big)$.}
  $\Gc^\dagger(\ub)=\vb$ with the data $(\ub,\vb)\in \U^N\times \V^N$, approximate $\Gc^\dagger$.
\end{Problem}

Using the relative error in $\|\cdot\|_{\Hc}$-norm as a loss, the minimax optimal recovery solution of \Cref{pb828827hee}
 is, by \cite[Thm.~12.4,12.5]{owhadi_scovel_2019}, given by
\begin{equation}\label{eqhgvygvgyvo}
\begin{cases}
\text{Minimize }&\|\Gc\|_{\Hc}^2\\
\text{subject to }&\Gc(\ub)=\vb
\end{cases}
\end{equation}
The minimizer  is then of the form $
 \Gc(\cdot)=\sum_{j=1}^N G(\cdot,u_j) w_j\,,$
  where the coefficients $w_j \in \V$ are identified by solving the  system of linear equations
 $\sum_{j=1}^N G(u_i,u_j) w_j=v_i\text{ for all }i\in \{1,\ldots,N\}\,.$ 
 Using our compressed notation we can rewrite this equation as 
 $G(\ub,\ub) {\bf w}=\vb$ where ${\bf w}=(w_1,\ldots,w_N),\,\vb=(v_1,\ldots,v_N)\in \V^N$ and $G(\ub,\ub)$ is the $N\times N$ block-operator matrix
 \footnote{ For $N\geq 1$ let $\V^N$ be the N-fold product space endowed with the
 inner-product $\<\vb,{\bf w}\>_{\V^N}:=\sum_{i,j=1}^N \<v_i,w_j\>_\V$ for
 $\vb=(v_1,\ldots,v_N), {\bf w}=(w_1,\ldots,w_N) \in \V^N$.
  ${\bf A}\in \Lc(\V^N)$ given by
  $
  {\bf A}=\begin{pmatrix}A_{1,1}&\cdots & A_{1,N}\\ \vdots & & \vdots\\A_{N,1}& \cdots & A_{N,N} \end{pmatrix}
 $
 where $A_{i,j}\in \Lc(\V)$, is called a block-operator matrix. Its adjoint  ${\bf A^T}$ with respect to  $\<\cdot,\cdot\>_{\V^N}$ is the
 block-operator matrix with entries $(A^T)_{i,j}=(A_{j,i})^T$.} 
 with entries $G(u_i,u_j)$. Therefore, writing $G(\cdot,\ub)$ for the vector $(G(\cdot,u_1),\ldots,G(\cdot,u_N))\in \Hc^N$,
 the optimal recovery interpolant is given by
 \begin{equation}\label{eqhgvygvgyv2b}
 \bar{\Gc}(\cdot)= G(\cdot,\ub)  G(\ub,\ub)^{-1} \vb\,,
 \end{equation}
which implies that the value of \eqref{eqhgvygvgyvo} at the minimum is
\begin{equation}\label{eqkjhejdehgdkdd}
\|\bar{\Gc}\|_\Hc^2=\vb^T G(\ub,\ub)^{-1} \vb\,,
\end{equation}
 where $G(\ub,\ub)^{-1}$ is the inverse of $G(\ub,\ub)$, whose existence is implied by the non-degeneracy of $G$ combined with
 $u_i\not=u_j$ for $i\not=j$.

\subsection{Ridge regression}\label{app:operator-valued-ridge-regression}
Let $\gamma>0$.
A ridge regression (approximate) solution  to Problem \ref{pb828827hee} 
can be found as the minimizer of
\begin{equation}\label{eqledhehdiudh}
\inf_{\Gc\in \Hc}\lambda\,\|\Gc\|_\Hc^2+\gamma^{-1}\sum_{i=1}^N\|v_i-\Gc(u_i)\|_\V^2\,.
\end{equation}
This minimizer is given by the formula
\begin{equation}\label{eqajkjwdhjbdjehr}
\bar{\Gc}(u)=G(u,\ub)\big(G(\ub,\ub)+\gamma I\big)^{-1} \vb\,,
\end{equation}
writing $I$ for the identity matrix. We can further compute directly
\begin{equation*}
 \| \bar{\Gc} \|_\mathcal{H}^2 = \vb^T \big(G(\ub,\ub)+\gamma I\big)^{-1} \vb\,.
\end{equation*}

\subsection{Operator-valued GPs}\label{subsecuideydiueyd}

The following definition of operator-valued Gaussian processes is a natural extension of scalar-valued Gaussian fields \cite{owhadi_scovel_2019}.
\begin{Definition}\label{dejheekfklfhrf}\cite[Def.~5.1]{owhadi2023ideas}
Let $G\,:\, \U\times \U\rightarrow \Lc(\V)$ be an operator-valued kernel. Let $m$ be a function mapping $\U$ to $\V$.
We call
$\xi\,:\,  \U\rightarrow \Lc(\V,{\bf H})$ an operator-valued GP if
 $\xi$ is a function mapping
$u \in \U$ to $\xi(u)\in \Lc(\V,{\bf H})$ where ${\bf H}$ is a Gaussian space and $\Lc(\V,{\bf H})$ is the space of bounded linear operators from $\V$ to ${\bf H}$. Abusing notations we write $\<\xi(u),v\>_\V$ for $\xi(u)v$. We say that $\xi$ has mean $m$ and covariance kernel $G$ and write $\xi \sim \cN(m,G)$ if $\<\xi(u),v\>_\V \sim \cN\big(m(u),v^T G(u,u)v\big)$ and
\begin{equation}
\Cov\big(\<\xi(u),v\>_\V, \<\xi(u'),v'\>_\V\big)=v^T G(u,u')v'\,.
\end{equation}
We say that $\xi$ is centered if it is of zero mean.
\end{Definition}
If $G(u,u)$ is trace class ($\Tr[G(u,u)]<\infty$) then $\xi(u)$  defines a measure  on $\V$, i.e. a $\V$-valued random variable \footnote{Otherwise it only defines a (weak) cylinder-measure in the sense of Gaussian fields.}.

\begin{Theorem}\cite[Thm.~5.2]{owhadi2023ideas}
The law of an operator-valued GP is uniquely determined by its mean $m$ and covariance kernel $G$. Conversely given  $m$ and $G$ there exists an operator-valued GP having mean $m$ and covariance kernel $G$. In particular if $G$ has feature space $\F$ and map $\psi$, the $e_i$ form an orthonormal basis of $\F$,  and the $Z_i$ are i.i.d. $\cN(0,1)$ random variables, then
$\xi=m +\sum_i Z_i \psi^T e_i$ 
is an operator-valued GP with mean $m$ and covariance kernel $G$.
\end{Theorem}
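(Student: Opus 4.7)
The statement has two halves: uniqueness of the law of an operator-valued GP given $(m,G)$, and existence via the explicit series built from a feature map. My plan is to dispatch uniqueness by reducing to ordinary Gaussian finite-dimensional marginals, and to handle existence by verifying that the series $\sum_i Z_i \psi^T e_i$ converges in an appropriate $L^2$ sense and that the resulting random field satisfies \Cref{dejheekfklfhrf}.

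\emph{Uniqueness.} First I would observe that by \Cref{dejheekfklfhrf}, for any finite family $(u_1,v_1),\dots,(u_N,v_N)\in\U\times\V$, the real-valued random variables $\langle\xi(u_i),v_i\rangle_\V$ all belong to the Gaussian space $\mathbf{H}$, so their joint law is jointly Gaussian with mean vector $\big(\langle m(u_i),v_i\rangle_\V\big)_i$ and covariance matrix $\big(v_i^T G(u_i,u_j) v_j\big)_{i,j}$, both of which are determined entirely by $(m,G)$. Since $\xi$ is, by definition, the collection of operators $\{\xi(u)\}_{u\in\U}$ acting linearly on $\V$, and since these finite-dimensional marginals exhaust the cylinder sets that generate the law of $\xi$, a standard Kolmogorov-consistency argument then gives uniqueness of the distribution of $\xi$.

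\emph{Existence.} For the construction, I would first reduce to the centered case by writing $\xi=m+\tilde{\xi}$ with $\tilde\xi\sim\cN(0,G)$, so I need only build a centered operator-valued GP with covariance $G$. Using the feature representation (Section~\ref{secfmaps}) $G(u,u')=\psi^T(u)\psi(u')$, pick an orthonormal basis $\{e_i\}_{i\in I}$ of the separable Hilbert space $\F$. On a probability space supporting an i.i.d.\ family $\{Z_i\}_{i\in I}$ with $Z_i\sim\cN(0,1)$, let $\mathbf{H}:=\overline{\mathrm{span}}\{Z_i\}\subset L^2$, which is a Gaussian space. For each $(u,v)\in\U\times\V$ set
\begin{equation*}
\langle\tilde\xi(u),v\rangle_\V \;:=\; \sum_{i\in I} Z_i\,\langle e_i,\psi(u)v\rangle_\F .
\end{equation*}
The series converges in $L^2$ because by Parseval
\begin{equation*}
\sum_{i\in I}\big|\langle e_i,\psi(u)v\rangle_\F\big|^2 \;=\; \|\psi(u)v\|_\F^2 \;=\; v^T G(u,u) v \;<\;\infty,
\end{equation*}
so its sum lies in $\mathbf{H}$ and is centered Gaussian with variance $v^T G(u,u)v$. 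Linearity of $v\mapsto\langle\tilde\xi(u),v\rangle_\V$ is inherited from the linearity of $\psi(u)$, and boundedness gives $\tilde\xi(u)\in\Lc(\V,\mathbf{H})$ with $\|\tilde\xi(u)\|_{\Lc(\V,\mathbf{H})}\le\|G(u,u)\|^{1/2}$. Finally, I would compute the covariance directly:
\begin{equation*}
\Cov\!\big(\langle\tilde\xi(u),v\rangle_\V,\langle\tilde\xi(u'),v'\rangle_\V\big)=\sum_i\langle e_i,\psi(u)v\rangle_\F\langle e_i,\psi(u')v'\rangle_\F=\langle\psi(u)v,\psi(u')v'\rangle_\F=v^T G(u,u')v',
\end{equation*}
which, combined with Gaussianity of each finite linear combination (inherited from $\mathbf{H}$), confirms $\tilde\xi\sim\cN(0,G)$ in the sense of \Cref{dejheekfklfhrf}.

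\emph{Expected obstacles.} The routine bookkeeping (convergence of the series, joint Gaussianity of arbitrary finite collections) is straightforward once Parseval is invoked. The main subtlety is ensuring the construction is genuinely index-free: strictly speaking one should verify that the law of $\xi$ does not depend on the choice of feature space/map $(\F,\psi)$ or the orthonormal basis $\{e_i\}$; but this is immediate from the uniqueness part, since any two such constructions produce operator-valued GPs with the same mean $m$ and covariance $G$. A minor technical point worth handling carefully is adding the mean $m$ back: either enlarge $\mathbf{H}$ to allow degenerate (constant) Gaussians, or interpret $\xi(u)v$ as the sum of the deterministic shift $\langle m(u),v\rangle_\V$ and the element $\langle\tilde\xi(u),v\rangle_\V\in\mathbf{H}$, which is consistent with the abuse of notation used in \Cref{dejheekfklfhrf}.
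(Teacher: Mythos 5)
Your proposal is correct and follows what is essentially the canonical argument for this statement (which the paper itself only cites from \cite{owhadi2023ideas} without reproducing a proof): uniqueness from joint Gaussianity of the finite-dimensional marginals $\big(\<\xi(u_i),v_i\>_\V\big)_i$ inside the common Gaussian space, and existence via the Karhunen--Lo\`{e}ve-type series whose $L^2$ convergence and covariance identity follow from Parseval and $G(u,u')=\psi^T(u)\psi(u')$. The only point worth adding is that for the general existence claim (when no feature pair $(\F,\psi)$ is supplied) you should note that one always exists, e.g.\ the canonical choice $\F=\Hc_G$ with $\psi(u)v=G(\cdot,u)v$, after which your argument applies verbatim; your closing remarks on basis-independence and on handling the mean are exactly the right caveats.
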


\begin{Theorem}\cite[Thm.~5.3]{owhadi2023ideas}
Let $\xi$ be a centered operator-valued GP with covariance kernel $G\,:\,\U\times \U\rightarrow \Lc(\V)$. Let $\ub,\vb\in \U^N\times \V^N$. Let
$Z=(Z_1,\ldots,Z_N)$ be a random Gaussian vector, independent from $\xi$, with i.i.d. $\cN(0,\gamma I_\V)$ entries ($\gamma\geq 0$ and $I_\V$ is the identity map on $\V$). Then $\xi$ conditioned on $\xi(\ub)+Z$ is an operator-valued GP with mean
\begin{equation}\label{eqlkwjldkjhjedkhe}
\E\big[\xi(u)\big| \xi(\ub)+Z=\vb\big]=G(u,\ub)\big(G(\ub,\ub)+\gamma I_\V\big)^{-1} \vb=\eqref{eqajkjwdhjbdjehr}
\end{equation}
and conditional covariance operator
\begin{equation}\label{eqkjwbldkdbkld}
G^\perp(u,u'):=G(u,u')-G(u,\ub)\big(G(\ub,\ub)+\gamma I_\V\big)^{-1}G(\ub,u')\,.
\end{equation}
In particular, if $G$ is trace class, then
\begin{equation}\label{eqewjhlkefjlwkf}
\sigma^2(u):=\E\Big[\big\|\xi(u) -\E[\xi(u)| \xi(\ub)+Z=\vb]\big\|_\V^2\Big| \xi(\ub)+Z=\vb\Big]=\Tr\big[G^\perp(u,u)\big]\,.
\end{equation}
\end{Theorem}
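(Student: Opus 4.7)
The plan is to reduce the result to the familiar Gaussian conditioning identity by testing against arbitrary vectors $v \in \V$ and $\wb \in \V^N$, turning the operator-valued problem into scalar-valued Gaussian conditioning problems indexed by these test vectors. By Definition~\ref{dejheekfklfhrf}, the random variables $\<\xi(u),v\>_\V$ and $\<\xi(u_i),w_i\>_\V$ all live in a common Gaussian space $\bf H$, and the entries of $Z$ are independent $\cN(0,\gamma I_\V)$, so the joint vector
\begin{equation*}
\bigl(\<\xi(u),v\>_\V,\ \<\xi(u_1)+Z_1,w_1\>_\V,\ \ldots,\ \<\xi(u_N)+Z_N,w_N\>_\V\bigr)
\end{equation*}
is jointly Gaussian in $\R^{N+1}$. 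Its covariances can be read off from Definition~\ref{dejheekfklfhrf} and the independence of $\xi$ and $Z$:
$\Cov(\<\xi(u),v\>_\V,\<\xi(u_j)+Z_j,w_j\>_\V)=v^T G(u,u_j) w_j$ and
$\Cov(\<\xi(u_i)+Z_i,w_i\>_\V,\<\xi(u_j)+Z_j,w_j\>_\V)=w_i^T G(u_i,u_j) w_j+\gamma \delta_{ij}\<w_i,w_j\>_\V$.

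Next, I would apply the standard finite-dimensional Gaussian conditioning formula to this $(N+1)$-dimensional Gaussian. The key algebraic step is to recognize that, as $\wb$ varies over $\V^N$, the covariance matrix of the observation block acts on $\wb$ exactly like the block operator $G(\ub,\ub)+\gamma I_\V$ acts on $\V^N$ (with the pairing $\<\cdot,\cdot\>_{\V^N}$ from the footnote after \eqref{eqhgvygvgyv2b}). Invertibility of this operator is immediate when $\gamma>0$ (and follows from non-degeneracy when $\gamma=0$), so the conditioning formula gives
\begin{equation*}
\E\bigl[\<\xi(u),v\>_\V\,\big|\,\xi(\ub)+Z=\vb\bigr]=v^T G(u,\ub)\bigl(G(\ub,\ub)+\gamma I_\V\bigr)^{-1}\vb,
\end{equation*}
and a similar identity for the conditional covariance of $\<\xi(u),v\>_\V$ and $\<\xi(u'),v'\>_\V$ that matches $v^T G^\perp(u,u') v'$ with $G^\perp$ as in \eqref{eqkjwbldkdbkld}. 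Since these identities hold for every $v,v'\in \V$, by Definition~\ref{dejheekfklfhrf} the conditioned process is an operator-valued GP with the claimed mean \eqref{eqlkwjldkjhjedkhe} and covariance \eqref{eqkjwbldkdbkld}; the comparison with \eqref{eqajkjwdhjbdjehr} is then just a matter of reading off the ridge-regression representer formula.

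For the final variance identity \eqref{eqewjhlkefjlwkf}, I would use that when $G$ is trace class the conditional law of $\xi(u)$ is a genuine Gaussian measure on $\V$ with covariance operator $G^\perp(u,u)\in \Lc(\V)$ (still trace class, since it is dominated by $G(u,u)$ in the sense of nonnegative operators). For any Gaussian $\V$-valued random variable $\eta$ with mean $m_\eta$ and trace-class covariance $C$, one has $\E\|\eta-m_\eta\|_\V^2=\Tr(C)$; applied to the conditional law of $\xi(u)$ this gives $\sigma^2(u)=\Tr[G^\perp(u,u)]$. The main obstacle I anticipate is technical rather than conceptual: justifying that the scalar conditioning identities, taken over all $v\in\V$ and $\wb\in\V^N$, genuinely assemble into an operator-valued conditional GP in the sense of Definition~\ref{dejheekfklfhrf}, and that the trace-class hypothesis is enough to lift the scalar covariance formula to a bona fide $\V$-valued second-moment statement; this is where I would invoke the feature-map representation of $G$ from Section~\ref{secfmaps} and Theorem~\ref{thmkjhkejhddjhd} to write $\xi=\sum_i Z_i \psi^T e_i$ and carry out the conditioning summand-by-summand.
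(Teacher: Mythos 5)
There is no in-paper proof to compare against here: the paper imports this result verbatim by citation (\cite[Thm.~5.3]{owhadi2023ideas}), so your proposal can only be measured against the standard argument for Gaussian conditioning in this operator-valued setting. Your overall strategy — reduce to scalar Gaussian computations by testing against $v\in\V$ and elements of $\V^N$, identify the conditional mean/covariance, and finish the trace identity via the fact that a Gaussian measure on a separable Hilbert space with trace-class covariance $C$ satisfies $\E\|\eta-m_\eta\|_\V^2=\Tr(C)$ (with $G^\perp(u,u)\preceq G(u,u)$ guaranteeing trace-class) — is the standard route and is sound in outline.

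One step, however, is looser than you acknowledge. For a \emph{fixed} test block $\wb$, the finite-dimensional conditioning formula you invoke computes $\E\big[\<\xi(u),v\>_\V \mid \<\xi(u_1)+Z_1,w_1\>_\V,\ldots,\<\xi(u_N)+Z_N,w_N\>_\V\big]$, i.e.\ conditioning on the $\sigma$-algebra generated by $N$ scalar projections, which is strictly smaller than $\sigma\big(\xi(\ub)+Z\big)$; moreover the $N\times N$ scalar covariance matrix appearing there is only the compression of the block operator $G(\ub,\ub)+\gamma I_\V$ onto the spans of the $w_i$, not the operator itself, so that computation does not by itself return the claimed formula $G(u,\ub)\big(G(\ub,\ub)+\gamma I_\V\big)^{-1}\vb$, and letting $\wb$ vary afterwards does not automatically repair this. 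The clean fix is the projection characterization of conditional expectation for jointly Gaussian families: verify by a one-line covariance computation that the residual $\<\xi(u),v\>_\V - v^T G(u,\ub)\big(G(\ub,\ub)+\gamma I_\V\big)^{-1}\big(\xi(\ub)+Z\big)$ is uncorrelated with (hence independent of) every observation functional $\<\xi(u_j)+Z_j,w\>_\V$, $w\in\V$, and note that the candidate mean is a measurable function of $\xi(\ub)+Z$; the conditional covariance formula \eqref{eqkjwbldkdbkld} then follows from the same orthogonality, and Gaussianity of the conditional law is inherited because residual and observations are jointly Gaussian and independent. Equivalently, the feature-map expansion $\xi=\sum_i Z_i\psi^T e_i$ that you mention at the end, combined with a martingale/basis-exhaustion limit over test vectors, accomplishes the same assembly; you correctly flagged this as the main obstacle, but the proof is not complete until that orthogonality (or limiting) argument is actually carried out, including the $\gamma=0$ case where invertibility of $G(\ub,\ub)$ must be supplied by non-degeneracy of $G$ and distinctness of the $u_i$.
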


\subsection{Deterministic error estimates for operator-valued regression}\label{subsecprobahoeuwgediu}
The following theorem shows that the standard deviation \eqref{eqewjhlkefjlwkf} provides deterministic a prior error bounds on the accuracy of the ridge regressor \eqref{eqlkwjldkjhjedkhe} to $\Gc^\dagger$ in Problem \ref{pb828827hee}. Local error estimates such as \eqref{eqllbhjdebjdbd} below are classical in the Kriging literature \cite{wu1993local} where $\sigma^2(u)$ is known as the power 
function/kriging variance; see also
\cite{owhadi2015bayesian}[Thm.~5.1] for applications to PDEs.

\begin{Theorem}\label{thmalkjhbahkjedb}\cite[Thm.~5.4]{owhadi2023ideas}
Let $\Gc^\dagger$ be the unknown function of Problem \ref{pb828827hee} and let $\Gc(u)=\eqref{eqlkwjldkjhjedkhe}=\eqref{eqajkjwdhjbdjehr}$ be its ridge regressor. Let $\Hc$ be the RKHS associated with $G$ and let $\Hc_\gamma$ be the RKHS associated with the kernel $G_\gamma:=G+\gamma I_\V$. It holds true that
\begin{equation}\label{eqllbhjdebjdbd}
\big\|\Gc^\dagger(u)-\Gc(u) \big\|_\V \leq \sigma(u) \|\Gc^\dagger\|_\Hc
\end{equation}
and
\begin{equation}\label{eqllbhjdebddjdbd}
\big\|\Gc^\dagger(u)-\Gc(u) \big\|_\V \leq \sqrt{\sigma^2(u)+\gamma \operatorname{dim}(\V)} \|\Gc^\dagger\|_{\Hc_\gamma}\,,
\end{equation}
where $\sigma(u)$ is the standard deviation \eqref{eqewjhlkefjlwkf}.
\end{Theorem}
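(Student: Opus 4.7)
I would prove both bounds together, deriving \eqref{eqllbhjdebjdbd} by the classical power-function argument adapted to operator-valued kernels, and then reducing \eqref{eqllbhjdebddjdbd} to \eqref{eqllbhjdebjdbd} by an augmented-kernel trick. First I assume $\gamma=0$, so $\Gc$ in \eqref{eqajkjwdhjbdjehr} coincides with the minimum $\Hc$-norm interpolant \eqref{eqhgvygvgyv2b}. For any $v\in\V$, the reproducing property \eqref{eqrepprop} gives
\begin{equation*}
\langle \Gc^\dagger(u)-\Gc(u),\,v\rangle_\V
=\langle \Gc^\dagger-\Gc,\,G(\cdot,u)v\rangle_\Hc.
\end{equation*}
Let $\U_N:=\operatorname{span}\{G(\cdot,u_j)w:1\le j\le N,\,w\in\V\}\subset\Hc$. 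Because the evaluation map $E:f\mapsto(f(u_j))_j$ has kernel $\U_N^\perp$, and $\Gc$ is the unique element of $\U_N$ with $E(\Gc)=E(\Gc^\dagger)$, the residual $\Gc^\dagger-\Gc$ lies in $\U_N^\perp$; by Pythagoras $\|\Gc^\dagger-\Gc\|_\Hc\le\|\Gc^\dagger\|_\Hc$, and the inner product above is unchanged if $G(\cdot,u)v$ is replaced by its projection residual $R_v:=G(\cdot,u)v-\Pi_{\U_N}[G(\cdot,u)v]$.

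Solving the normal equations for $\Pi_{\U_N}$ gives $\Pi_{\U_N}[G(\cdot,u)v]=\sum_j G(\cdot,u_j)\bigl[G(\ub,\ub)^{-1}G(\ub,u)v\bigr]_j$, and then the reproducing property and the definition \eqref{eqkjwbldkdbkld} of $G^\perp$ yield
\begin{equation*}
\|R_v\|_\Hc^2=\langle R_v,G(\cdot,u)v\rangle_\Hc
=v^T\!\bigl(G(u,u)-G(u,\ub)G(\ub,\ub)^{-1}G(\ub,u)\bigr)v
=v^T G^\perp(u,u)v.
\end{equation*}
Cauchy--Schwarz in $\Hc$ and the supremum over unit $v\in\V$ then give
\begin{equation*}
\|\Gc^\dagger(u)-\Gc(u)\|_\V
\le \|\Gc^\dagger\|_\Hc\sqrt{\|G^\perp(u,u)\|_{\mathrm{op}}}
\le \|\Gc^\dagger\|_\Hc\sqrt{\Tr[G^\perp(u,u)]}=\sigma(u)\|\Gc^\dagger\|_\Hc,
\end{equation*}
using $\|A\|_{\mathrm{op}}\le\Tr[A]$ for positive $A$ on $\V$, which proves \eqref{eqllbhjdebjdbd}.

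For the ridge bound, I observe that the ridge estimator \eqref{eqajkjwdhjbdjehr} equals the \emph{pure interpolation} formula \eqref{eqhgvygvgyv2b} applied to the inflated kernel $G_\gamma:=G+\gamma I_\V$ (using $G_\gamma(u,u_j)=G(u,u_j)$ when $u\ne u_j$, and $G_\gamma(\ub,\ub)=G(\ub,\ub)+\gamma I$). Applying \eqref{eqllbhjdebjdbd} with $(\Hc,G)$ replaced by $(\Hc_\gamma,G_\gamma)$ gives the same bound with the conditional trace $\Tr[G_\gamma^{\perp,\mathrm{int}}(u,u)]$ in place of $\sigma^2(u)$. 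A direct computation shows
\begin{equation*}
G_\gamma^{\perp,\mathrm{int}}(u,u)
=G_\gamma(u,u)-G_\gamma(u,\ub)G_\gamma(\ub,\ub)^{-1}G_\gamma(\ub,u)
=G^\perp(u,u)+\gamma I_\V,
\end{equation*}
whose trace equals $\sigma^2(u)+\gamma\dim(\V)$, yielding \eqref{eqllbhjdebddjdbd}.

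\textbf{Main obstacle.} The argument is structurally routine once one commits to the power-function viewpoint, and the crux is the operator-valued bookkeeping: identifying $\|R_v\|_\Hc^2$ with the quadratic form $v^T G^\perp(u,u)v$ requires careful handling of the block-operator inverse $G(\ub,\ub)^{-1}\in\Lc(\V^N)$ and its adjoint structure, and one must verify that the residual of $\Pi_{\U_N}$ indeed has the claimed closed form. A secondary subtlety is the augmented-kernel identification at data points $u=u_j$, where one cannot simply set $\gamma=0$; this is resolved by a continuity argument in $u$, or equivalently by noting that both sides of \eqref{eqllbhjdebddjdbd} are continuous in $u$ and the two expressions for $\Gc(u)$ agree off the finite set $\{u_j\}$.
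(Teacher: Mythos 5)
The paper itself supplies no proof of this theorem: it is imported verbatim from \cite[Thm.~5.4]{owhadi2023ideas}, so your argument can only be compared with the standard power-function proof, which is indeed the route you take. Most of it is sound: the projection/Cauchy--Schwarz computation for the interpolant, the bound $\|G^\perp(u,u)\|_{\mathrm{op}}\le \Tr[G^\perp(u,u)]$, and the augmented-kernel reduction with $G_\gamma^{\perp,\mathrm{int}}(u,u)=G^\perp(u,u)+\gamma I_\V$ correctly deliver \eqref{eqllbhjdebddjdbd}; the continuity patch at the data points is legitimate, since the $\gamma\dim(\V)$ term forces $\dim\V<\infty$ anyway, and both sides of the inequality are then continuous in $u$.

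There is, however, a genuine gap in your treatment of \eqref{eqllbhjdebjdbd}: you prove it only for $\gamma=0$. In the statement, $\Gc$ is the ridge regressor \eqref{eqajkjwdhjbdjehr} with arbitrary $\gamma\ge 0$, and $\sigma^2(u)=\Tr[G^\perp(u,u)]$ is built from the conditional covariance \eqref{eqkjwbldkdbkld} with the \emph{same} $\gamma$; the first bound is asserted for that estimator. Your interpolation argument (residual in $\U_N^\perp$, Pythagoras, exact power-function identity) does not apply to it, and your augmented-kernel step does not recover it either, since that step only produces the weaker norm $\|\Gc^\dagger\|_{\Hc_\gamma}$ together with the extra $\gamma\dim(\V)$. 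The repair stays inside your framework but must be said: set $d:=\big(G(\ub,\ub)+\gamma I\big)^{-1}G(\ub,u)v$ and use $v_j=\Gc^\dagger(u_j)$ and the Hermitian structure to get $\langle \Gc^\dagger(u)-\Gc(u),v\rangle_\V=\big\langle \Gc^\dagger,\;G(\cdot,u)v-\textstyle\sum_j G(\cdot,u_j)d_j\big\rangle_\Hc$; then Cauchy--Schwarz with $\|\Gc^\dagger\|_\Hc$ (no Pythagoras needed) and the direct expansion $\big\|G(\cdot,u)v-\sum_j G(\cdot,u_j)d_j\big\|_\Hc^2=v^TG^\perp(u,u)v-\gamma\sum_j\|d_j\|_\V^2\le v^TG^\perp(u,u)v$ give \eqref{eqllbhjdebjdbd} for every $\gamma\ge 0$. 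In other words, for $\gamma>0$ the power-function identity you invoke becomes an inequality, which is all that is required; as written, though, your proof leaves the $\gamma>0$ case of the first bound unestablished.
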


\section{An alternative regularization of operator regression}\label{sec:alternative-regularization}
For $\gamma>0$, the regularization implied by \eqref{eqiheuddh1} is equivalent to adding noise on the
$\varphi(\vb)$ measurements. If one could observe $\vb$ (and not just $\varphi(\vb)$), then an alternative approach to regularizing the problem is to add noise to $\xi(\ub)$. To describe this
let $Z'=(Z_1',\ldots,Z_N')$ be a random block-vector, independent from $\xi$, with i.i.d. 
entries $Z'_j \sim \cN(0,\gamma I_\V)$ for $j = 1, \dots, N$ (where $I_\V$ denotes the identity map on $\V$).
Then the GP $\xi$ conditioned on $\xi(\ub)=\vb+Z'$ is a GP with conditional covariance kernel \eqref{eqkjwbldkdbkld} and conditional mean $\tilde{\Gc}_\gamma$=\eqref{eqajkjwdhjbdjehr} that is also the minimizer of \eqref{eqledhehdiudh}.
 Observing\footnote{This follows from $\varphi(Z_i')\sim \cN(0, \gamma \varphi \varphi^T )$ where $\varphi^T$ is the adjoint of $\varphi$ identified as the linear map from $\R^{m}$ to $\V$ satisfying $\< W,\varphi(w)\>_{\R^{m}}=\< \varphi^\perp W,w\>_{\V}$ for $w\in \V$ and $W\in \R^{m}$ (i.e., $\varphi^T(W)=(K\varphi) W $).} that $\varphi(Z_i')\sim \cN(0, \gamma K(\varphi,\varphi))$ , we deduce that
 $\tilde{\Gc}_\gamma=\chi\circ \tilde{f}_\gamma\circ \phi$ where $\tilde{f}_\gamma$  minimizes
\begin{equation}\label{eqiheuddh2guyg}
\begin{cases}
\text{Minimize }& \|f\|_\Gamma^2+\gamma^{-1} \sum_{i=1}^N (f(U_i)-V_i)^T K(\varphi,\varphi )^{-1} (f(U_i)-V_i)\,.\\
\text{Over } &f \in \Hc_\Gamma\,.
\end{cases}
\end{equation}
Furthermore, the distribution of
$\xi$ conditioned on $\xi(\ub)=\vb+Z'$ is that of $\chi \circ \tilde{\zeta}^\perp \circ \phi$ where $\tilde{\zeta}^\perp\sim \cN(\tilde{f}_\gamma, \tilde{\Gamma}^\perp)$ is the GP $\zeta$ conditioned on $\zeta(\Ub)=\Vb+\varphi(Z')$, whose mean is $\tilde{f}_\gamma$ and conditional covariance kernel is
$\tilde{\Gamma}^\perp(U,U')=\Gamma(U,U')-\Gamma(U,\Ub) (\Gamma(\Ub,\Ub)+\gamma A)^{-1} \Gamma(\Ub,U')$ where $A$ is a $N\times N$ block diagonal matrix with $K(\varphi,\varphi )$ as diagonal entries.





\section{Expressions for the kernels used in experiments}\label{app: kernels}

Below we collect the expressions for the kernels that were referred to 
in the article or utilized for our numerical experiments. 
These can be found in many standard textbooks on GPs such as \cite{rasmussen}.

\subsection{The linear kernel} The linear kernel has 
the simple expression $K_{\text{linear}}(x, x') =  \< x, x'\> $ and 
may be defined on any inner product space. It has no hyper-parameters.

\subsection{The rational quadratic kernel} The rational quadratic kernel has the expression
$K(x, x') = k_{\text{RQ}}( \| x - x' \|)$ where 
\begin{equation}
    k_{\text{RQ}}(r)  = \bigg( 1+ \frac{r^2}{2 l^2} \bigg)^{-\alpha}.
\end{equation}
It has hyper-parameters $\alpha >0$ and $l$. 

\subsection{The Mat\'{e}rn parametric family}
The Mat{\'e}rn kernel family is of the form $K(x, x') = k(\| x - x'\|$  
where
\begin{equation}
    k_{\nu}(r) = \exp\bigg(-\frac{\sqrt{2\nu} r}{l}\bigg)\frac{\Gamma(p+1)}{\Gamma(2p+1)}\sum_{i=0}^p \frac{(p+1)!}{i!(p-i)!}\bigg( \frac{\sqrt{8\nu}r}{l}\bigg)^{p-i},
\end{equation}
for $\nu = p + \frac{1}{2}$. This kernel has hyper-parameters 
$p \in \mathbb{Z}_{+}$ and $l >0$.
In the limiting case where $\nu \to \infty$, the Mat\'{e}rn kernel, we obtain the Gaussian or squared exponential kernel:
\begin{equation}
    k_{\infty}(r) = \exp\bigg(-\frac{r^2}{2l^2}\bigg),
\end{equation}
with hyper-parameter $l >0$.

\end{document}